\def\eqref#1{equation~\ref{#1}}
\def\1{\bm{1}}
\DeclareMathAlphabet{\mathsfit}{\encodingdefault}{\sfdefault}{m}{sl}
\SetMathAlphabet{\mathsfit}{bold}{\encodingdefault}{\sfdefault}{bx}{n}
\newcommand{\E}{\mathbb{E}}
\newcommand{\R}{\mathbb{R}}
\newcommand{\OT}{\mathrm{OT}}
\newcommand{\STP}{\mathrm{STP}}
\newcommand{\mSTP}{\mathrm{min}\text{-}\mathrm{STP}}
\newcommand{\SOT}{\mathrm{SOT}}
\renewcommand{\R}{\mathbb{R}}
\renewcommand{\S}{\mathbb{S}}
\newcommand{\osc}{\mathcal{O}}
\newcommand{\scriptd}{\mathscr{D}}
\newtheorem{theorem}{Theorem}[section]
\newtheorem{proposition}[theorem]{Proposition}
\newtheorem{definition}[theorem]{Definition}
\newtheorem{corollary}[theorem]{Corollary}
\newtheorem{assumption}[theorem]{Assumption}
\newtheorem{lemma}[theorem]{Lemma}
\title{Efficient Transferable Optimal Transport via Min-Sliced Transport Plans}
\author{
Xinran Liu$^{1}$ \quad
Elaheh Akbari$^{1}$ \quad
Rocio Diaz Martin$^{2}$ \AND
Navid NaderiAlizadeh$^{3}$ \quad
Soheil Kolouri$^{1,4}$ \\
\\
$^{1}$Department of Computer Science, Vanderbilt University \\
$^{2}$Department of Mathematics, Florida State University \\
$^{3}$Department of Biostatistics \& Bioinformatics, Duke University \\
$^{4}$Department of Electrical \& Computer Engineering, Vanderbilt University \\
{\tt\small xinran.liu@vanderbilt.edu, elaheh.akbari@vanderbilt.edu, rd25v@fsu.edu,} \\
{\tt\small navid.naderi@duke.edu, soheil.kolouri@vanderbilt.edu}
}
\begin{document}

\maketitle

\begin{abstract}
Optimal Transport (OT) plans provide correspondences between distributions supporting alignment tasks in various domains. Sliced transport plans have been recently proposed as a computationally efficient alternative to OT plans. These methods optimize a one-dimensional projection (slice) to obtain a conditional transport plan that minimizes the transport cost in the ambient space. Despite their efficiency, it remains unclear whether learned slicers transfer to new distribution pairs under shift, an issue central to evolving data and repeated OT computations over related distributions. We study the min-Sliced Transport Plan (min-STP) framework and examine slicer transferability: can a slicer learned on one distribution pair produce effective transport plans for unseen pairs? Theoretically, we show that optimized slicers remain close under slight perturbations of the data distributions, enabling efficient transfer across related tasks. To further improve scalability, we introduce a minibatch formulation of min-STP and provide statistical rates on its accuracy. Empirically, we demonstrate that the transferable min-STP achieves strong one-shot matching performance and facilitates amortized training for point cloud and image analysis. 
\end{abstract}

\vspace{-0.15in}
\section{Introduction}
\label{sec:intro}
\vspace{-0.1in}

Many tasks in computer vision, natural language processing, biology, and operations research involve establishing correspondences between samples drawn from two distributions; a problem elegantly framed through the lens of optimal transport (OT) \cite{Villani2009Optimal,peyre2019computational}. OT provides a principled framework for determining the most efficient coupling between distributions while minimizing a transport cost, consistent with the principle of least effort (or least action) in transferring mass. Over the past decade, OT has been applied across a broad range of domains, including domain adaptation \cite{courty2016optimal,damodaran2018deepjdot}, generative modeling \cite{arjovsky2017wasserstein,gulrajani2017improved,kornilov2024optimal}, shape and image matching \cite{saleh2022bending,shen2021accurate,bai2022sliced}, word-embedding alignment \cite{yurochkin2019hierarchical,huynh2020otlda}, single-cell analysis \cite{schiebinger2019optimal,yang2020predicting}, and resource allocation \cite{guo2022online}, among many others \cite{khamis2024scalable}.

Despite its success, OT remains computationally expensive, with a complexity that scales cubically in the number of samples \cite{peyre2019computational}. To address this bottleneck, a large body of research has sought scalable approximations, including entropic regularization \cite{cuturi2013sinkhorn,altschuler2017near}, low-rank and linearized formulations \cite{scetbonlow,wang2013linear}, and hierarchical or multiscale methods \cite{schmitzer2016sparse,gerber2017multiscale,halmos2025hierarchical}. Slicing-based techniques \cite{bonnotte2013unidimensional,bonneel2015sliced,kolouri2022generalized} form another efficient class of methods, reducing high-dimensional problems to a set of one-dimensional projections. While effective for computing OT distances, these methods do not recover explicit transportation plans. Recent extensions \cite{rowland2019orthogonal,mahey2023fast,liu2025expected,chapel2025differentiable} have begun to bridge this gap by constructing one-dimensional transportation plans along slices and lifting them to the ambient space to obtain transportation plans between input distributions.

However, all existing efficient OT methods are designed for single-instance problems. In many real-world applications, one must repeatedly solve OT between evolving or structurally similar distributions, where valuable information from prior solutions remains unused. Even when two pairs of distributions are nearly identical, current solvers recompute everything from scratch. This inefficiency is particularly limiting in domains such as developmental biology, where cell populations evolve gradually and successive distributions differ only slightly.

In this work, we move beyond isolated optimization and propose a formal framework for quantifying and leveraging the \emph{transferability} of OT solutions. Our approach enables reusing information from previously solved instances, paving the way for adaptive, amortized OT across related problems. Building on differentiable generalized sliced plans~\cite{chapel2025differentiable}, we revisit the min-generalized slice formulation and present a unified theoretical and algorithmic framework with numerically stable implementations. The \emph{closed-form quantile function of the sum of Laplace distributions (LapSum)} \cite{struski2025lapsum} forms the backbone of both the theoretical and computational aspects of our method, allowing for efficient and differentiable evaluation of one-dimensional transport plans. Our central contribution is a \emph{transferability theorem} that establishes a formal link between distributional similarity and slicer proximity: when two source-target distribution pairs are similar, their optimal slicers are correspondingly close. To the best of our knowledge, this result provides the first theoretical foundation for amortized slicer reuse. Finally, we demonstrate that the slicer can be trained stochastically on \emph{mini-batches} ($n \ll N$), requiring only slice-and-sort operations on small subsets and enabling scalable optimization via stochastic gradient descent for large-scale datasets.

\noindent{\bf Contributions.} We advance the theoretical foundations and practical scalability of large-scale OT problems through the following contributions:

\begin{enumerate}[itemsep=0.2em, topsep=0pt, leftmargin=12pt]
    \item Characterizing the transferability of optimized slicers under small source-target perturbations;
    \item Deriving finite-sample statistical rates for slicers optimized using source-target mini-batches;
    \item Demonstrating transferability on point cloud alignment, mean-flow generations and image translation.
\end{enumerate}

\begin{figure*}[t!]
    \centering
     \vspace{-0.25in}
    \includegraphics[width=\linewidth]{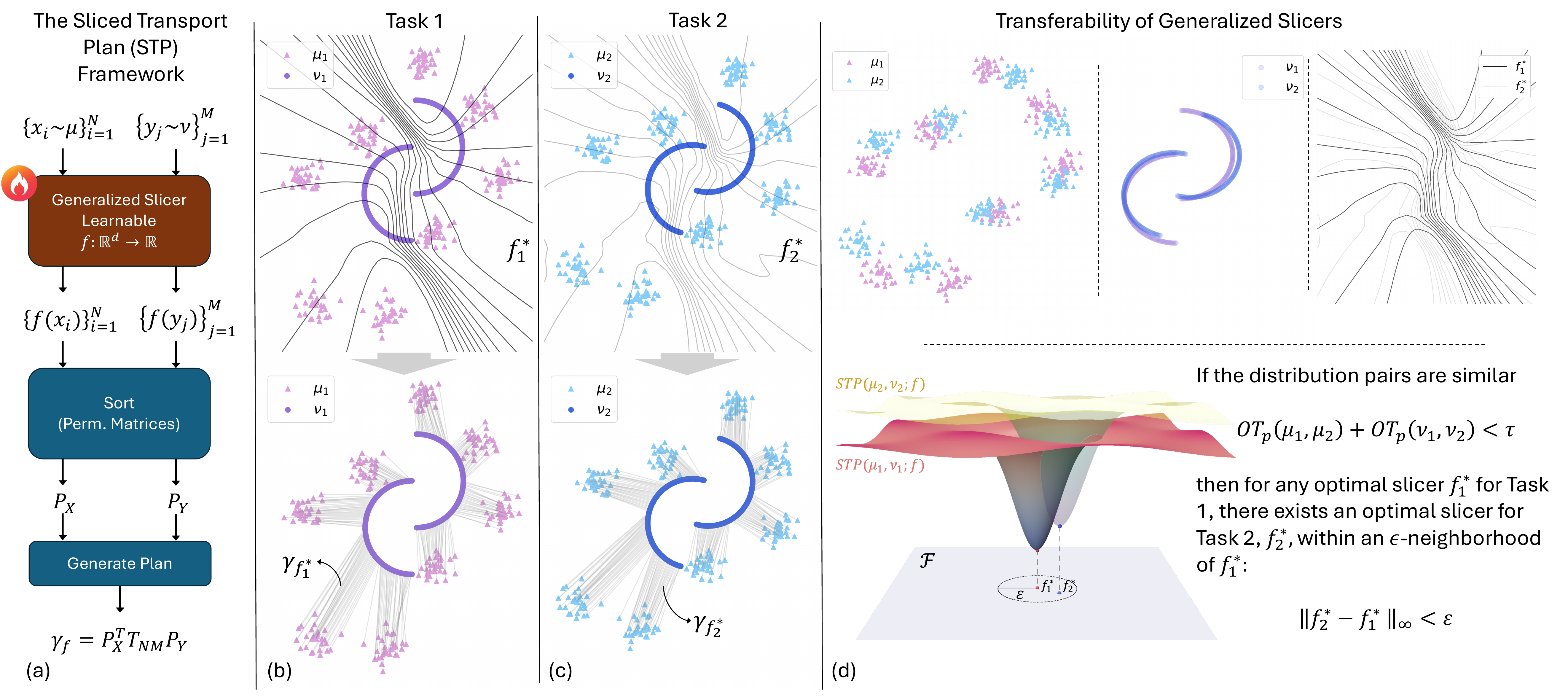}
    \caption{Overview of the Sliced Transport Plan (STP) framework and our transferability results.
    (a–c) The STP framework computes transport plans $\gamma_f$ using a generalized slicer $f : \mathbb{R}^d \to \mathbb{R}$, which projects high-dimensional samples onto one-dimensional marginals, sorts them via (soft) permutation matrices, and generates slice-wise transport plans that are lifted back to the ambient space to obtain a transportation plan between the input measures. 
    The min-STP framework extends STP by learning an optimal slicer $f^{*}$ that minimizes the transportation cost in the ambient space.
    (d) In this paper, we establish a \emph{transferability theorem} showing that if two source-target distribution pairs $(\mu_1, \nu_1)$ and $(\mu_2, \nu_2)$ are close, then there exists an optimal slicer $f_2^{*}$ for $(
    \mu_2,\nu_2)$ in an $\varepsilon$-vicinity of $f_1^{*}$. 
    This result enables efficient slicer reuse across related tasks, achieving amortized optimal transport.}
    \label{fig:teaser}
    \vspace{-0.2in}
\end{figure*}

\vspace{-0.15in}
\section{Preliminaries}
\vspace{-0.1in}
\begin{definition}\textbf{Optimal Transport (OT).}
Let $\mathcal{X}\subseteq\mathbb{R}^d$ endowed with a metric $c:\mathcal{X}\times \mathcal{X}\to \mathbb{R}$. We use $\mathcal{P}(\mathcal{X})$ to denote the set of Borel probability measures defined on $\mathcal{X}$, and  $\mathcal{P}_p(\mathcal{X})$ to denote the subset of probability measures with finite $p$\textsuperscript{th} moment ($p\in [1,\infty)$), i.e.,  $\mathcal{P}_p(\mathcal{X})\coloneqq\{\mu\in\mathcal{P}(\mathcal{X})\mid \int_{\mathcal{X}} c^p(x,x_0)\mathrm{d}\mu(x) < \infty, \text{ for some } x_0 \in \mathcal{X}\}$. For $\mu,\nu\in \mathcal{P}_p(\mathcal{X})$, the optimal transport distance (aka the p-Wasserstein distance) is defined as: 
\vspace{-.1in}
\begin{equation}
\OT_p(\mu,\nu)\coloneqq\left(\inf_{\gamma\in\Gamma(\mu,\nu)} \int_{\mathcal{X}\times\mathcal{X}} c^p(x,y) \, \mathrm{d}\gamma(x,y)\right)^{\frac{1}{p}},
    \label{eq:ot}
\end{equation}
where $\Gamma(\mu,\nu)\coloneqq\{\gamma\in \mathcal{P}_p(\mathcal{X}\times \mathcal{X})\mid \pi^1_\#\gamma =\mu,~\pi^2_\#\gamma=\nu\}$, $\pi^i$'s denote the canonical projections, and $f_\#\mu$ denotes the pushforward of $\mu$ through $f$, which is formally defined as $f_\#\mu(A)=\mu(f^{-1}(A))$ for any Borel subset $A\subseteq \mathcal{X}$.
\end{definition}

For $p\in[1,\infty)$, $\OT_p$ defines a proper metric between $\mu$ and $\nu$, with many favorable geometric characteristics. 

A prominent computational paradigm in optimal transport is the sliced-OT framework \citep{rabin2011wasserstein}, which leverages the efficiency of OT \citep{peyre2019computational,bai2022sliced} solvers on one-dimensional measures by computing the OT distances between the one-dimensional marginals of $d$-dimensional measures. Owing to the injectivity of the Radon transform and the Cramér-Wold theorem (probabilistic route) or the Fourier Slice theorem (analytic viewpoint), comparing these marginals defines a valid metric between the original $d$-dimensional measures \citep{nadjahi2020statistical,kolouri2022generalized}. 

 \begin{definition} \textbf{Sliced OT.}
      Let $f:\R^d\times \mathbb{S}^{d-1}\to \R$ defined by the scalar projection $f(x,\theta)=\langle x,\theta\rangle$, where $\langle x,\theta\rangle$ denotes the standard inner product in $\R^d$. The sliced OT (SOT) problem between two probability measures $\mu,\nu\in 
      \mathcal{P}_p(\mathcal{X})$ is defined as: 
 \begin{equation*}
    \SOT_p(\mu,\nu)\coloneqq\left( \int_{\mathbb{S}^{d-1}} \OT_p^p( f(\cdot,\theta)_\#\mu,f(\cdot,\theta)_\#\nu)\mathrm{d}\sigma(\theta)\right)^{\frac{1}{p}}\hspace{-5pt}, 
    \label{eq:sot}
\end{equation*}
where $\sigma\in\mathcal{P}(\S^{d-1})$ is a probability measure with non-zero density everywhere on the $d$-dimensional unit sphere $\S^{d-1}$. 
\end{definition}

The notion of slicing was later extended to \emph{generalized slices} by allowing $f(\cdot, \theta): \mathbb{R}^d \times \mathbb{R}^{d'} \to \mathbb{R}$ to be a nonlinear parametric function. 
With a slight abuse of notation, we drop $\theta$ and write $f$ when unambiguous. Under suitable assumptions on $f$ \citep{kolouri2019generalized,chen2020augmented}, it has been shown that SOT defines a proper metric between measures $\mu, \nu \in \mathcal{P}_p(\mathbb{R}^d)$. Notably, extensive work has also been dedicated to the choice of $\sigma$, which is orthogonal to our discussion and is not discussed in this paper \citep{nguyen2021distributional,nguyen2023energybased,nguyen2024sliced}. 
\vspace{-0.1in}
\subsection{Sliced Transportation Plans}
\vspace{-0.1in}
The central idea behind methods that estimate transportation plans via slicing is that the one-dimensional transportation plan between the pushforward measures \( f_{\#}\mu \) and \( f_{\#}\nu \), can be lifted back to define a coupling in the original space, denoted by \( \gamma_f \). Following \citep{liu2025expected}, in the case of discrete measures and injective projections \( f\), this lifting is straightforward: the unique one-dimensional optimal plan calculated in the sliced  space, \( \sigma_f \in \Gamma(f_{\#}\mu ,f_{\#}\nu) \),   corresponds directly to a valid transportation plan for the original measures, $\gamma_f\in \Gamma(\mu,\nu)$. To the best of our knowledge, this idea of lifting via injective projections was first introduced by \cite{rowland2019orthogonal} as part of the Projected Wasserstein (PW) distance framework between empirical distributions and was later adopted in the influential work of \cite{mahey2023fast}. When the projections \( f_{\#}\mu \) and \( f_{\#}\nu \) are not injective, however, the lifting becomes non-trivial and requires more careful handling, as discussed in \cite{liu2025expected}. 


\begin{definition}
\label{def:stp}
\textbf{Sliced transportation plan (STP).} Let \( \gamma_f \) denote the transportation plan between discrete probability measures \( \mu, \nu \in \mathcal{P}_p(\mathbb{R}^d) \), obtained by lifting the optimal 1D plan from slices via an injective map \( f \) defined on the supports of \( \mu \) and \( \nu \). We define the sliced transportation plan dissimilarity between $\mu$ and $\nu$ as:
\begin{equation*}
    \STP_p(\mu, \nu;f) \coloneqq \left(\sum_{x \in \mathrm{supp}(\mu)} \sum_{y \in \mathrm{supp}(\nu)} c^p(x,y) \, \gamma_f(x, y)\right)^{\frac{1}{p}}.\hspace{-5pt}
\end{equation*}
\end{definition}

\noindent For general measures $\mu, \nu\in \mathcal{P}_p(\mathcal{X})$, we formally write 
 \begin{equation*}
    \STP_p(\mu, \nu;f) \coloneqq \left(\int_{\mathcal{X}\times\mathcal{X}} c^p(x,y) \, \mathrm d \gamma_f(x, y)\right)^{\frac{1}{p}}
\end{equation*}
whenever $\gamma_f$ is a well-defined lift coupling; that is, if $\sigma_f$ denotes the unique optimal transport plan between $f_\#\mu$ and $f_\#\nu$, then $\gamma_f$ is the uniquely determined element of $\Gamma(\mu,\nu)$ satisfying the property $(f,f)_\#\gamma_f=\sigma_f$. Such uniqueness is guaranteed, for instance, when $f$ is injective. 

 When $f(\cdot)=\langle \cdot,\theta\rangle$, for a certain $\theta\in \mathbb S^{d-1}$, and the cost function is $c(x,y)=\|x-y\|_p$, $\text{STP}_{p}$ coincides with the Sliced Wasserstein Generalized Geodesics (SWGG) distance \citep{mahey2023fast}, recently generalized for general measures \cite{tanguy2025sliced}.

\begin{proposition}(Proposition~2 in \cite{chapel2025differentiable})
Let $\mu,\nu \in \mathcal{P}_p(\mathcal{X})$. Let $f:\mathcal{X}\to\mathbb{R}$ be an injective map on the supports of $\mu$, and $\nu$. Then, for $p\geq 1$, $\STP_p(\cdot,\cdot~;f)$ is a distance on $\mathcal{P}_p(\mathcal{X})$.
\end{proposition}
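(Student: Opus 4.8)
The plan is to verify the four metric axioms; only the triangle inequality is substantive. Write $\gamma_f^{\mu\nu}$ for the lifted coupling associated with the pair $(\mu,\nu)$, characterized by $(f,f)_\#\gamma_f^{\mu\nu}=\sigma_f^{\mu\nu}$, where $\sigma_f^{\mu\nu}$ is the comonotone (hence optimal) one-dimensional plan between $f_\#\mu$ and $f_\#\nu$. Since $f$ is injective on the union of the relevant supports, this lift is unique and equals $(g,g)_\#\sigma_f^{\mu\nu}$, where $g$ is the inverse of $f$ restricted to that union.

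\emph{Non-negativity} is immediate since $c^p\ge 0$. \emph{Symmetry} holds because exchanging $\mu$ and $\nu$ transposes both $\sigma_f^{\mu\nu}$ and its lift, while $c^p(x,y)=c^p(y,x)$. For the \emph{identity of indiscernibles}, if $\mu=\nu$ then $f_\#\mu=f_\#\nu$, so $\sigma_f^{\mu\mu}$ is the diagonal coupling and $\gamma_f^{\mu\mu}=(\mathrm{id},\mathrm{id})_\#\mu$ is supported on the diagonal, making the cost integral vanish; conversely, $\STP_p(\mu,\nu;f)=0$ forces $\int c^p\,\mathrm d\gamma_f^{\mu\nu}=0$, and since $c$ is a metric $\gamma_f^{\mu\nu}$ must be concentrated on the diagonal, whence its marginals $\mu$ and $\nu$ coincide.

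The heart of the proof is the \emph{triangle inequality}. Fix a third measure $\rho$ (with $f$ injective also on $\mathrm{supp}\,\rho$) and let $U\sim\mathrm{Unif}[0,1]$. Using the quantile functions of the one-dimensional push-forwards, build the ``common-randomness'' gluing $S:=\mathrm{Law}\big(Q_{f_\#\mu}(U),\,Q_{f_\#\nu}(U),\,Q_{f_\#\rho}(U)\big)$ on $\mathbb{R}^3$: each pairwise marginal of $S$ is the comonotone coupling of the corresponding pair, i.e.\ $\sigma_f^{\mu\nu}$, $\sigma_f^{\nu\rho}$, and $\sigma_f^{\mu\rho}$. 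Pushing $S$ forward through $(g,g,g)$ produces $\Pi$ on $\mathcal{X}^3$ whose three pairwise marginals are $\gamma_f^{\mu\nu}$, $\gamma_f^{\nu\rho}$, and $\gamma_f^{\mu\rho}$ (lifting commutes with marginalization because $f$ is injective on the union of supports). By the triangle inequality for $c$, monotonicity of $t\mapsto t^p$, and Minkowski's inequality in $L^p(\Pi)$,
\begin{align*}
\STP_p(\mu,\rho;f)
&=\Big(\!\int c^p(x,z)\,\mathrm d\Pi(x,y,z)\Big)^{1/p}
\le \Big(\!\int \big(c(x,y)+c(y,z)\big)^p\,\mathrm d\Pi\Big)^{1/p}\\
&\le \Big(\!\int c^p(x,y)\,\mathrm d\Pi\Big)^{1/p}+\Big(\!\int c^p(y,z)\,\mathrm d\Pi\Big)^{1/p}
=\STP_p(\mu,\nu;f)+\STP_p(\nu,\rho;f).
\end{align*}

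I expect the main obstacle to be making the gluing step rigorous: confirming that the $(1,3)$-marginal of $S$ is exactly the plan $\sigma_f^{\mu\rho}$ used in $\STP_p(\mu,\rho;f)$ — which rests on one-dimensional optimal plans being comonotone, the place where $p\ge1$ enters alongside Minkowski — and that pushing $S$ through $(g,g,g)$ recovers each $\gamma_f^{\alpha\beta}$ as a marginal, which requires injectivity of $f$ on all of $\mathrm{supp}\,\mu\cup\mathrm{supp}\,\nu\cup\mathrm{supp}\,\rho$ so that the lifts are uniquely determined and mutually compatible. Both the discrete setting of Definition~\ref{def:stp} and the general-measure formulation come down to this; the remaining checks (measurability of the quantile functions, that $Q(U)$ has the prescribed law, finiteness of the integrals for $\mu,\nu,\rho\in\mathcal{P}_p$) are routine.
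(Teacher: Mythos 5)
The paper does not supply its own proof of this statement---it is cited directly from \cite[Proposition~2]{chapel2025differentiable}---so there is no in-source argument to compare against. Your proof is nonetheless correct and is the standard gluing argument one would expect for a Wasserstein-type triangle inequality: the non-negativity, symmetry, and identity-of-indiscernibles checks are all sound, and the common-randomness construction $S=\mathrm{Law}\big(Q_{f_\#\mu}(U),Q_{f_\#\nu}(U),Q_{f_\#\rho}(U)\big)$ followed by Minkowski in $L^p(\Pi)$ is exactly the right tool, since every pairwise marginal of $S$ is the comonotone (hence $p$-optimal) one-dimensional plan and lifting commutes with marginalization.

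One small refinement to the caveat you raise: you do not in fact need $f$ to be injective on the \emph{union} $\mathrm{supp}\,\mu\cup\mathrm{supp}\,\nu\cup\mathrm{supp}\,\rho$. It suffices that $f$ be injective on each support separately. Writing $g_\mu=(f|_{\mathrm{supp}\,\mu})^{-1}$, $g_\nu$, $g_\rho$ for the three (Borel, by Lusin--Souslin) inverses on the respective images, set $\Pi=(g_\mu,g_\nu,g_\rho)_\#S$. Then the $(1,2)$-marginal of $\Pi$ is $(g_\mu,g_\nu)_\#\sigma_f^{\mu\nu}$, which pushes forward under $(f,f)$ to $\sigma_f^{\mu\nu}$ and has marginals $\mu,\nu$, hence equals the uniquely determined lift $\gamma_f^{\mu\nu}$; likewise for the other two faces. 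So the hypothesis as stated in the proposition (injectivity on the supports of the two measures involved, applied to each of the three pairs) already carries the gluing, without strengthening to injectivity on the union. The other points you flag---measurability of the quantile functions, that $Q_\alpha(U)\sim\alpha$, the role of $p\ge1$ in both comonotone optimality and Minkowski, and finiteness of the integrals on $\mathcal P_p$---are all routine, exactly as you say.
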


Since $\gamma_f$ is a valid coupling of $\mu$ and $\nu$, though not necessarily optimal, the cost induced by $\gamma_f$ upper bounds the infimum in \eqref{eq:ot}, i.e.,  
\begin{equation*}
    \OT_p(\mu,\nu)\leq \STP_p(\mu,\nu;f).
\end{equation*}
While $\STP_p$ provides an upper bound for $\OT_p$, the gap between the two could be large due to the potential discrepancy between $\gamma_f$ and an optimal transport plan $\gamma^*$ for \eqref{eq:ot}. This motivates the introduction of the $\text{min-}\STP_p$, which coincides with the min-SWGG \citep{mahey2023fast} and Differentiable Generalized Sliced Wasserstein Plan (DGSWP) defined by \cite{chapel2025differentiable} in specific cases.

\begin{definition}
$\boldsymbol{\mSTP_p}$. Let $\mu,\nu\in \mathcal{P}_p(\mathcal{X})$. The minimum Sliced Transportation Plan ($\mSTP_p$) between $\mu$ and $\nu$  is defined as: 
\begin{equation}
    \mathrm{min-}\STP_p(\mu,\nu)=\min_{f\in\mathcal{F}} ~\STP_p(\mu,\nu;f),
    \label{eq:minSTP}
\end{equation}
where $\mathcal{F}$ is a class of parametric functions. 
\end{definition}
Clearly, $\mSTP_p$ satisfies:
\begin{equation*}
    \OT_p(\mu,\nu)\leq  \mSTP_p(\mu,\nu) \leq \STP_p(\mu,\nu;f).
\end{equation*}
Moreover, if $f:\mathcal{X}\to\mathbb{R}$ is injective on $\mathcal{X}\subset\mathbb{R}^d$, then $\mSTP_p$ provides a pseudo-metric on $\mathcal{P}_p(\mathcal{X})$ ($p\geq 1$). 

\vspace{-0.1in}
\section{Method and Theoretical Results}
\label{sec:method}
\vspace{-0.1in}

\subsection{Transferability}
\vspace{-0.1in}
We first discuss the transferability of optimal slicers $f^\star:\mathcal{X}\rightarrow\mathbb{R}$ in problem \ref{eq:minSTP}. 
We show that when the source and target distributions experience small variations, the corresponding optimal slicer varies only marginally. This stability property ensures that \(f^\star\) can be reliably transferred as a prior for the varied distributions.

\begin{definition}[Perturbed slicers]\label{def:lin-laplace}
Let $\mathsf{Lap}_d(\eta^2\Sigma)$ denote the symmetric multivariate Laplace distribution on $\mathbb R^d$ with mean 0 and covariance matrix $\eta^2\Sigma$, for a scale parameter $\eta>0$ and a symmetric positive definite matrix $\Sigma\in\mathbb{R}^{d\times d}$.
Given a measurable function $f:\mathcal{X}\to \R$, we define its perturbation by
\[
g_{\xi_\eta, f}(x)\ \coloneqq\ f(x)+\langle \xi_\eta, x\rangle, \text{ where } \xi_\eta\sim \mathsf{Lap}_d(\eta^2\Sigma).
\]
Unless more than a single function $f$ is involved, we will omit the subscript $f$ and write $g_{\xi_\eta}$.
\end{definition}

Throughout this paper, $\Sigma$ stays fixed and we focus on varying scales $\eta$.  This choice is motivated by LapSum \cite{struski2025lapsum}, which applies Laplace perturbations for order-type smoothing, and it integrates naturally with LapSum’s differentiable sorting mechanisms used in our implementations (see Section \ref{subsec: lapsum}). Specifically, the perturbation ensures that, for any $\mu\in \mathcal{P}(\mathcal{X})$,  with $\mu$-probability one, the projection $g_{\xi_\eta}$ is injective on the support of $\mu$ (see Lemma \ref{lem:ae-inj} in the Appendix). This thereby enables the lift of a 1D coupling to the original space: Given $\mu,\nu\in \mathcal{P}(\mathcal{X})$, for each $\sigma\in \Gamma((g_{\xi_\eta})_\#\mu, (g_{\xi_\eta})_\#\nu)\subset \mathcal{P}(\R\times \mathbb{R})$, there exists one and only one (lifted) coupling $\gamma\in \Gamma(\mu,\nu)\subset \mathcal{P}(\mathcal{X}\times \mathcal{X})$ such that $(g_{\xi_\eta},g_{\xi_\eta})_\#\gamma=\sigma$ (see Proposition \ref{prop:unique-lift} in the Appendix).

Generalizing the  classical projectors $f(x,\theta)=\langle x,\theta\rangle$ while preserving some of their fundamental properties, our assumptions are as follows:

\begin{assumption}\label{asmp:model} Let $(\mathcal X,c)$ be a compact metric space in $\mathbb R^d$, and let $\mathcal F$ consist of all Lipschitz continuous functions $f:\mathcal X\!\to\!\mathbb R$ with global Lipschitz constant  $L>0$, and uniformly bounded by $M>0$, i.e., $\mathcal{F} \coloneqq \{f\in C(\mathcal{X})\mid \, Lip(f)\le L, \, \|f\|_\infty\leq M\}$, where $Lip(f)$ denotes the Lipschitz constant of $f$. Specifically, we work with discrete measures and further require $\mathcal X$ to be discrete with at most countable samples.
\end{assumption}

Under these assumptions, the slicer family $\mathcal F$ is compact in the uniform norm (as a consequence of the Arzelà–Ascoli theorem; see Lemma \ref{rem: precompact} in the Appendix). Thus, minimizers of regular functionals are attained over $\mathcal F$. In particular, we next introduce an objective obtained by applying Laplace perturbations to $\STP$, which has better regularity properties and recovers $\STP$ in the limit as the perturbations vanish.

\begin{definition}[Expected lifted plan]\label{def:Jeta-lift}
Let $\mu,\nu\in \mathcal{P}(\mathcal{X})$, and $f\in \mathcal{F}$ with perturbation $g_{\xi_\eta}$ for some $\eta>0$. 
Let $\sigma_{\xi_\eta}$ denote the 1D optimal plan between $(g_{\xi_\eta})_\#\mu$ and $(g_{\xi_\eta})_\#\nu$, with corresponding (unique) lifted coupling
\(
\gamma_{\xi_\eta}\in\ \Gamma(\mu,\nu),\)
such that \(
(g_{\xi_\eta},g_{\xi_\eta})_{\#}\gamma_{\xi_\eta}=\sigma_{\xi_\eta}.
\)
We define the expected lifted plan and the smoothed objective by
\begin{gather}
\bar\gamma_f^{(\eta)}\ \coloneqq\ \mathbb E_{\xi_\eta}\big[\,\gamma_{\xi_\eta}\,\big],\nonumber
\\
J_\eta(\mu,\nu;f)\ \coloneqq\ \int_{\mathcal X^2} c(x,y)^p \, \mathrm{d}\bar\gamma_f^{(\eta)}(x,y).
\end{gather}
\end{definition}

\begin{theorem}\label{thm}[Properties]\,
    \begin{enumerate}
        \item Let $\mu,\nu\in \mathcal{P}(\mathcal{X})$, $f\in \mathcal{F}$. Then, $\lim_{\eta\to 0} J_\eta(\mu,\nu;f)\ =\ \STP_p^p(\mu,\nu;f).$
        \item Let $\mu_1,\nu_1,\mu_2,\nu_2\in \mathcal{P}(\mathcal{X})$. Given $\varepsilon>0$, there exists $\tau>0$ such that if $\OT_p(\mu_1, \mu_2)+\OT_p(\nu_1, \nu_2)<\tau$, then for every $ f_1^\star\in\arg\min_{f\in\mathcal F} J_\eta(\mu_1,\nu_1;f)$, there exists $ f_2^\star\in\arg\min_{f\in\mathcal F} J_\eta(\mu_2,\nu_2;f)$ in the $\varepsilon$ neighborhood of $f_1^\star$, i.e., $\|f_2^\star-f_1^\star\|_\infty<\varepsilon$.
    \end{enumerate}
\end{theorem}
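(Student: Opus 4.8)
\medskip

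For part (1), the plan is to fix $f\in\mathcal F$ and $\mu,\nu$ and analyze the behavior of the lifted couplings $\gamma_{\xi_\eta}$ as the perturbation scale $\eta\to 0$. The key observation is that $\xi_\eta\to 0$ in probability (indeed in $L^p$), since $\mathsf{Lap}_d(\eta^2\Sigma)$ concentrates at the origin; hence the perturbed slicers $g_{\xi_\eta}=f+\langle\xi_\eta,\cdot\rangle$ converge to $f$ uniformly on the compact set $\mathcal X$, with high probability. First I would establish that the map $(\text{slicer})\mapsto(\text{expected cost of lifted 1D plan})$ is continuous at $f$ in the appropriate sense: because $\mathcal X$ is compact and $c$ is continuous (hence bounded), the integrand $c(x,y)^p$ is bounded, so it suffices to control the couplings weakly. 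Second, I would invoke stability of 1D optimal transport: if $(g_{\xi_\eta})_\#\mu\to f_\#\mu$ and $(g_{\xi_\eta})_\#\nu\to f_\#\nu$ weakly, then the 1D optimal plans $\sigma_{\xi_\eta}$ converge weakly to the 1D optimal plan for $(f_\#\mu,f_\#\nu)$ (using uniqueness of the monotone 1D plan, or passing to subsequential limits when $f$ is not injective and appealing to the $\STP$ definition via the distinguished lift). Third, I would transfer this convergence through the lifting operation: the lift is the unique $\gamma\in\Gamma(\mu,\nu)$ with $(g,g)_\#\gamma=\sigma$, and weak continuity of the lift (Proposition on unique lifts in the Supplement, plus a compactness/tightness argument in $\Gamma(\mu,\nu)$, which is weakly compact since $\mathcal X$ is compact) gives $\gamma_{\xi_\eta}\rightharpoonup \gamma_f$ weakly, for $\xi_\eta$-a.e.\ realization. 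Finally, dominated convergence (the cost is uniformly bounded on $\mathcal X^2$) converts $\bar\gamma_f^{(\eta)}=\mathbb E_{\xi_\eta}[\gamma_{\xi_\eta}]\rightharpoonup\gamma_f$ into $J_\eta(\mu,\nu;f)\to\int c^p\,\mathrm d\gamma_f=\STP_p^p(\mu,\nu;f)$.

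\medskip

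For part (2), the strategy is a standard compactness-plus-continuity argument for the stability of $\arg\min$ sets, applied with $\eta>0$ \emph{fixed}. The crucial structural facts to assemble are: (i) $\mathcal F$ is compact in $\|\cdot\|_\infty$ (Arzelà–Ascoli, stated in the excerpt); (ii) for fixed $\eta$, the functional $(\mu,\nu,f)\mapsto J_\eta(\mu,\nu;f)$ is jointly continuous, where $\mathcal P(\mathcal X)$ carries the $\OT_p$ topology (equivalently weak convergence, since $\mathcal X$ is compact) and $\mathcal F$ carries the uniform norm. Granting (ii), I would argue by contradiction: suppose there is $\varepsilon>0$ and sequences $(\mu_1^k,\nu_1^k)$, $(\mu_2^k,\nu_2^k)$ with $\OT_p(\mu_1^k,\mu_2^k)+\OT_p(\nu_1^k,\nu_2^k)\to 0$ and minimizers $f_1^{\star,k}$ of $J_\eta(\mu_1^k,\nu_1^k;\cdot)$ such that \emph{every} minimizer of $J_\eta(\mu_2^k,\nu_2^k;\cdot)$ is at uniform distance $\ge\varepsilon$ from $f_1^{\star,k}$. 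Wait — this as stated needs $\mu_i^k$ to also vary; but the theorem fixes $\mu_1,\nu_1,\mu_2,\nu_2$ and only asks that \emph{their} proximity forces proximity of minimizer sets. So more precisely: the claim is the Hausdorff-type semicontinuity statement ``for all $\varepsilon$ there is $\tau$ such that $d_{\mathrm{Haus}}^{\rightarrow}(\arg\min J_\eta(\mu_1,\nu_1;\cdot),\arg\min J_\eta(\mu_2,\nu_2;\cdot))<\varepsilon$ whenever the pairs are $\tau$-close.'' I would prove this by contradiction over sequences $(\mu_2^k,\nu_2^k)\to(\mu_1,\nu_1)$ (the roles here being symmetric, it suffices to treat one direction): pick $f_1^\star\in\arg\min J_\eta(\mu_1,\nu_1;\cdot)$ and, by compactness of $\mathcal F$, a subsequence of minimizers $f_2^{\star,k}$ converging uniformly to some $f_\infty\in\mathcal F$; joint continuity gives $J_\eta(\mu_1,\nu_1;f_\infty)=\lim_k J_\eta(\mu_2^k,\nu_2^k;f_2^{\star,k})=\lim_k \min_f J_\eta(\mu_2^k,\nu_2^k;f)\le \lim_k J_\eta(\mu_2^k,\nu_2^k;f_1^\star)=J_\eta(\mu_1,\nu_1;f_1^\star)$, so $f_\infty$ is itself a minimizer for $(\mu_1,\nu_1)$; then the same argument, reversing source and target pairs, produces a minimizer for $(\mu_2^k,\nu_2^k)$ within $\varepsilon$ of $f_1^\star$ for $k$ large, contradicting the hypothesis. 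The $\arg\min$ sets are nonempty by (i)+(ii) (continuous function on compact set).

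\medskip

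The main obstacle is establishing joint continuity of $J_\eta$ in $(\mu,\nu,f)$, item (ii) — everything else is soft compactness. The delicate points are: (a) continuity of the \emph{expectation} $\bar\gamma_f^{(\eta)}=\mathbb E_{\xi_\eta}[\gamma_{\xi_\eta}]$ requires showing that for $\xi_\eta$-a.e.\ fixed $\xi$, the lifted coupling $\gamma_{\xi}$ depends continuously (weakly) on $(\mu,\nu,f)$, then integrating against the law of $\xi_\eta$ using dominated convergence; (b) for a fixed $\xi$ the perturbed slicer $g_{\xi,f}(x)=f(x)+\langle\xi,x\rangle$ is injective $\mu$-a.s.\ (the Supplement's Lemma \ref{lem:ae-inj}), so the lift is genuinely well-defined, but as $f$ and $\mu$ vary the 1D optimal plan $\sigma_{\xi}$ — which is the monotone rearrangement and hence a discontinuous functional of the marginals where atoms merge — must still vary weakly-continuously; the standard fix is that weak convergence of marginals plus uniqueness of the optimal 1D plan (generic here because of the a.s.\ injectivity) yields weak convergence of $\sigma_\xi$ by the usual OT stability theorem, and then one checks the lift operation $\sigma\mapsto\gamma$ is weakly continuous by exploiting that $(g_\xi,g_\xi)$ is injective on $\mathrm{supp}\,\mu\times\mathrm{supp}\,\nu$ (so $\gamma$ is recovered as a measurable pushforward of $\sigma$ under the inverse map, which is continuous on the relevant compact). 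One must be slightly careful that the "$f$-varies" and "$\mu$-varies" perturbations interact — but both are absorbed into uniform/weak convergence of $(g_{\xi,f})_\#\mu$, so they can be handled simultaneously. Throughout, compactness of $\mathcal X$ is doing the heavy lifting: it makes all probability measures tight, $\OT_p$-topology coincide with weak topology, and the cost bounded.
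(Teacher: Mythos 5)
Your outline mirrors the paper's Proposition~\ref{prop:jeta-to-stp}: fix the noise realization, show $g_{\xi_\eta}\to f$ uniformly (via $\xi_\eta = \eta Z$, so $\|g_{\xi_\eta}-f\|_\infty \le \eta\|Z\|_q R$), propagate this through stability of 1D optimal plans (Lemmas~\ref{lem:basic}--\ref{lem:tools}), identify the weak limit of the lifts as $\gamma_f$, and close with dominated convergence over $Z$. One inaccuracy: you suggest handling a non-injective $f$ ``by appealing to the $\STP$ definition via the distinguished lift,'' but the identification step that pins down the subsequential limit as $\gamma_f$ genuinely uses that $f$ is a continuous injection on the compact $\mathcal X$ (so its inverse is continuous and the lift is unique); the paper's proposition accordingly assumes injectivity, and the theorem is implicitly stated under that hypothesis. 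Your sketch should either assume it or note that $\STP_p(\mu,\nu;f)$ is only defined in that case.

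\textbf{On part (2).} Your plan to prove upper hemicontinuity of the argmin correspondence directly by a sequential-compactness / contradiction argument, rather than invoking Berge's Maximum Theorem as the paper does, is a legitimate and arguably more self-contained route: both reduce to (i) compactness of $\mathcal F$ and (ii) joint continuity of $J_\eta$ (the paper's Proposition~\ref{prop:Jeta-cont}), and you correctly identify (ii) as the crux and propose essentially the same proof as the paper. However, your contradiction is set up in the wrong direction. You hold $(\mu_1,\nu_1)$ fixed, take $(\mu_2^k,\nu_2^k)\to(\mu_1,\nu_1)$, extract a convergent subsequence of minimizers $f_2^{\star,k}\to f_\infty$, and conclude $f_\infty\in\mathcal S(\mu_1,\nu_1)$. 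That establishes $\mathcal S(\mu_2^k,\nu_2^k)\subset U_\varepsilon(\mathcal S(\mu_1,\nu_1))$ for $k$ large --- i.e., every minimizer of the \emph{second} problem has a nearby minimizer of the \emph{first} --- which is the inclusion in the opposite direction from the theorem. The theorem asserts $\mathcal S(\mu_1,\nu_1)\subset U_\varepsilon(\mathcal S(\mu_2,\nu_2))$: for every $f_1^\star\in\mathcal S(\mu_1,\nu_1)$ there is a nearby $f_2^\star\in\mathcal S(\mu_2,\nu_2)$. Since these Hausdorff-type inclusions are not equivalent (one argmin set may be a strict subset of a neighborhood of the other), your parenthetical ``the same argument, reversing source and target'' does not repair this: you must instead set up the contradiction with $(\mu_2,\nu_2)$ as the \emph{fixed} reference, take a sequence $(\mu_1^k,\nu_1^k)\to(\mu_2,\nu_2)$ together with minimizers $f_1^{\star,k}$ at distance $\ge\varepsilon$ from $\mathcal S(\mu_2,\nu_2)$, and show a limit point of $f_1^{\star,k}$ lands in $\mathcal S(\mu_2,\nu_2)$. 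This is exactly the upper hemicontinuity at $(\mu_2,\nu_2)$ that the paper's Theorem~\ref{thm:w_f} extracts from Berge's theorem. Once you fix which pair plays the role of the reference point, your sequential argument goes through.
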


For the first property, we refer to Proposition \ref{prop:jeta-to-stp} in the Appendix, which uses the following auxiliary inequalities:
for $f, g\in\mathcal{F}$, $\kappa, \kappa'\in\mathcal{P}(\mathcal{X})$, and $\mu_i,\nu_i\in\mathcal{P}(\mathbb R)$ with optimal couplings $\gamma_i$, $i=1, 2$,
\begin{itemize}[noitemsep, topsep=0pt]
    \item \(\OT_p\big(f_\#\kappa,\;f_\#\kappa'\big)\;\le\;L\cdot \OT_p(\kappa,\kappa')\)
    \item \(
    \OT_p\big(f_\#\kappa,\;g_\#\kappa\big)\le\|f-g\|_\infty\)
    \item  $\OT_p^p\!\big(\gamma_1, \gamma_2\big)
\ \le\
 \OT_p(\mu_1,\mu_2)^p + \OT_p(\nu_1,\nu_2)^p$
\end{itemize}
(see Lemmas  \ref{lem:basic} and \ref{lem:tools} in the Appendix).

  \begin{wrapfigure}{r}{0.55\textwidth}
    \centering
    \vspace{-0.2in}
    \includegraphics[width=0.55\textwidth]{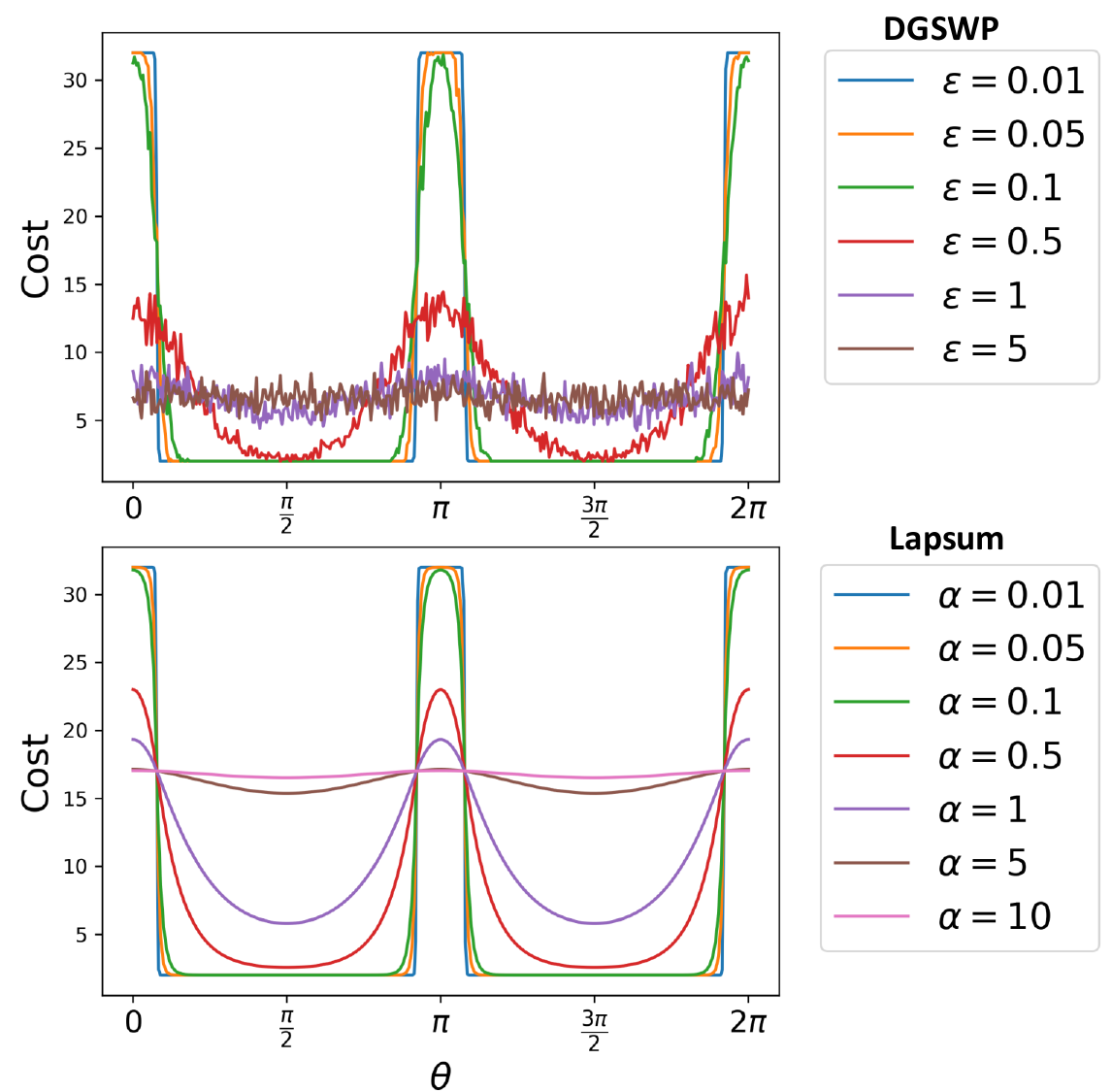}
    \vspace{-2em}
    \caption{Transport costs with respect to slicing directions $\theta\in\mathbb{S}^1$ for distribution pair $\mu=\frac{1}{2}\delta_{[0, 0]}+\frac{1}{2}\delta_{[1, 1]}$ and $\nu=\frac{1}{2}\delta_{[0, 1]}+\frac{1}{2}\delta_{[1, 0]}$, using DGSWP~\cite{chapel2025differentiable} and LapSum~\cite{struski2025lapsum}.}
    \label{fig:dgswp-vs-lapsum}
    \vspace{-2.0em}
\end{wrapfigure}

 The second property in Theorem \ref{thm} is  \emph{transferability}. It holds as an application of Berge’s Maximum Theorem \cite{aliprantis2006infinite} using compactness of $\mathcal F$ (Model Assumption \ref{asmp:model}) together with the continuity of $J_\eta$.  
 Indeed, Proposition \ref{prop:Jeta-cont} in the Appendix establishes that $J_\eta$ is continuous as a function on $\left(\mathcal{P}(\mathcal{X})\times\mathcal{P}(\mathcal{X})\right)\times \mathcal F$, and its proof relies on the three inequalities listed above.
 
For an illustration, see Figure \ref{fig:teaser} (d). For a preliminary discussion on the relationship between $\tau$ and $\varepsilon$, we refer the reader to the Appendix.
 

\vspace{-0.1in}
\subsection{Smoothed Sorting (LapSum)}
\label{subsec: lapsum}

An essential step in minimizing the $\STP$ objective is computing the one-dimensional optimal transport plan on $\mathbb{R}$. In one dimension, the OT problem reduces to sorting, which is inherently non-differentiable. To enable gradient-based optimization, a differentiable variant of the $\STP$ framework was introduced in \cite{chapel2025differentiable}. The authors proposed the differentiable generalized sliced transport plan (DGSWP), which formulates the search for a sliced plan as a bilevel optimization problem and smooths the outer, non-differentiable $\STP$ objective by injecting Gaussian noise into the slice parameters and estimating gradients using Stein’s lemma. While this yields a differentiable surrogate, it suffers in practice from high variance when naive Monte Carlo estimators are used, as noted by the authors in \cite{chapel2025differentiable} and shown in Figure~\ref{fig:dgswp-vs-lapsum}. To mitigate this, they employed control variates as a variance-reduction technique to stabilize training. Nevertheless, the computational overhead of repeatedly solving one-dimensional OT problems for every perturbed slicer remains a significant bottleneck.

We adopt a different approach to inject differentiability, which aligns with the smooth surrogate $J_\eta$ proposed in Definition \ref{def:Jeta-lift} and avoids the extra cost of repeated 1-D OT problems. This is achieved via LapSum, the differentiable sorting method introduced in \cite{struski2025lapsum}. Let $x_1,\dots,x_n$ denote $n$ scalars in $\mathbb{R}$ with the corresponding empirical cumulative distribution function (CDF)
\vspace{-5pt}
\[
\hat F_x(t)\;=\;\frac{1}{n}\sum_{i=1}^n \mathbf{1}\{x_i\le t\}.
\]
Then the rank of $x_i$ and sorted $k$-th value can be written as 
\vspace{-3pt}
\begin{align*}
&r_i \;=\; n\,\hat F_x(x_i), \quad x_{(k)} \;=\; \hat F_x^{-1}\!\left(k/n\right),\ \ k=1,\dots,n.
\end{align*}
LapSum injects i.i.d.\ Laplace noise $\xi_i\sim\mathsf{Lap}(0,\alpha)$ and smooths each of the hard functions $\mathbf{1}\{x_i\le t\}$ as the CDF of $\mathsf{Lap}(0,\alpha)$, denoted by $\Phi_\alpha$, resulting in a smoothed CDF
$
\hat F_{x,\alpha}(t)\;=\;\frac{1}{n}\sum_{i=1}^n \Phi_\alpha(t-x_i)$. This yields differentiable surrogates for rank and sorting: $
\tilde r_i =n\hat F_{x,\alpha}(x_i)$ and
$\tilde x_{(k)}=\hat F_{x,\alpha}^{-1}\!\left(\frac{k}{n}\right)$,
with $\hat F_{x,\alpha}\!\to\!\hat F_x$ and $\tilde x_{(k)}\!\to\!\mathrm{sort}(x)$ as $\alpha\rightarrow 0$. The elegance of LapSum is that the inverse $\hat F_{x,\alpha}^{-1}$ can be calculated in closed-form and the algorithm has a complexity of only $\mathcal{O}(n\log n)$ (see Algorithm \ref{alg:lapsum} in the Appendix for deriving LapSum-based soft permutation).

In our smooth surrogate in Definition~\ref{def:lin-laplace}, since the noise $\xi\sim \mathsf{Lap}_d(\eta^2\Sigma)$, each projection $\langle \xi,x\rangle$ is a 1D Laplace distribution with scale $b_x=\eta\sqrt{\tfrac12\,x^\top\Sigma x}$ (see Lemma \ref{rem:proj-lap} in the Appendix). In other words, we inject Laplacian noise into each of the one-dimensional samples and calculate the expectation of plans in one shot by LapSum. This closely connects to the probabilistic assumptions of LapSum’s soft comparisons. In practice, we conveniently adopt a fixed global scale for the Laplace noise across all samples, which allows for absorbing the perturbations in the calculations of the 1D optimal transport plan using LapSum, notably with a time complexity of only $\mathcal{O}(n\log n)$ and $\mathcal{O}(n)$ memory requirement.

\vspace{-0.1in}
\subsection{Mini-batch Training}
\vspace{-0.1in}
Training a slicer network on large-scale datasets is computationally demanding because each update requires evaluating the full $\STP$ objective. We mitigate this cost by optimizing a mini-batched $\STP$ loss, which enables fast iterations. Theoretically, we establish a stability and convergence guarantee demonstrating how the mini-batch objective approaches the full-batch objective as the batch size becomes large.

Let $X=\{x_i\}_{i=1}^N$ and $Y=\{y_j\}_{j=1}^M$ be two sets of discrete samples, and define $[n]\coloneqq\{1,\dots,n\}$. The discrete $\STP$ objective is given by
$J_{N,M}(f)\coloneqq\sum_{i=1}^N\sum_{j=1}^M \gamma_{ij}\,c(x_i,y_j)^p$,
where $\gamma$ denotes the lifted transport plan by $f:\mathbb{R}^d\to\mathbb{R}$.
Fix a batch size $B\le \min(N, M)$ and let $\binom{[n]}{B}$ represent the set of all possible batches of $B$ indices drawn from $[n]$ without replacement. For $S\in\binom{[N]}{B},T\in\binom{[M]}{B}$ and their corresponding batches $x_S=(x_i)_{i\in S}$ and $y_T=(y_j)_{j\in T}$, define the kernel with slicer $f:\mathbb{R}^d\to\mathbb{R}$ as
$
h_B\!\left(f; x_S,\, y_T\right)
\coloneqq \frac{1}{B}\sum_{i=1}^B c \big(x^{f}_{(i)},\,y^{f}_{(i)}\big)^p,
$
where $x^{f}_{(i)},\,y^{f}_{(i)}\in\mathbb{R}^d$ denote the re-ordered $x_S$ and $y_T$, paired by the lifted coupling from 1D coupling, i.e., $f(x^{f}_{(1)})\le \cdots \le f(x^{f}_{(B)})$ and $f(y^{f}_{(1)})\le \cdots \le f(y^{f}_{(B)})$ for the slicer $f:\mathbb{R}^d\to\mathbb{R}$. The mini-batch $\STP$ loss is then defined as 
\vspace{-0.1in}
\[
J_{N, M, B}(f)
\coloneqq \frac{1}{\binom{N}{B}\binom{M}{B}}
\sum_{S\in\binom{[N]}{B}}\;
\sum_{T\in\binom{[M]}{B}}
h_B\!\big(f;\, x_S,\, y_T\big).
\]
\vspace{-0.1in}

In practical implementations, the incomplete estimator draws $K$ i.i.d. batch pairs $\{(S_k,T_k)\}_{k=1}^K$ uniformly from 
$\binom{[N]}{B}\times\binom{[M]}{B}$, and averages:
\vspace{-0.1in}
\[
\overline{J}_{B,K}(f)
\coloneqq \frac{1}{K}\sum_{k=1}^K h_B\!\big(f;\,x_{S_k},\,y_{T_k}\big).
\]
\vspace{-0.1in}

\begin{proposition}
For any $\delta\in(0,1)$, with probability at least $1-\delta$, there exist constants $C_1, C_2, C_3, R$ (which depend on the datasets and $f$) such that
\vspace{-0.1in}
\begin{gather*}
    \bigl|\overline{J}_{B,K}(f)-J_{N,M}(f)\bigr|
\;\le\; \frac{C_1}{B}
\;+\;C_2\omega_X(\tfrac12\sqrt{\tfrac{N-B}{NB}})\\
\;+\;C_3\omega_Y(\tfrac12\sqrt{\tfrac{M-B}{MB}})
\;+\;R\,\sqrt{\frac{\log(2/\delta)}{2K}}.
\end{gather*}
where $\omega_X,\omega_Y$ denote the quantile moduli of $f_\#\mu_N$ and $f_\#\nu_M$, which are non-decreasing, and satisfy $\omega_X(0)=\omega_Y(0)=0$. 
\end{proposition}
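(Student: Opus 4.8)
The plan is to decompose the error into three conceptually distinct pieces and bound each separately: (i) the sampling error $|\overline{J}_{B,K}(f)-J_{N,M,B}(f)|$ coming from using only $K$ random batch pairs instead of averaging over all of them; (ii) the U-statistic bias $|J_{N,M,B}(f)-J_{N,M}(f)|$ coming from the fact that the batched coupling is not the full coupling; and (iii) packaging (ii) in terms of the quantile moduli $\omega_X,\omega_Y$ and, under the density lower-bound assumption, turning those moduli into an explicit $\mathcal{O}(B^{-1/2})$ rate. For (i), note that $\overline{J}_{B,K}(f)$ is an average of $K$ i.i.d.\ copies of the bounded kernel $h_B(f;x_S,y_T)$; boundedness follows from compactness of $\mathcal X$ and continuity of $c$, so $h_B\le R$ for some $R$ depending on $\mathrm{diam}(\mathcal X)$ and $p$. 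Hoeffding's inequality then gives $|\overline{J}_{B,K}(f)-\mathbb E[\overline{J}_{B,K}(f)]| \le R\sqrt{\log(2/\delta)/(2K)}$ with probability at least $1-\delta$, and since $\mathbb E[\overline{J}_{B,K}(f)] = J_{N,M,B}(f)$ exactly (the batches are drawn uniformly from $\binom{[N]}{B}\times\binom{[M]}{B}$), this is precisely the last term.

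For the bias term (ii), the key observation is that the 1D optimal plan is monotone: both $J_{N,M}(f)$ and $h_B(f;x_S,y_T)$ are computed by sorting the projected points $f(x_i)$ and $f(y_j)$ and pairing them in sorted order, then averaging $c(\cdot,\cdot)^p$ over the induced pairs. Writing $\mu_N^f = f_\#\mu_N$ and $\nu_M^f = f_\#\nu_M$ for the pushforward empirical measures on $\mathbb R$, the full objective is governed by the quantile functions $F_{\mu_N^f}^{-1}, F_{\nu_M^f}^{-1}$ while the batch objective is governed by the empirical quantile functions of the $B$-subsamples. I would first reduce to controlling the $1$-Wasserstein (equivalently, $L^1$-quantile) distance between the subsampled empirical measure and the full empirical measure on $\mathbb R$: by the DKW-type / order-statistic fluctuation bound for sampling $B$ points without replacement, $\mathbb E\,\|F_{x_S}^{-1}-F_{\mu_N^f}^{-1}\|$ is controlled by the quantile modulus $\omega_X$ evaluated at the typical CDF fluctuation $\tfrac12\sqrt{(N-B)/(NB)}$ (the finite-population correction factor), and similarly for $Y$. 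Then, using the Lipschitz-type bound relating changes in the transport cost to changes in the paired points — i.e.\ $|c(x_{(i)}^f,y_{(i)}^f)^p - c(\tilde x_{(i)}^f,\tilde y_{(i)}^f)^p|$ is controlled by $\|x_{(i)}^f-\tilde x_{(i)}^f\|+\|y_{(i)}^f-\tilde y_{(i)}^f\|$ up to a constant depending on $\mathrm{diam}(\mathcal X), p$, and the fact that reordering in $\mathbb R$ and lifting back to $\mathbb R^d$ is $1$-Lipschitz in quantile space along $f$ — one obtains $|J_{N,M,B}(f)-J_{N,M}(f)| \le C_2\,\omega_X(\tfrac12\sqrt{(N-B)/(NB)}) + C_3\,\omega_Y(\tfrac12\sqrt{(M-B)/(MB)}) + C_1/B$, where the $C_1/B$ term absorbs lower-order effects (e.g.\ the discrepancy between $\tfrac1B\sum$ over a batch and the continuous quantile integral, and ties). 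Combining with (i) by a union bound over the two probabilistic statements gives the first displayed inequality.

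For the final simplification, under the added assumption that $\mu_N^f$ and $\nu_M^f$ have smoothed CDFs with densities bounded below by some $\rho>0$ on their supports, the quantile function is Lipschitz with constant $1/\rho$, so $\omega_X(t)\le t/\rho$ and $\omega_Y(t)\le t/\rho$. Since $\tfrac12\sqrt{(N-B)/(NB)} \le \tfrac12 B^{-1/2}$ and likewise for $M$, the two modulus terms are each $\mathcal{O}(B^{-1/2})$, the $C_1/B$ term is $o(B^{-1/2})$, and the Hoeffding term is $\mathcal{O}(K^{-1/2})$, yielding the claimed $\mathcal{O}(B^{-1/2}) + \mathcal{O}(K^{-1/2})$ bound. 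I expect the main obstacle to be step (ii): carefully justifying that the difference between the batched sorted-pairing cost and the full sorted-pairing cost is controlled by the quantile modulus at the finite-population CDF fluctuation scale — this requires a clean lemma saying that $\OT_1$ on $\mathbb R$ between a size-$B$ subsample and the parent empirical measure concentrates at rate governed by $\omega_X$ composed with the $\sqrt{(N-B)/(NB)}$ sampling-without-replacement rate, and then transferring that $\mathbb R$-level control back to the $\mathbb R^d$-level cost through the lift, uniformly over the pairing. The $c^p$ (rather than $c$) cost and possible non-uniqueness of the 1D plan under ties are the technical nuisances to handle with the $C_1/B$ slack term.
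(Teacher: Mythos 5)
Your decomposition matches the paper's: the fluctuation term $|\overline{J}_{B,K}-J_{N,M,B}|$ is handled by Hoeffding (using $\mathbb E[\overline J_{B,K}]=J_{N,M,B}$, a deterministic identity since the paper conditions on the datasets), and the bias term $|J_{N,M,B}-J_{N,M}|$ is handled by rewriting the full objective as a quantile-level Riemann integral over $[0,1]$, partitioning $[0,1]$ into $B$ bins, and separating a per-bin oscillation error (which yields $C_1/B$ via the slicer's bounded oscillation) from a selector-mismatch error (which yields the modulus terms via $\mathrm{Var}(U_k)\le\tfrac{N-B}{4BN}$ for sampling without replacement and the subadditivity of $\omega$). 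Two small points worth tightening in your sketch: first, the bias step is fully deterministic given $X,Y$, so there is no second probabilistic statement to union-bound against Hoeffding --- the $1-\delta$ event only governs the $K$-sampling term; second, the phrase ``lifting back to $\mathbb R^d$ is $1$-Lipschitz'' is not quite right as stated --- what is actually needed is an inverse-Lipschitz (co-Lipschitz) bound $|f(x)-f(x')|\ge c_X\,d(x,x')$ on the finite sample, which the paper supplies via injectivity of $f$ on a finite set (Remark on bi-Lipschitz), and this constant $1/c_X$ (not $1$) enters $C_2,C_3$. Under the density-lower-bound assumption the modulus becomes $\omega_X(t)\le t/m_X$, and $\tfrac12\sqrt{(N-B)/(NB)}\le\tfrac12 B^{-1/2}$ gives the $\mathcal O(B^{-1/2})$ rate, exactly as you argue.
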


\begin{proof}[Proof sketch]
The proof proceeds in two steps: (i) controlling the mini-batch bias $\bigl|J_{N,M,B}(f)-J_{N,M}(f)\bigr|$ and (ii) controlling the Monte Carlo fluctuation over $K$ batches $\bigl|\overline{J}_{B,K}(f)-J_{N,M,B}(f)\bigr|$. The first term is deterministically bounded by a $\mathcal{O}(B^{-1})$ term from well-stratified batches, together with the quantile moduli
$\omega_X, \omega_Y$ capturing deviations from this ideal stratification. The second term utilizes the fact that $\mathbb{E}\overline{J}_{B,K}(f)=J_{N,M,B}(f)$. By the boundedness of finite discrete samples, applying Hoeffding's inequality gives the probabilistic bound $\mathcal{O}(K^{-1/2})$. 
Details can be found in Propositions \ref{prop:incomplete} and \ref{prop:mb-vs-full-bilip} in the Appendix.
\end{proof}

\vspace{-.2in}
\section{Empirical Results}
\vspace{-.1in}

\subsection{Implementations}
\vspace{-.1in}
We train the slicer with symmetric two-branch gradient flows (see Figure \ref{fig:symmetric_train}). 
For the projected samples $\{f(x_i)\}_{i=1}^N$ and $\{f(y_j)\}_{j=1}^M$, the differentiable LapSum operator produces soft permutation matrices $\tilde P_X,\tilde P_Y$. 
Let $P_X,P_Y$ be the corresponding hard permutations and let $T_{N,M}\in\mathbb{R}^{N\times M}$ be the fixed optimal transport plan between one-dimensional sorted $N$ points and sorted $M$ points.
As differentiating through both soft permutations simultaneously may amplify noise and result in high variance in training, we form two plans during backpropagation,
\[
\gamma_{1}=\tilde P_X^{\!\top}\,T_{N,M}\,P_Y,
\qquad
\gamma_{2}=P_X^{\top}\,T_{N,M}\,\tilde P_Y,
\]
and optimize the loss using their average, $\tilde\gamma \;:=\; \tfrac12\big(\gamma_{1}+\gamma_{2}\big)$, for more stable optimization. 
We empirically test the training scheme by comparing the transport plans produced by different training strategies, shown in Figure \ref{fig:mini-batch}. The visualizations illustrate that the mini-batch $\mSTP$ maintains high-quality matching even for smaller batch sizes, closely resembling both the optimal transport plan and the full-batch $\mSTP$ solution.

\begin{figure*}[t]
    \centering
    \vspace{-.4in}
    \includegraphics[width=\linewidth]{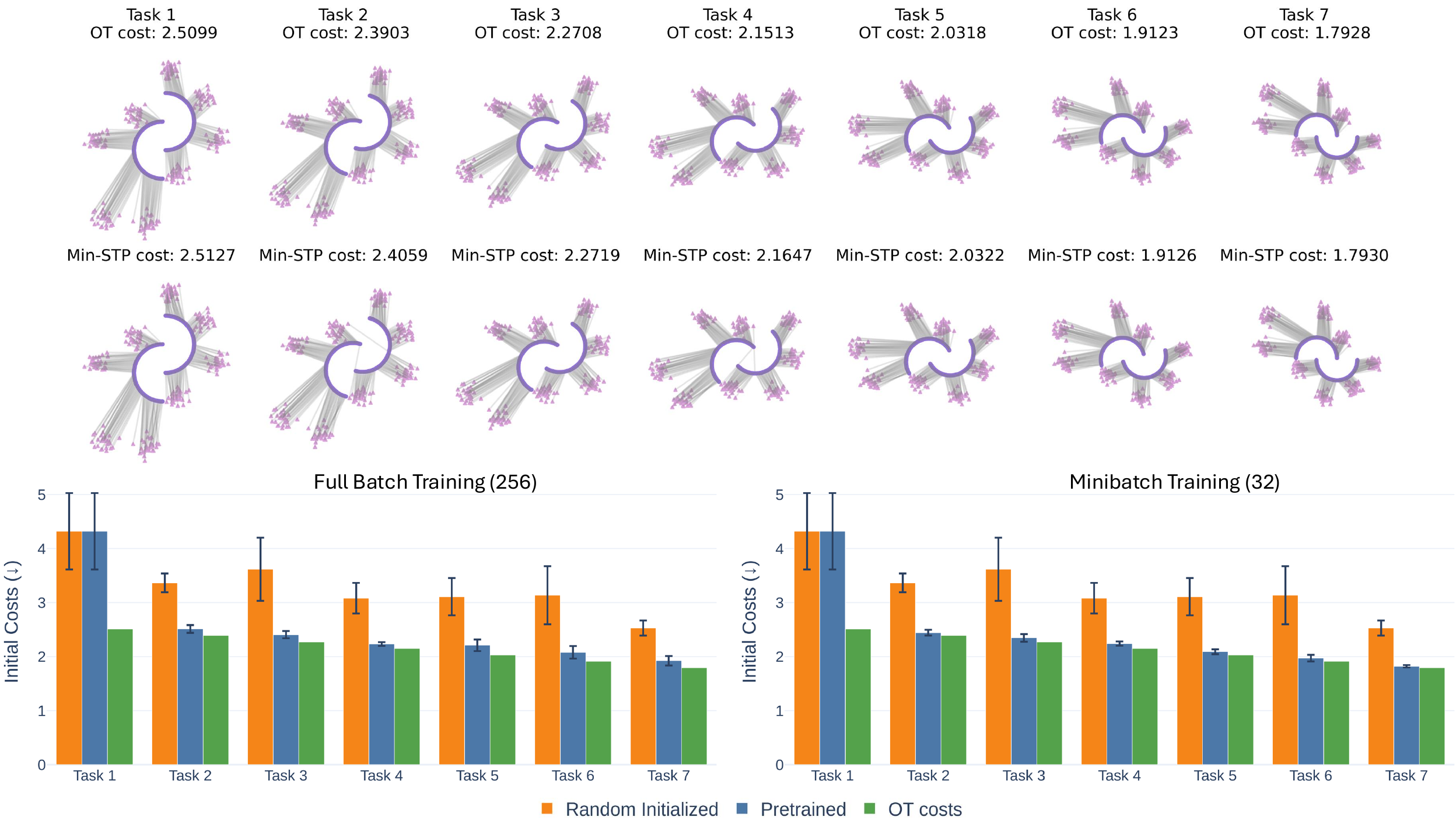}
    \vspace{-.2in}
    \caption{(Top row) Generated tasks $\{(\mu_t,\nu_t)\}_{t=1}^{7}$ with OT plans and corresponding costs. (Middle row) $\mSTP$ plans and costs optimized with a generalized slicer (i.e., a neural network).
      (Bottom row) \textit{Initial} slicer costs (mean over 5 runs) for a randomly initialized neural network, the optimal slicer for previous task, and the OT lower bound for full-batch (left) and mini-batch (right) training.
}
    \vspace{-.2in}
    \label{fig:exp1}
\end{figure*}
\vspace{-.1in}
\subsection{Transferability under Gradual Drift}
\label{exp:1}
\vspace{-.1in}

We evaluate slicer transferability on a sequence of gradually changing distribution pairs \(\{(\mu_t,\nu_t)\}_{t=1}^{7}\) generated from a fixed two-moons plus eight-Gaussians template in \(\mathbb{R}^2\). Each distribution has 256 samples. In each step, we apply a small rotation and zoom-out to produce the next distribution pair. For every task, we use a Set-Transformer \cite{lee2019set} as the parameterized slicer to minimize the $\STP$ objective. For $t\ge 2$, we compare initial values of the $\STP$ objective under two initializations: (i) \textbf{random initializations} (fresh and random parameters), and (ii) \textbf{transferred / pretrained initializations}, which reuses the optimal slicer \(f_{t-1}^\star\) from the previous task \((\mu_{t-1}, \nu_{t-1})\). As a lower bound, we also compute the OT cost between \((\mu_t,\nu_t)\). Figure~\ref{fig:exp1} shows that in all steps, the pretrained initialization starts from a substantially lower objective than random and sits noticeably closer to the OT reference, indicating that leveraging \(f_{t-1}^\star\) places us at a much stronger starting point for the new task and the learned slicer captures geometry that persists under small rotations and scale changes.

\vspace{-.1in}
\subsection{Amortized Min-STP for Point Cloud Alignment}
\vspace{-.1in}

To further assess the transferability of $\mSTP$, we investigate whether a single amortized slicer can generalize across a family of related distribution pairs. We use the ModelNet10 dataset \cite{wu20153d}, which contains ten object categories (bathtub, bed, chair, desk, dresser, monitor, night\_stand, sofa, table, toilet), each represented as a point cloud of size 1024. For every ordered pair of categories, we construct training and test sets of distribution pairs by sampling point clouds from the corresponding classes. A single amortized slicer $f_a^\star$ is then learned by minimizing a global objective over all training pairs. The slicer is implemented as an MLP with 3 hidden layers of dimensions $[256,512,256]$. As input, we concatenate each 3D point with a context vector, obtained from a pretrained point cloud auto-encoder. More details can be found in Section \ref{app:details}. We then compare the transport costs induced by $f_a^\star$ with the exact OT costs and against per-pair $\mSTP$, $\STP$ with a random slicer, and Wasserstein Wormhole \cite{haviv2024wasserstein} as a global embedding from point clouds into a Euclidean space, trained to preserve pairwise Wasserstein distances.  
\begin{wraptable}{r}{0.55\textwidth}
\small
\centering

\begin{tabular}{lcc}
\toprule
Method & Train Corr. & Test Corr. \\
\midrule
$\mSTP$ (Amortized) & $0.907^{\pm 0.048}$ & $0.902^{\pm 0.054}$ \\
$\mSTP$ & $0.959^{\pm 0.046}$ & $0.947^{\pm 0.073}$ \\
$\STP$ (Random Slicer) & $0.702^{\pm 0.090}$ & $0.719^{\pm 0.113}$ \\
Wasserstein Wormhole & $0.831^{\pm 0.132}$ & $0.770^{\pm 0.177}$ \\
\bottomrule
\end{tabular}
\caption{Mean and standard deviation of Pearson correlations with exact OT over all ModelNet10 category pairs. Amortized $\mSTP$ matches per-pair $\mSTP$ and outperforms the baselines. Note that unlike $\STP$-based methods Wasserstein Wormhole does not produce a transportation plan.}
\vspace{-.2in}
\label{tab:amor}

\end{wraptable}
Table~\ref{tab:amor} reports the mean$\pm$std Pearson correlations with OT across category pairs. Figure~\ref{fig:ds} shows an example pair (desk/sofa), with additional pairs in Figure~\ref{fig:morepairs} (Supplementary). Amortized $\mSTP$ matches OT correlations comparable to per-pair $\mSTP$—an upper bound that learns a separate slicer per pair—while consistently outperforming the other baselines on both train and test pairs, indicating that a single slicer generalizes across distributions.

\begin{figure}[t!]
    \centering
    \includegraphics[width=\linewidth]{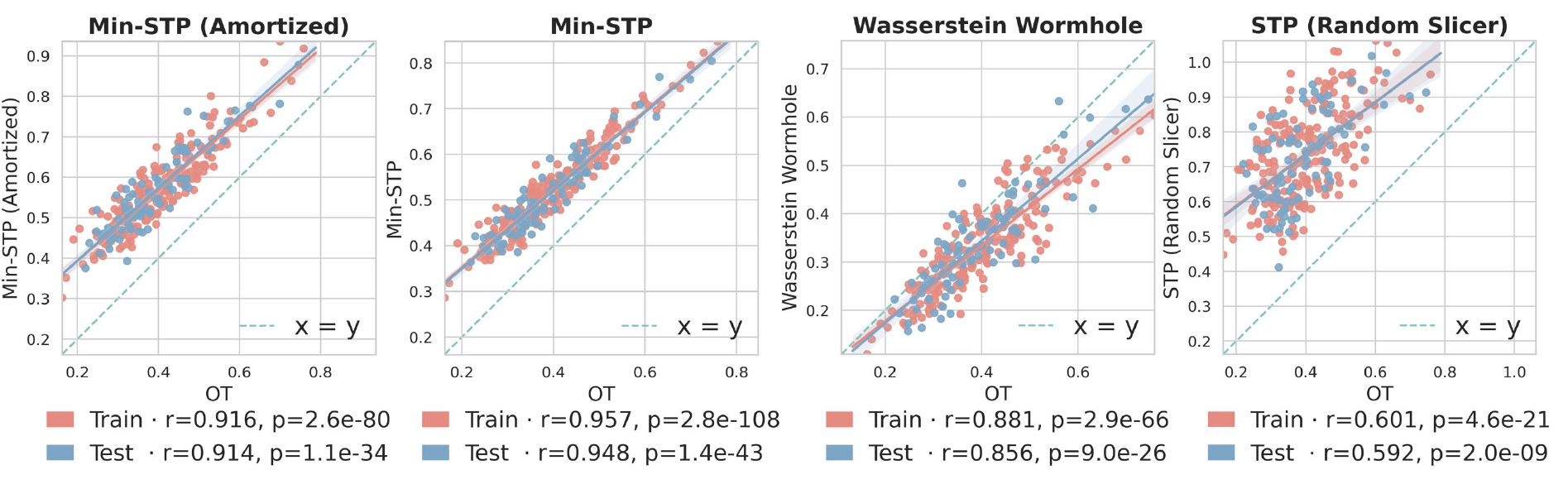}
    \vspace{-.2in}
    \caption{Correlations for desk and sofa pairs in ModelNet10. Amortized $\mSTP$ matches the performance of per‑pair $\mSTP$ and outperforms the two other baselines.}
    \label{fig:ds}
    \vspace{-.1in}
\end{figure}

\subsection{Min-STP based Flow}
\vspace{-.1in}

A promising application of optimal transport is its use in improving flow-based generative modeling \cite{tong2024improving} and single-step generation \cite{kornilov2024optimal}. OT-MF \cite{akbari2026transportbasedmeanflows} demonstrates that incorporating optimal transport into the MeanFlow algorithm \cite{geng2025meanflows} can enhance single-step generation 
performance across various tasks. 

Motivated by these findings, we aim to investigate the effectiveness of $\mSTP$ in a flow-based setting. Specifically, we adopt the experimental setup and datasets from \cite{akbari2026transportbasedmeanflows} for point cloud generation, using the “Chair” class from the ShapeNet dataset \cite{chang2015shapenet}. Additional results and implementation details are available in section \ref{subsec:flow}.

\begin{wrapfigure}{r}{0.7\textwidth}
    \centering
    \vspace{-.1in}
    \includegraphics[width=\linewidth]{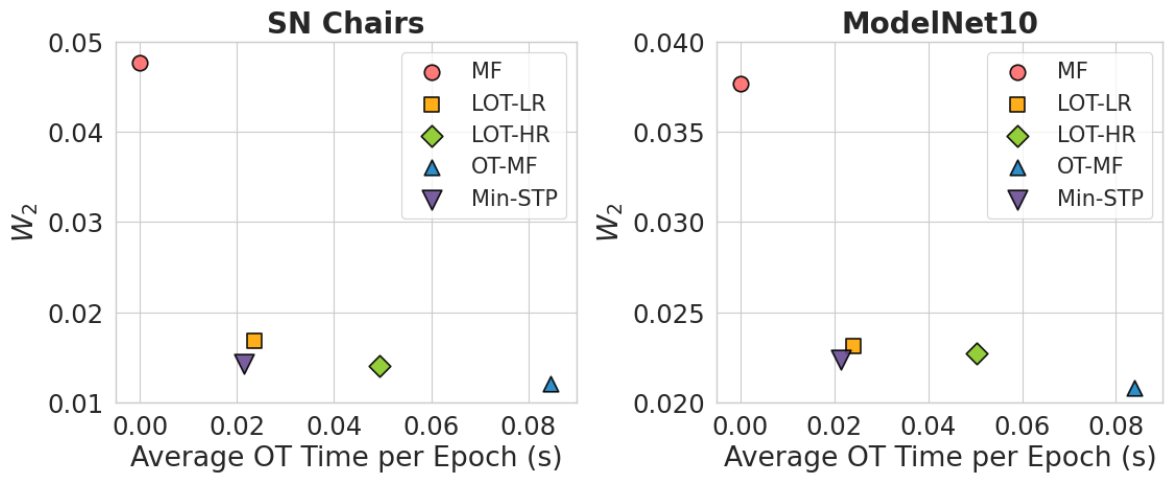}
    \vspace{-.2in}
    \caption{Average OT compute time per epoch vs. the $\OT_2 (W_2)$ distance for each method. For both metrics, lower is better.}
    \vspace{-0.1in}
    \label{fig:ot_time}
\end{wrapfigure}


Figure \ref{fig:ot_time} shows the average time required to compute OT per epoch vs. the 2-Wasserstein distance ($W_2$) for $\mSTP$ and the baseline methods. A lower average $W_2$ across test samples shows the ability of the method to generate point cloud samples that are closest to the ground truth. An ideal method should also achieve a lower computation time for OT to reduce the overall training cost. As shown in Figure \ref{fig:ot_time}, $\mSTP$ appears in the bottom-left region of the plot, demonstrating its ability to balance low computation time with low $W_2$, resulting in higher-quality generation with faster training. Figure \ref{fig:sn_plots} (in the appendix) shows generated samples (1-step) for ShapeNet-Chairs. $\mSTP$ produces samples that more closely resemble the ground truth while adding only a slight overhead during training.

\vspace{-.1in}
\begin{wrapfigure}{r}{0.52\textwidth}
\centering
\vspace{-0.15in}
\begin{subfigure}{\linewidth}
    \centering
    \vspace{-1.2em}
    \includegraphics[width=\linewidth]{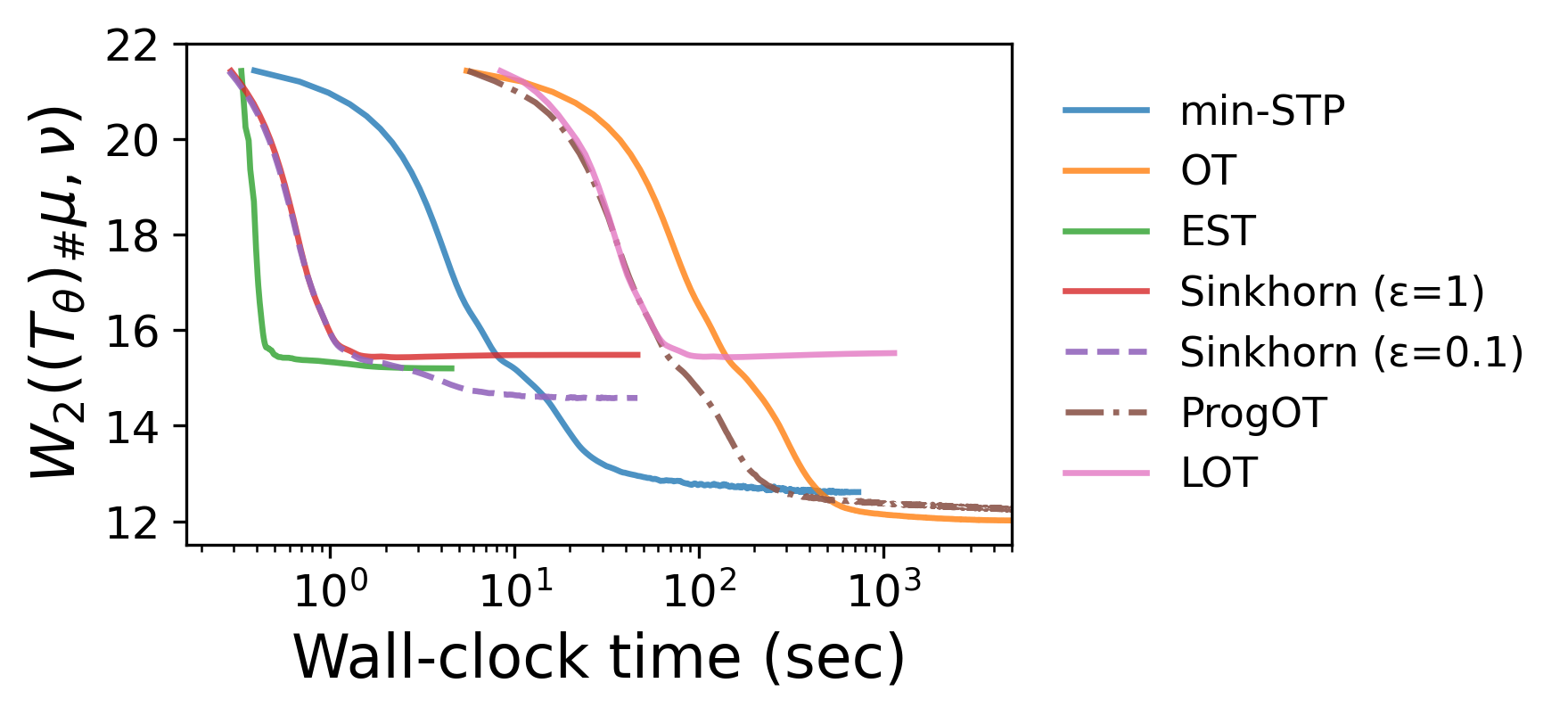}
    \vspace{-0.25in}
    \caption{Adult-to-Child translation performance ($W_2$) against wall-clock training time.}
    \label{fig:w2_vs_time}

\end{subfigure}
\hfill
\vspace{0.05in}
\begin{subfigure}{\linewidth}
        \centering
    \includegraphics[width=\linewidth]{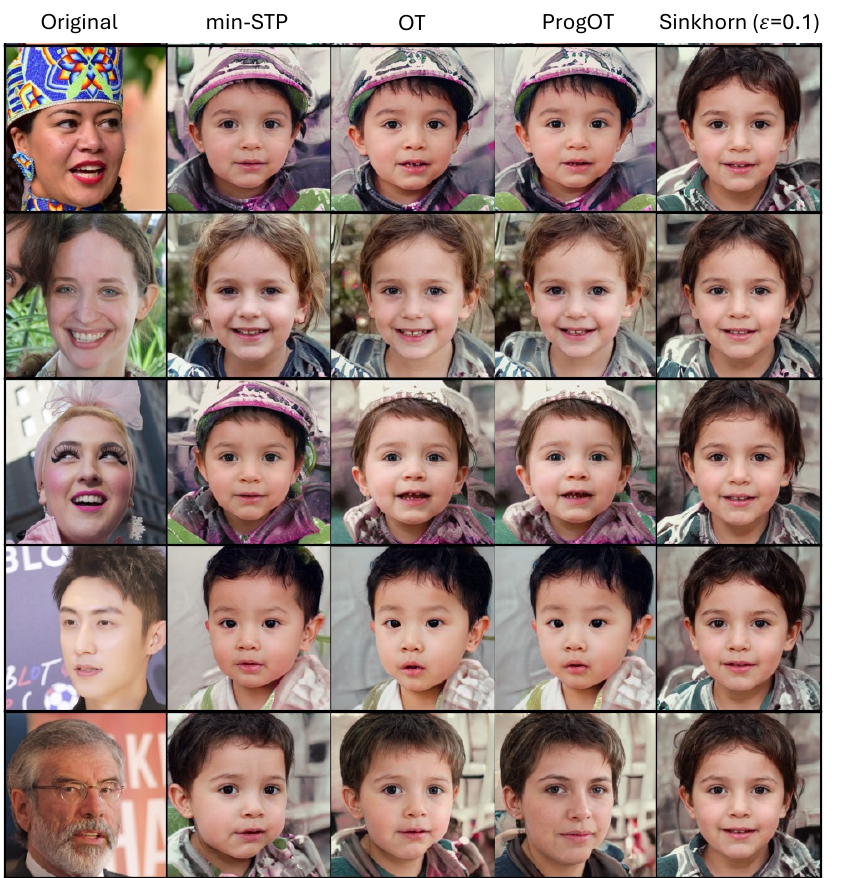}

    \caption{Adult-to-Child translation visualizations with different methods.}
    \label{fig:alae_plots}
    \vspace{-0.05in}
\end{subfigure}
\caption{Adult-to-Child translation results.}

\label{fig:combined}
\end{wrapfigure}

\subsection{Unpaired Image-to-Image Translation}

In the unpaired image-to-image translation setting, we aim to learn a mapping from a source distribution $\mu$ to a target distribution $\nu$.
Here we use the face images from the FFHQ dataset~\cite{karras2019style} and work with 512-dimensional latent vectors encoded by ALAE \cite{pidhorskyi2020adversarial}. Given a transport plan $\gamma$ between distributions $\mu$ and $\nu$ over the latent vectors, we train an MLP
$T_\theta$,
\vspace{-0.1in}
\begin{equation}
\min_\theta \sum_{i,j} \left\| T_\theta(x_i) - y_j \right\|_2^2 \, \gamma_{i,j},
\end{equation}
where $\{x_i\} \sim \mu$ and $\{y_j\} \sim \nu$.
Training is performed in a mini-batch manner: at each iteration, we sample batches from $\mu$ and $\nu$, compute a transport plan $\gamma$ between the two batches, and update $\theta$ using the corresponding batch-wise objective. We consider several transport plans, including $\mSTP$, entropic optimal transport (Sinkhorn) \cite{cuturi2013sinkhorn}, Expected Transport Plan (EST) \cite{liu2025expected}, Progressive Entropic OT (ProgOT) \cite{kassraie2024progressive}, and Low-Rank OT (LOT) \cite{scetbon2021low}. Since a transport plan is recomputed at every training iteration, this setting requires repeatedly solving transport problems on different mini-batches. Among the methods considered, only $\mSTP$ admits transferable transport structure across batches, whereas the other approaches must be recomputed independently for each batch. For each method, we evaluate the resulting trained mapping
$T_{\theta^*}$ using the $2$--Wasserstein distance
$W_2\big( (T_{\theta^*})_\# \mu, \nu \big)$.
The Adult-to-Child translation results are reported in Figure~\ref{fig:w2_vs_time} and Figure~\ref{fig:alae_plots}.



\section{Conclusion}

We studied the transferability of optimized slicers in the min-Sliced Transport Plan ($\mSTP$) framework. Leveraging smoothing via Laplace distributions, our theoretical results establish that optimal slicers are stable under small distributional shifts, hence enabling transfer across related source-target tasks. To improve scalability, we also introduced a mini-batch formulation of $\mSTP$ and provided guarantees for its approximation accuracy. We empirically showed that transferred slicers maintain strong performance in point cloud alignment, flow-based generative modeling, and image translation. Our work highlights the promise of geometry-aware transport methods that generalize across domains and opens new avenues for scalable, transferable OT in dynamic, multi-domain settings.

\bibliography{example_paper}

\begin{thebibliography}{10}

\bibitem{akbari2026transportbasedmeanflows}
Elaheh Akbari, Ping He, Ahmadreza Moradipari, Yikun Bai, and Soheil Kolouri.
\newblock Transport based mean flows for generative modeling, 2026.

\bibitem{aliprantis2006infinite}
Charalambos~D Aliprantis and Kim~C Border.
\newblock {\em Infinite dimensional analysis: a hitchhiker’s guide}.
\newblock Springer, 2006.

\bibitem{altschuler2017near}
Jason Altschuler, Jonathan Weed, and Philippe Rigollet.
\newblock Near-linear time approximation algorithms for optimal transport via sinkhorn iteration.
\newblock {\em Advances in Neural Information Processing Systems}, 2017.

\bibitem{arjovsky2017wasserstein}
Martin Arjovsky, Soumith Chintala, and L{\'e}on Bottou.
\newblock {Wasserstein} generative adversarial networks.
\newblock In {\em International conference on machine learning}, pages 214--223. PMLR, 2017.

\bibitem{bai2022sliced}
Yikun Bai, Bernhard Schmitzer, Matthew Thorpe, and Soheil Kolouri.
\newblock Sliced optimal partial transport.
\newblock In {\em Proceedings of the IEEE/CVF Conference on Computer Vision and Pattern Recognition}, pages 13681--13690, 2023.

\bibitem{bonneel2015sliced}
Nicolas Bonneel, Julien Rabin, Gabriel Peyr{\'e}, and Hanspeter Pfister.
\newblock Sliced and {Radon} {Wasserstein} barycenters of measures.
\newblock {\em Journal of Mathematical Imaging and Vision}, 51(1):22--45, 2015.

\bibitem{bonnotte2013unidimensional}
Nicolas Bonnotte.
\newblock {\em Unidimensional and evolution methods for optimal transportation}.
\newblock PhD thesis, Universit{\'e} Paris Sud-Paris XI; Scuola normale superiore (Pise, Italie), 2013.

\bibitem{chang2015shapenet}
Angel~X. Chang, Thomas Funkhouser, Leonidas Guibas, Pat Hanrahan, Qixing Huang, Zimo Li, Silvio Savarese, Manolis Savva, Shuran Song, Hao Su, Jianxiong Xiao, Li~Yi, and Fisher Yu.
\newblock Shapenet: An information-rich 3d model repository.
\newblock {\em arXiv preprint arXiv:1512.03012}, 2015.

\bibitem{chapel2025differentiable}
Laetitia Chapel, Romain Tavenard, and Samuel Vaiter.
\newblock Differentiable generalized sliced {Wasserstein} plans.
\newblock In {\em The Thirty-ninth Annual Conference on Neural Information Processing Systems}, 2025.

\bibitem{chen2020augmented}
Xiongjie Chen, Yongxin Yang, and Yunpeng Li.
\newblock Augmented sliced wasserstein distances.
\newblock {\em arXiv preprint arXiv:2006.08812}, 2020.

\bibitem{courty2016optimal}
Nicolas Courty, R{\'e}mi Flamary, Devis Tuia, and Alain Rakotomamonjy.
\newblock Optimal transport for domain adaptation.
\newblock {\em IEEE transactions on pattern analysis and machine intelligence}, 39(9):1853--1865, 2016.

\bibitem{cuturi2013sinkhorn}
Marco Cuturi.
\newblock Sinkhorn distances: Lightspeed computation of optimal transport.
\newblock {\em Advances in neural information processing systems}, 26:2292--2300, 2013.

\bibitem{damodaran2018deepjdot}
Bharath~Bhushan Damodaran, Benjamin Kellenberger, R{\'e}mi Flamary, Devis Tuia, and Nicolas Courty.
\newblock Deepjdot: Deep joint distribution optimal transport for unsupervised domain adaptation.
\newblock In {\em Proceedings of the European conference on computer vision (ECCV)}, pages 447--463, 2018.

\bibitem{geng2025meanflows}
Zhengyang Geng, Mingyang Deng, Xingjian Bai, J.~Zico Kolter, and Kaiming He.
\newblock Mean flows for one-step generative modeling.
\newblock {\em arXiv preprint arXiv:2505.13447}, 2025.

\bibitem{gerber2017multiscale}
Samuel Gerber and Mauro Maggioni.
\newblock Multiscale strategies for computing optimal transport.
\newblock {\em Journal of Machine Learning Research}, 18(72):1--32, 2017.

\bibitem{gulrajani2017improved}
Ishaan Gulrajani, Faruk Ahmed, Martin Arjovsky, Vincent Dumoulin, and Aaron~C Courville.
\newblock Improved training of wasserstein gans.
\newblock {\em Advances in neural information processing systems}, 30, 2017.

\bibitem{guo2022online}
Wenxuan Guo, YoonHaeng Hur, Tengyuan Liang, and Chris Ryan.
\newblock Online learning to transport via the minimal selection principle.
\newblock In {\em Conference on Learning Theory}, pages 4085--4109. PMLR, 2022.

\bibitem{halmos2025hierarchical}
Peter Halmos, Julian Gold, Xinhao Liu, and Benjamin Raphael.
\newblock Hierarchical refinement: Optimal transport to infinity and beyond.
\newblock In {\em Forty-second International Conference on Machine Learning}, 2025.

\bibitem{haviv2024wasserstein}
Doron Haviv, Russell~Zhang Kunes, Thomas Dougherty, Cassandra Burdziak, Tal Nawy, Anna Gilbert, and Dana Pe’Er.
\newblock Wasserstein wormhole: Scalable optimal transport distance with transformers.
\newblock {\em ArXiv}, pages arXiv--2404, 2024.

\bibitem{huynh2020otlda}
Viet Huynh, He~Zhao, and Dinh Phung.
\newblock Otlda: A geometry-aware optimal transport approach for topic modeling.
\newblock {\em Advances in Neural Information Processing Systems}, 33:18573--18582, 2020.

\bibitem{karras2019style}
Tero Karras, Samuli Laine, and Timo Aila.
\newblock A style-based generator architecture for generative adversarial networks.
\newblock In {\em Proceedings of the IEEE/CVF conference on computer vision and pattern recognition}, pages 4401--4410, 2019.

\bibitem{kassraie2024progressive}
Parnian Kassraie, Aram-Alexandre Pooladian, Michal Klein, James Thornton, Jonathan Niles-Weed, and Marco Cuturi.
\newblock Progressive entropic optimal transport solvers.
\newblock {\em Advances in Neural Information Processing Systems}, 37:19561--19590, 2024.

\bibitem{khamis2024scalable}
Abdelwahed Khamis, Russell Tsuchida, Mohamed Tarek, Vivien Rolland, and Lars Petersson.
\newblock Scalable optimal transport methods in machine learning: A contemporary survey.
\newblock {\em IEEE transactions on pattern analysis and machine intelligence}, 2024.

\bibitem{kolouri2022generalized}
Soheil Kolouri, Kimia Nadjahi, Shahin Shahrampour, and Umut {\c{S}}im{\c{s}}ekli.
\newblock Generalized sliced probability metrics.
\newblock In {\em ICASSP 2022-2022 IEEE International Conference on Acoustics, Speech and Signal Processing (ICASSP)}, pages 4513--4517. IEEE, 2022.

\bibitem{kolouri2019generalized}
Soheil Kolouri, Kimia Nadjahi, Umut Simsekli, Roland Badeau, and Gustavo Rohde.
\newblock Generalized sliced {W}asserstein distances.
\newblock In H.~Wallach, H.~Larochelle, A.~Beygelzimer, F.~d\textquotesingle Alch\'{e}-Buc, E.~Fox, and R.~Garnett, editors, {\em Advances in Neural Information Processing Systems}, volume~32. Curran Associates, Inc., 2019.

\bibitem{kornilov2024optimal}
Nikita Kornilov, Petr Mokrov, Alexander Gasnikov, and Alexander Korotin.
\newblock Optimal flow matching: Learning straight trajectories in just one step.
\newblock In A.~Globerson, L.~Mackey, D.~Belgrave, A.~Fan, U.~Paquet, J.~Tomczak, and C.~Zhang, editors, {\em Advances in Neural Information Processing Systems}, volume~37, pages 104180--104204. Curran Associates, Inc., 2024.

\bibitem{lee2002memory}
Albert~K Lee and Matthew~A Wilson.
\newblock Memory of sequential experience in the hippocampus during slow wave sleep.
\newblock {\em Neuron}, 36(6):1183--1194, 2002.

\bibitem{lee2019set}
Juho Lee, Yoonho Lee, Jungtaek Kim, Adam Kosiorek, Seungjin Choi, and Yee~Whye Teh.
\newblock Set transformer: A framework for attention-based permutation-invariant neural networks.
\newblock In {\em International conference on machine learning}, pages 3744--3753. PMLR, 2019.

\bibitem{liu2025expected}
Xinran Liu, Rocio~Diaz Martin, Yikun Bai, Ashkan Shahbazi, Matthew Thorpe, Akram Aldroubi, and Soheil Kolouri.
\newblock Expected sliced transport plans.
\newblock In {\em The Thirteenth International Conference on Learning Representations}, 2025.

\bibitem{mahey2023fast}
Guillaume Mahey, Laetitia Chapel, Gilles Gasso, Cl{\'e}ment Bonet, and Nicolas Courty.
\newblock Fast optimal transport through sliced generalized wasserstein geodesics.
\newblock {\em Advances in Neural Information Processing Systems}, 36:35350--35385, 2023.

\bibitem{nadjahi2020statistical}
Kimia Nadjahi, Alain Durmus, L{\'e}na{\"\i}c Chizat, Soheil Kolouri, Shahin Shahrampour, and Umut Simsekli.
\newblock Statistical and topological properties of sliced probability divergences.
\newblock {\em Advances in Neural Information Processing Systems}, 33:20802--20812, 2020.

\bibitem{nguyen2023energybased}
Khai Nguyen and Nhat Ho.
\newblock Energy-based sliced wasserstein distance.
\newblock In {\em Thirty-seventh Conference on Neural Information Processing Systems}, 2023.

\bibitem{nguyen2021distributional}
Khai Nguyen, Nhat Ho, Tung Pham, and Hung Bui.
\newblock Distributional sliced-wasserstein and applications to generative modeling.
\newblock In {\em International Conference on Learning Representations}, 2021.

\bibitem{nguyen2024sliced}
Khai Nguyen, Shujian Zhang, Tam Le, and Nhat Ho.
\newblock Sliced wasserstein with random-path projecting directions.
\newblock In {\em International Conference on Machine Learning}, pages 37879--37899. PMLR, 2024.

\bibitem{peyre2019computational}
Gabriel Peyr{\'e} and Marco Cuturi.
\newblock Computational optimal transport: With applications to data science.
\newblock {\em Foundations and Trends in Machine Learning}, 11(5-6):355--607, 2019.

\bibitem{pidhorskyi2020adversarial}
Stanislav Pidhorskyi, Donald~A Adjeroh, and Gianfranco Doretto.
\newblock Adversarial latent autoencoders.
\newblock In {\em Proceedings of the IEEE/CVF conference on computer vision and pattern recognition}, pages 14104--14113, 2020.

\bibitem{rabin2011wasserstein}
Julien Rabin, Gabriel Peyr{\'e}, Julie Delon, and Marc Bernot.
\newblock Wasserstein barycenter and its application to texture mixing.
\newblock In {\em International Conference on Scale Space and Variational Methods in Computer Vision}, pages 435--446. Springer, 2011.

\bibitem{rowland2019orthogonal}
Mark Rowland, Jiri Hron, Yunhao Tang, Krzysztof Choromanski, Tamas Sarlos, and Adrian Weller.
\newblock Orthogonal estimation of wasserstein distances.
\newblock In {\em The 22nd International Conference on Artificial Intelligence and Statistics}, pages 186--195. PMLR, 2019.

\bibitem{saleh2022bending}
Mahdi Saleh, Shun-Cheng Wu, Luca Cosmo, Nassir Navab, Benjamin Busam, and Federico Tombari.
\newblock Bending graphs: Hierarchical shape matching using gated optimal transport.
\newblock In {\em Proceedings of the IEEE/CVF Conference on Computer Vision and Pattern Recognition}, pages 11757--11767, 2022.

\bibitem{scetbonlow}
Meyer Scetbon and Marco Cuturi.
\newblock Low-rank optimal transport: Approximation, statistics and debiasing.
\newblock {\em Advances in Neural Information Processing Systems}, 35:6802--6814, 2022.

\bibitem{scetbon2021low}
Meyer Scetbon, Marco Cuturi, and Gabriel Peyr{\'e}.
\newblock Low-rank sinkhorn factorization.
\newblock In {\em International Conference on Machine Learning}, pages 9344--9354. PMLR, 2021.

\bibitem{schiebinger2019optimal}
Geoffrey Schiebinger, Jian Shu, Marcin Tabaka, Brian Cleary, Vidya Subramanian, Aryeh Solomon, Joshua Gould, Siyan Liu, Stacie Lin, Peter Berube, et~al.
\newblock Optimal-transport analysis of single-cell gene expression identifies developmental trajectories in reprogramming.
\newblock {\em Cell}, 176(4):928--943, 2019.

\bibitem{schmitzer2016sparse}
Bernhard Schmitzer.
\newblock A sparse multiscale algorithm for dense optimal transport.
\newblock {\em Journal of Mathematical Imaging and Vision}, 56(2):238--259, 2016.

\bibitem{shen2021accurate}
Zhengyang Shen, Jean Feydy, Peirong Liu, Ariel~H Curiale, Ruben San Jose~Estepar, Raul San Jose~Estepar, and Marc Niethammer.
\newblock Accurate point cloud registration with robust optimal transport.
\newblock {\em Advances in Neural Information Processing Systems}, 34:5373--5389, 2021.

\bibitem{struski2025lapsum}
Lukasz Struski, Michal~B. Bednarczyk, Igor~T. Podolak, and Jacek Tabor.
\newblock Lapsum - one method to differentiate them all: Ranking, sorting and top-k selection.
\newblock In {\em Forty-second International Conference on Machine Learning}, 2025.

\bibitem{tanguy2025sliced}
Eloi Tanguy, Laetitia Chapel, and Julie Delon.
\newblock Sliced optimal transport plans.
\newblock {\em arXiv preprint arXiv:2508.01243}, 2025.

\bibitem{tong2024improving}
Alexander Tong, Kilian FATRAS, Nikolay Malkin, Guillaume Huguet, Yanlei Zhang, Jarrid Rector-Brooks, Guy Wolf, and Yoshua Bengio.
\newblock Improving and generalizing flow-based generative models with minibatch optimal transport.
\newblock {\em Transactions on Machine Learning Research}, 2024.
\newblock Expert Certification.

\bibitem{Villani2009Optimal}
Cedric Villani.
\newblock {\em Optimal transport: old and new}.
\newblock Springer, 2009.

\bibitem{wang2013linear}
Wei Wang, Dejan Slep{\v{c}}ev, Saurav Basu, John~A Ozolek, and Gustavo~K Rohde.
\newblock A linear optimal transportation framework for quantifying and visualizing variations in sets of images.
\newblock {\em International journal of computer vision}, 101(2):254--269, 2013.

\bibitem{wu20153d}
Zhirong Wu, Shuran Song, Aditya Khosla, Fisher Yu, Linguang Zhang, Xiaoou Tang, and Jianxiong Xiao.
\newblock 3d shapenets: A deep representation for volumetric shapes.
\newblock In {\em Proceedings of the IEEE conference on computer vision and pattern recognition}, pages 1912--1920, 2015.

\bibitem{yang2020predicting}
Karren~Dai Yang, Karthik Damodaran, Saradha Venkatachalapathy, Ali~C Soylemezoglu, GV~Shivashankar, and Caroline Uhler.
\newblock Predicting cell lineages using autoencoders and optimal transport.
\newblock {\em PLoS computational biology}, 16(4):e1007828, 2020.

\bibitem{yurochkin2019hierarchical}
Mikhail Yurochkin, Sebastian Claici, Edward Chien, Farzaneh Mirzazadeh, and Justin~M Solomon.
\newblock Hierarchical optimal transport for document representation.
\newblock {\em Advances in neural information processing systems}, 32, 2019.

\end{thebibliography}
\bibliographystyle{plain}

\newpage
\appendix

\section{Transferability}
\label{sec:transfer}


\noindent\textbf{1. Model assumption and properties.}

Under Assumption \ref{asmp:model}, we take $\mathcal{X}\subset \mathbb{R}^d$ to be a compact metric space. Since $\mathcal{X}$ is compact, all probability measures on $\mathcal{X}$ automatically have finite 
$p$-th moments. Therefore,
$$\mathcal{P}_p(\mathcal{X})=\mathcal{P}(\mathcal{X}).$$

\begin{lemma}
\label{rem: precompact} Consider the family $\mathcal{F}$ defined over the compact metric space $(\mathcal{X}\subset \mathbb R^d,d)$ under Assumption \ref{asmp:model}. Then $\mathcal F$ is compact in $(C(\mathcal X),\|\cdot\|_\infty)$.
\end{lemma}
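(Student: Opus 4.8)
The plan is to invoke the Arzel\`a--Ascoli theorem: one checks that $\mathcal F$ is uniformly bounded and equicontinuous, which yields relative compactness in $(C(\mathcal X),\|\cdot\|_\infty)$, and then separately verifies that $\mathcal F$ is closed, so that relative compactness upgrades to genuine compactness.

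First I would check the two Arzel\`a--Ascoli hypotheses. Uniform boundedness is immediate, since every $f\in\mathcal F$ satisfies $\|f\|_\infty\le M$ by Assumption~\ref{asmp:model}. (This explicit bound is genuinely needed: each Lipschitz function on the compact set $\mathcal X$ is automatically bounded, but the subfamily of all constant functions already shows that a common Lipschitz constant alone does not force a uniformly bounded family.) Equicontinuity follows directly from the shared Lipschitz constant: given $\epsilon>0$, put $\delta=\epsilon/L$; then for every $f\in\mathcal F$ and all $x,y\in\mathcal X$ with $d(x,y)<\delta$ we have $|f(x)-f(y)|\le L\,d(x,y)<\epsilon$, and $\delta$ does not depend on $f$. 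Since $(\mathcal X,d)$ is a compact metric space, Arzel\`a--Ascoli then gives that $\mathcal F$ is relatively compact in $(C(\mathcal X),\|\cdot\|_\infty)$, i.e. $\overline{\mathcal F}$ is compact.

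Second, I would show $\mathcal F$ is closed in $(C(\mathcal X),\|\cdot\|_\infty)$, whence $\overline{\mathcal F}=\mathcal F$. Let $f_n\in\mathcal F$ with $f_n\to f$ uniformly; uniform convergence implies pointwise convergence, so in particular $f$ is continuous. Passing to the limit in $\|f_n\|_\infty\le M$ gives $\|f\|_\infty\le M$. For the Lipschitz bound, fix $x,y\in\mathcal X$: since $f_n(x)\to f(x)$ and $f_n(y)\to f(y)$, we obtain $|f(x)-f(y)|=\lim_{n}|f_n(x)-f_n(y)|\le L\,d(x,y)$, so $\mathrm{Lip}(f)\le L$. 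Hence $f\in\mathcal F$. A closed subset of the compact set $\overline{\mathcal F}$ is itself compact, which proves the lemma.

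There is no substantial obstacle here; the only point that warrants care is recognizing that Arzel\`a--Ascoli delivers \emph{relative} compactness only, so one must separately verify that $\mathcal F$ is closed — and it is precisely in that step that both the explicit sup bound $M$ and the passage to pointwise limits for the Lipschitz constant are used.
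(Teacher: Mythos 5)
Your proof is correct and follows essentially the same route as the paper's: verify uniform boundedness and equicontinuity, apply Arzelà--Ascoli to get precompactness, and then check closedness under uniform limits (both the bound $M$ and the Lipschitz constant $L$ pass to the limit). Your writeup is in fact slightly more careful than the paper's — the paper's final sentence asserts compactness "since $C(\mathcal X)$ is compact under the uniform topology," which is a misstatement ($C(\mathcal X)$ is not compact); the correct conclusion, which you state, is that a closed subset of the compact closure $\overline{\mathcal F}$ is compact.
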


\begin{proof}
For any $\varepsilon>0$, choose $\delta:=\varepsilon/L$. If $d(x, y)<\delta$, then for all $f\in\mathcal F$, we have $|f(x)-f(y)|\le Ld(x, y)<\varepsilon$. By uniform boundedness, $\|f\|_\infty<M$ for all $f\in\mathcal{F}$.
For each $x\in\mathcal X$, the set $\{f(x):f\in\mathcal F\}\subset[-M,M]$ is compact.
By Arzel\`a--Ascoli theorem on the compact space $\mathcal X$, $\mathcal F$ is precompact in $\|\cdot\|_\infty$ (i.e., its closure is compact in the ambient space). Moreover, $\mathcal F$ is closed as uniform limits preserve the Lipschitz constant $L$ and the sup bound $M$. Therefore, $\mathcal{F}$ is compact since $C(\mathcal X)$ is complete under the uniform topology (due to the fact that $\mathcal X$ is compact). 
\end{proof}

\noindent\textbf{2. Perturbed version of STP.}

We recall that a 1D random variable has a 
Laplace distribution $\mathsf{Lap}_1(0,b)$ if its probability density function is
$\frac{1}{2b}e^{-\frac{|x|}{b}}$.
For the symmetric multivariate Laplace distribution $\mathsf{Lap}_d(0,\Sigma)$ in $\R^d$, where $\Sigma\in R^{d\times d}$ is a symmetric positive definite matrix, a typical characterization is given through its characteristic function: $\big(1+\tfrac12\,\,t^\top\Sigma t\big)^{-1}$, $t\in \R^d$.

\begin{lemma}[Noisy perturbations are Laplace]\label{rem:proj-lap}
Let $\xi\sim\mathsf{Lap}_d(\eta^2\Sigma)$.
For any fixed $x\in\mathbb{R}^d$,
\[
\langle \xi, x\rangle \ \sim\ \mathsf{Lap}_1\bigl(0,\ b_x\bigr),
\qquad
b_x=\eta\,\sqrt{\tfrac{1}{2}\,x^\top\Sigma x}.
\]
In particular, with $\Sigma=2I_d$ one gets
$\langle \xi,x\rangle\sim\mathsf{Lap}_1(0,\eta\|x\|_2)$.
\end{lemma}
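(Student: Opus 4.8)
The plan is to argue entirely at the level of characteristic functions, exploiting the closed form recalled just above the statement. By definition, $\mathsf{Lap}_d(0,\Sigma)$ has characteristic function $t\mapsto\big(1+\tfrac12 t^\top\Sigma t\big)^{-1}$ for $t\in\mathbb{R}^d$, so the rescaled law $\xi\sim\mathsf{Lap}_d(\eta^2\Sigma)$ (covariance $\eta^2\Sigma$) has characteristic function
\[
\varphi_\xi(t)\ =\ \Big(1+\tfrac12\,\eta^2\, t^\top\Sigma t\Big)^{-1},\qquad t\in\mathbb{R}^d.
\]
First I would fix $x\in\mathbb{R}^d$ and compute, for $s\in\mathbb{R}$,
\[
\mathbb{E}\big[e^{is\langle\xi,x\rangle}\big]
\ =\ \mathbb{E}\big[e^{i\langle\xi,\,sx\rangle}\big]
\ =\ \varphi_\xi(sx)
\ =\ \Big(1+\tfrac12\,\eta^2 s^2\, x^\top\Sigma x\Big)^{-1}.
\]

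Next I would recall the one-dimensional fact that a random variable with density $\tfrac{1}{2b}e^{-|u|/b}$, i.e. distributed as $\mathsf{Lap}_1(0,b)$, has characteristic function $s\mapsto(1+b^2 s^2)^{-1}$ (a direct integral computation, or a standard table entry). Comparing the two expressions, the characteristic function of $\langle\xi,x\rangle$ coincides with that of $\mathsf{Lap}_1(0,b_x)$ precisely when $b_x^2=\tfrac12\eta^2 x^\top\Sigma x$, that is $b_x=\eta\sqrt{\tfrac12\,x^\top\Sigma x}$ (well-defined and nonnegative since $\Sigma$ is symmetric positive definite). By the uniqueness theorem for characteristic functions, this identity of characteristic functions forces $\langle\xi,x\rangle\sim\mathsf{Lap}_1(0,b_x)$, which is the claim. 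The special case $\Sigma=2I_d$ then gives $b_x=\eta\sqrt{\tfrac12\cdot 2\|x\|_2^2}=\eta\|x\|_2$.

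I do not anticipate a genuine obstacle here; the only points requiring a little care are bookkeeping ones. One should state clearly the convention that $\mathsf{Lap}_d(\eta^2\Sigma)$ refers to the law with covariance $\eta^2\Sigma$, so that the characteristic function recalled in the excerpt scales as $t^\top\Sigma t\mapsto \eta^2 t^\top\Sigma t$; and one should handle the degenerate direction $x=0$ by noting $\langle\xi,0\rangle\equiv 0$, which is consistently read as $\mathsf{Lap}_1(0,0)=\delta_0$. Everything else is the substitution $t=sx$ and the uniqueness theorem.
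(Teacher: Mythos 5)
Your proof is correct and follows essentially the same route as the paper: both compute the characteristic function of $\langle\xi,x\rangle$ by evaluating $\varphi_\xi$ at $sx$ and match the resulting expression $\bigl(1+\tfrac12\eta^2 s^2\,x^\top\Sigma x\bigr)^{-1}$ against the univariate Laplace characteristic function $(1+b^2s^2)^{-1}$. Your extra remarks about uniqueness of characteristic functions and the degenerate case $x=0$ are sound but not needed beyond what the paper already implicitly assumes.
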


\begin{proof}
The characteristic function of $\xi$ is
$\phi_\xi(t)=\big(1+\tfrac12\,\eta^2\,t^\top\Sigma t\big)^{-1}$, $t\in\mathbb{R}^d$.
For $Y:=\langle\xi,x\rangle$, $\phi_Y(t)=\phi_\xi(tx)
=\big(1+\tfrac12\,\eta^2\,t^2\,x^\top\Sigma x\big)^{-1}$,
which is the characteristic function of a univariate Laplace with 0 mean and scale
$b_x=\eta\sqrt{(x^\top\Sigma x)/2}$. 
\end{proof}

\begin{lemma}[Injectivity of projection]\label{lem:ae-inj}
Fix $\eta>0$, let $\mu\in\mathcal P(\mathcal X)$ be purely atomic with at most countable support,i.e. $\mu=\sum_{i\in I} a_i\,\delta_{x_i}$, $a_i>0$, $\sum_{i\in I} a_i=1,$ where $I$ is finite or countable and the support $\{x_i\}_{i\in I}\subset \mathcal X$. Let $f\in\mathcal F$, and let $g_{\xi_\eta}$ be as in Definition 3.1. Then for $\mathsf{Lap}_d(\eta^2\Sigma)$-a.e. $\xi_\eta$, the map $g_{\xi_\eta}$ is injective on $supp(\mu)=\{x_i:i\in I\}$. In particular, for $\mathsf{Lap}_d(\eta^2\Sigma)$-a.e. $\xi_\eta$, there exists a Borel set $A_{\xi_\eta}\subseteq \mathcal X$ with $\mu(A_{\xi_\eta})=1$ such that $g_{\xi_\eta}|_{A_{\xi_\eta}}:A_{\xi_\eta}\to g_{\xi_\eta}(A_{\xi_\eta})$ is injective.
\end{lemma}

\begin{proof}
For any two distinct support points $x_i\neq x_j$,
$$
g_{\xi_\eta}(x_i)=g_{\xi_\eta}(x_j) \iff \langle \xi_\eta, x_i-x_j\rangle = f(x_j)-f(x_i),
$$
which defines an affine hyperplane in $\mathbb R^d$ for $\xi_\eta$. Since the Laplace perturbation law is absolutely continuous with respect to Lebesgue measure, each such event has probability zero. Since there are only countably many distinct pairs of support points, the union of all such events still has probability zero. Hence, for almost every perturbation $\xi_\eta$, the values $\{g_{\xi_\eta}(x_i)\}_{i\in I}$ are all distinct, so $g_{\xi_\eta}$ is injective on $\operatorname{supp}(\mu)$. Taking $A_{\xi_\eta}=\operatorname{supp}(\mu),$ we obtain a Borel full-$\mu$-measure set on which $g_{\xi_\eta}$ is injective.

\end{proof}

\textbf{Unless stated otherwise, we assume the probability measures are discrete with finite, or countable \& bounded support in all of the following results.}

\begin{proposition}[Uniqueness of the lift]\label{prop:unique-lift}
Fix $\eta>0$, $\mu,\nu\in\mathcal P(\mathcal X)$ and $f\in\mathcal F$.
Fix $\xi_\eta$, $g_{\xi_\eta}$ as in Definition~\ref{def:lin-laplace} and set
\[
\alpha_{\xi_\eta}:=(g_{\xi_\eta})_\#\mu,\qquad \beta_{\xi_\eta}:=(g_{\xi_\eta})_\#\nu.
\]
Then for every $\sigma\in\Gamma(\alpha_{\xi_\eta},\beta_{\xi_\eta})$,
if $\tilde\gamma,\hat\gamma\in\Gamma(\mu,\nu)$ satisfy
\[
(g_{\xi_\eta},g_{\xi_\eta})_{\#}\tilde\gamma\ =\ \sigma\ =\ (g_{\xi_\eta},g_{\xi_\eta})_{\#}\hat\gamma,
\]
then $\tilde\gamma=\hat\gamma$.
\end{proposition}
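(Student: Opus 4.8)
The plan is to reduce the statement to the almost-everywhere injectivity of $g_{\xi_\eta}$ established in Lemma~\ref{lem:ae-inj} and then to show that, on a set of full $\gamma$-measure, the map $(g_{\xi_\eta},g_{\xi_\eta})$ can be inverted in a Borel-measurable way that does not depend on the particular coupling. Concretely, apply Lemma~\ref{lem:ae-inj} once to $\mu$ and once to $\nu$ to obtain Borel sets $A,B\subseteq\mathcal X$ with $\mu(A)=\nu(B)=1$ such that $g_{\xi_\eta}|_A$ and $g_{\xi_\eta}|_B$ are injective; the key point is that $A$ and $B$ depend only on $g_{\xi_\eta},\mu,\nu$, not on any transport plan. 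For any $\gamma\in\Gamma(\mu,\nu)$ the marginal constraints force $\gamma\big((A\times B)^c\big)\le\mu(A^c)+\nu(B^c)=0$, so $\gamma$ is concentrated on $A\times B$.

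Next I would observe that $(g_{\xi_\eta},g_{\xi_\eta})$ restricted to $A\times B$ is a Borel injection into $\mathbb{R}^2$: if $(g_{\xi_\eta}(x),g_{\xi_\eta}(y))=(g_{\xi_\eta}(x'),g_{\xi_\eta}(y'))$ with $x,x'\in A$ and $y,y'\in B$, then componentwise injectivity gives $x=x'$ and $y=y'$. Since $\mathcal X$ (a compact subset of $\mathbb{R}^d$) and $\mathbb{R}$ are standard Borel spaces, the Lusin–Souslin theorem guarantees that the image $Z:=(g_{\xi_\eta},g_{\xi_\eta})(A\times B)$ is Borel and that there is a Borel inverse $\Psi\colon Z\to A\times B$ with $\Psi\circ(g_{\xi_\eta},g_{\xi_\eta})=\mathrm{id}$ on $A\times B$. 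Because $\gamma$ lives on $A\times B$, the measure $\sigma=(g_{\xi_\eta},g_{\xi_\eta})_{\#}\gamma$ is concentrated on $Z$, so $\Psi_{\#}\sigma$ is well defined and
\[
\gamma=\big(\Psi\circ(g_{\xi_\eta},g_{\xi_\eta})\big)_{\#}\gamma=\Psi_{\#}\big((g_{\xi_\eta},g_{\xi_\eta})_{\#}\gamma\big)=\Psi_{\#}\sigma .
\]
The right-hand side depends only on $\sigma$ and the fixed data $(g_{\xi_\eta},A,B)$; applying the identity to both $\tilde\gamma$ and $\hat\gamma$, which by hypothesis push forward to the same $\sigma$, yields $\tilde\gamma=\Psi_{\#}\sigma=\hat\gamma$.

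I expect the main obstacle to be the measure-theoretic bookkeeping: invoking Lusin–Souslin (equivalently Kuratowski's theorem on injective Borel maps between Polish spaces) to get a \emph{Borel} image $Z$ and a \emph{Borel} inverse $\Psi$, and then checking carefully that each pushforward identity is an identity of measures supported on the appropriate sets, so that the composition of pushforwards is legitimate. A slightly more hands-on alternative, parallel to the argument already used in the proof of Lemma~\ref{lem:ae-inj}, is to disintegrate $\gamma$ with respect to $(g_{\xi_\eta},g_{\xi_\eta})$ over $\sigma$: on $A\times B$ each fiber of $(g_{\xi_\eta},g_{\xi_\eta})$ contains at most one point, so the conditional measures are Dirac masses determined by $g_{\xi_\eta}$ alone, which again pins $\gamma$ down uniquely in terms of $\sigma$. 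Either route makes precise the underlying fact that injectivity on a full-measure set turns the lift into a deterministic function of the 1D plan $\sigma$.
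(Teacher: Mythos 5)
Your proposal is correct and follows essentially the same route as the paper's proof: both restrict to the full–measure rectangle $A\times B$ furnished by Lemma~\ref{lem:ae-inj}, invoke Lusin--Souslin to obtain a Borel inverse of $(g_{\xi_\eta},g_{\xi_\eta})$ there, and then recover $\gamma$ deterministically from $\sigma$ via that inverse. The only cosmetic difference is that you package the argument as the single pushforward identity $\gamma=\Psi_{\#}\sigma$, whereas the paper verifies the equality $\tilde\gamma(E)=\sigma\bigl((g_{\xi_\eta},g_{\xi_\eta})(E)\bigr)=\hat\gamma(E)$ set-by-set on Borel $E\subset A\times B$; the disintegration alternative you sketch is also sound but is not the route taken by the paper for this proposition.
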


\begin{proof}
Let $A_{\xi_\eta},B_{\xi_\eta}$ 
be Borel sets as in Lemma~\ref{lem:ae-inj} with $\mu(A_{\xi_\eta})=1=\nu(B_{\xi_\eta})$ so that $g_{\xi_\eta}|_{A_{\xi_\eta}}:A_{\xi_\eta}\!\to g_{\xi_\eta}(A_{\xi_\eta})$ and $g_{\xi_\eta}|_{B_{\xi_\eta}}:B_{\xi_\eta}\!\to g_{\xi_\eta}(B_{\xi_\eta})$ 
are bijections with inverses $j^{g_{\xi_\eta},\mu},j^{g_{\xi_\eta},\nu}$, respectively. By Lusin–Souslin theorem, such inverses are Borel measurable functions.

First note $\tilde\gamma(A_{\xi_\eta}\times B_{\xi_\eta})=1=\hat\gamma(A_{\xi_\eta}\times B_{\xi_\eta})$
because $\mu(A_{\xi_\eta})=\nu(B_{\xi_\eta})=1$.
Now, for any Borel $E\subset A_{\xi_\eta}\times B_{\xi_\eta}$, the map
\[
(g_{\xi_\eta},g_{\xi_\eta})|_{A_{\xi_\eta}\times B_{\xi_\eta}}:\ A_{\xi_\eta}\times B_{\xi_\eta}\ \longrightarrow\ g_{\xi_\eta}(A_{\xi_\eta})\times g_{\xi_\eta}(B_{\xi_\eta})
\]
is a Borel bijection with Borel inverse $\Phi:=(j^{g_{\xi_\eta},\mu},j^{g_{\xi_\eta},\nu})$.
Hence
\begin{align*}
\tilde\gamma(E) &=\tilde\gamma\big(\Phi\big((g_{\xi_\eta},g_{\xi_\eta})(E)\big)\big)=\sigma\big((g_{\xi_\eta},g_{\xi_\eta})(E)\big)\\
&=\hat\gamma\big(\Phi\big((g_{\xi_\eta},g_{\xi_\eta})(E)\big)\big)=\hat\gamma(E),
\end{align*}
where we used $(g_{\xi_\eta},g_{\xi_\eta})_{\#}\tilde\gamma=\sigma=(g_{\xi_\eta},g_{\xi_\eta})_{\#}\hat\gamma$.
Therefore, $\tilde\gamma$ and $\hat\gamma$ coincide on $A_{\xi_\eta}\times B_{\xi_\eta}$. Since $\tilde\gamma(A_{\xi_\eta}\times B_{\xi_\eta})=\hat\gamma(A_{\xi_\eta}\times B_{\xi_\eta})=1$, for any bounded Borel function $\varphi$ on $\mathcal X^2$,
\begin{align*}
\int\varphi\,\mathrm d\tilde\gamma
&=\int\varphi\,\mathbf 1_{A_{\xi_\eta}\times B_{\xi_\eta}}\,\mathrm d\tilde\gamma\\
&=\int\varphi\,\mathbf 1_{A_{\xi_\eta}\times B_{\xi_\eta}}\,\mathrm d\hat\gamma
=\int\varphi\,\mathrm d\hat\gamma.
\end{align*}
Thus, we conclude
$\tilde\gamma=\hat\gamma$.
\end{proof}

\noindent \textbf{Notation}: From now on, we will assume $$c(x,y)=\|x-y\|_p,$$ and for $\mu,\nu\in \mathcal{P}(\mathcal X)$ we write
$OT_p(\mu,\nu)=W_p(\mu,\nu)$ to refer to the $p$-Wasserstein
 distance defined in \ref{eq:ot}.

\begin{lemma}\label{lem:basic} For $f, g\in\mathcal{F}$ and $\kappa, \kappa'\in\mathcal{P}(\mathcal{X})$,
\begin{enumerate}
    \item[(1)] \(W_p\big(f_\#\kappa,\;f_\#\kappa'\big)\;\le\;L\,W_p(\kappa,\kappa')\)
    \item[(2)] \(W_p\big(f_\#\kappa,\;g_\#\kappa\big)\le\|f-g\|_\infty\)
\end{enumerate}

\end{lemma}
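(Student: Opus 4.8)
The plan is to prove both bounds by exhibiting explicit admissible couplings realized as pushforwards, and then estimating the transport cost along them directly; beyond the existence of optimal plans in $\Gamma(\kappa,\kappa')$ (guaranteed here since $\mathcal X$ is compact and $c$ continuous, so $\Gamma(\kappa,\kappa')$ is weakly compact and $(x,y)\mapsto c(x,y)^p$ is continuous and bounded), no machinery is needed. Throughout I use that in dimension one the cost $|u-v|$ equals $\|u-v\|_p$, so the $1$-dimensional $W_p$ is computed with the absolute-value cost, matching the $\ell_p$ metric with respect to which the functions in $\mathcal F$ are $L$-Lipschitz.

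For part (1), I would fix an optimal coupling $\gamma\in\Gamma(\kappa,\kappa')$ for $W_p(\kappa,\kappa')$ and push it forward through $(x,y)\mapsto(f(x),f(y))$. Functoriality of the pushforward together with $\pi^1\circ(f,f)=f\circ\pi^1$ (and similarly for $\pi^2$) shows $(f,f)_\#\gamma\in\Gamma(f_\#\kappa,f_\#\kappa')$. Then, by the change-of-variables formula for pushforward integrals and the Lipschitz bound $|f(x)-f(y)|\le L\|x-y\|_p=L\,c(x,y)$,
\[
W_p^p(f_\#\kappa,f_\#\kappa')\;\le\;\int_{\mathbb R^2}|u-v|^p\,\mathrm d\big[(f,f)_\#\gamma\big](u,v)\;=\;\int_{\mathcal X^2}|f(x)-f(y)|^p\,\mathrm d\gamma(x,y)\;\le\;L^p\,W_p^p(\kappa,\kappa'),
\]
and taking $p$-th roots yields (1).

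For part (2), the idea is the diagonal-type coupling induced by $\Psi:x\mapsto(f(x),g(x))$ from $\mathcal X$ into $\mathbb R^2$. Setting $\pi:=\Psi_\#\kappa$, one checks $(\pi^1)_\#\pi=f_\#\kappa$ and $(\pi^2)_\#\pi=g_\#\kappa$, so $\pi\in\Gamma(f_\#\kappa,g_\#\kappa)$; estimating the cost along $\pi$ gives
\[
W_p^p(f_\#\kappa,g_\#\kappa)\;\le\;\int_{\mathbb R^2}|u-v|^p\,\mathrm d\pi(u,v)\;=\;\int_{\mathcal X}|f(x)-g(x)|^p\,\mathrm d\kappa(x)\;\le\;\|f-g\|_\infty^p,
\]
and taking $p$-th roots concludes (2).

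Both arguments are routine; the only points warranting a word are that the pushforward couplings have the claimed marginals (immediate from the commutation of projections with $(f,f)$ resp.\ $\Psi$) and the measurability of the maps involved (automatic since $f,g$ are continuous). I do not anticipate any genuine obstacle — the one thing to keep straight is the bookkeeping between the metric $c=\|\cdot\|_p$ on $\mathcal X$, which governs the Lipschitz constant, and the absolute-value cost on the one-dimensional slice space, which coincide cleanly in dimension one.
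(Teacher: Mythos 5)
Your argument matches the paper's proof almost line for line: part (1) by pushing a coupling through $(f,f)$ and invoking the Lipschitz bound, part (2) via the diagonal coupling $\Psi_\#\kappa = (f,g)_\#\gamma_{\mathrm{id}}$. The only cosmetic difference is that you start from an optimal coupling (justified by compactness), whereas the paper bounds via an arbitrary $\gamma\in\Gamma(\kappa,\kappa')$ and then takes the infimum — both yield the same estimate.
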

\begin{proof}
    (1) Let $\gamma\in\Gamma(\kappa,\kappa')$ be any coupling between $\kappa$ and $\kappa'$. Then $(f,f)_\#\gamma \in \Gamma(f_\#\kappa, f_\#\kappa')$, therefore
    \[W_p^p\!\big(f_\#\kappa,\,f_\#\kappa'\big) \;\le\;\int_{\mathbb{R}^2} |u-v|^p \, \mathrm d((f,f)_\#\gamma)(u,v);\]  
    and by $L$-Lipschitzness,
    \begin{align*}
    &\int_{\mathbb{R}^2} |u-v|^p \, \mathrm d((f,f)_\#\gamma)(u,v)\\
    &= \int_{\mathcal{X}^2} |f(x)-f(y)|^p \, \mathrm d\gamma(x,y)\\
    &\le L^p \int_{\mathcal{X}^2} c(x,y)^p \, \mathrm d\gamma(x,y).
    \end{align*}
    Since this holds for any coupling $\gamma\in\Gamma(\kappa, \kappa')$,
    \begin{align*}
    W_p^p\!\big(f_\#\kappa,\,f_\#\kappa'\big)&\leq L^p \inf_{\gamma\in\Gamma(\kappa, \kappa')}\int_{\mathcal{X}^2} c(x,y)^p \mathrm d\gamma(x,y)\\
    &= L^p\,W_p^p(\kappa,\kappa'),
    \end{align*}
    which proves the claim (1).

    (2)
    Consider the identity coupling $\gamma_{id} \in \Gamma(\kappa,\kappa)$. Then
    $(f,g)_\#\gamma_{id} \in \Gamma(f_\#\kappa, g_\#\kappa)$ and
    \begin{align*}
    W_p^p\!\big(f_\#\kappa,\,g_\#\kappa\big)
    &\le\int |f(x)-g(x)|^p\, \mathrm d\gamma_{id}(x, x)\\
    &= \int |f(x)-g(x)|^p\, \mathrm d\kappa(x)\leq\|f-g\|_\infty^p.
     \end{align*}
    Taking the $p$-th root gives the desired bound.
\end{proof}

 \begin{lemma}\label{lem:tools} For 1D probability measures $\mu_i,\nu_i\in\mathcal{P}(\mathbb R)$ with optimal couplings $\gamma_i$, $i=1, 2$,
\[
W_p\!\big(\gamma_1, \gamma_2\big)
\ \le\
\big( W_p(\mu_1,\mu_2)^p + W_p(\nu_1,\nu_2)^p \big)^{1/p},
\]
where the ground metric on $\mathbb R^2$ is defined as $\|(s,t)-(s',t')\|_p=(|s-s'|^p+|t-t'|^p)^{1/p}$ for $s, t, s', t'\in\mathbb{R}$.

\end{lemma}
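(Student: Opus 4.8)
The plan is to exploit the explicit structure of optimal couplings in one dimension. Recall that for $p\ge 1$ and $\mu,\nu\in\mathcal{P}(\mathbb{R})$, an optimal coupling for $W_p$ is the \emph{comonotone} (quantile) coupling: writing $F_\mu^{-1},F_\nu^{-1}$ for the generalized inverse CDFs and $U\sim\mathrm{Unif}[0,1]$, this plan is the law of $\bigl(F_\mu^{-1}(U),F_\nu^{-1}(U)\bigr)$, and moreover $W_p^p(\mu,\nu)=\int_0^1 |F_\mu^{-1}(u)-F_\nu^{-1}(u)|^p\,\mathrm{d}u$. I take $\gamma_i$ to be this quantile coupling of $(\mu_i,\nu_i)$ for $i=1,2$ (this is the coupling used throughout the STP construction, and it is optimal for every $p\ge 1$).

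First I would construct a coupling of $\gamma_1$ and $\gamma_2$ by \emph{synchronizing the randomness}. Let $U\sim\mathrm{Unif}[0,1]$ and consider the $(\mathbb{R}^2\times\mathbb{R}^2)$-valued random vector
\[
\bigl(\,(F_{\mu_1}^{-1}(U),\,F_{\nu_1}^{-1}(U)),\ (F_{\mu_2}^{-1}(U),\,F_{\nu_2}^{-1}(U))\,\bigr),
\]
and let $\Pi$ denote its law on $\mathbb{R}^2\times\mathbb{R}^2$. By construction the first $\mathbb{R}^2$-marginal of $\Pi$ is the law of $(F_{\mu_1}^{-1}(U),F_{\nu_1}^{-1}(U))=\gamma_1$ and the second marginal is $\gamma_2$, so $\Pi\in\Gamma(\gamma_1,\gamma_2)$ is an admissible transport plan between $\gamma_1$ and $\gamma_2$.

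Next I would plug $\Pi$ into the definition of $W_p(\gamma_1,\gamma_2)$. Since the chosen ground metric on $\mathbb{R}^2$ splits additively in its $p$-th power, $\|(s,t)-(s',t')\|_p^p=|s-s'|^p+|t-t'|^p$, we obtain
\[
W_p^p(\gamma_1,\gamma_2)\ \le\ \int_{(\mathbb{R}^2)^2}\!\|z-z'\|_p^p\,\mathrm{d}\Pi(z,z')\ =\ \int_0^1 |F_{\mu_1}^{-1}(u)-F_{\mu_2}^{-1}(u)|^p\,\mathrm{d}u\ +\ \int_0^1 |F_{\nu_1}^{-1}(u)-F_{\nu_2}^{-1}(u)|^p\,\mathrm{d}u .
\]
Applying the one-dimensional Wasserstein identity to each term identifies the right-hand side with $W_p^p(\mu_1,\mu_2)+W_p^p(\nu_1,\nu_2)$; taking $p$-th roots yields the claimed bound.

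As for obstacles: there is essentially no hard step — the argument is a clean ``glue couplings through common randomness'' construction, and the additive structure of the chosen ground cost on $\mathbb{R}^2$ is exactly what makes the bound come out with no cross terms and no multiplicative constant. The only points requiring care are (i) fixing the convention that $\gamma_i$ is the monotone/quantile coupling, so that the same $U$ simultaneously realizes $\gamma_1$ and $\gamma_2$, and (ii) invoking the standard fact that one-dimensional $W_p$ equals the $L^p$ distance between quantile functions, which I would simply cite rather than reprove.
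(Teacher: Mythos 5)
Your argument is correct and is essentially identical to the paper's: the paper defines $\widetilde\gamma := (Q_{\mu_1}, Q_{\nu_1}, Q_{\mu_2}, Q_{\nu_2})_\# \lambda_{[0,1]}$, which is exactly your coupling $\Pi$ built from a shared uniform $U$, and both proofs then integrate the additive ground cost against this plan and invoke the quantile representation of one-dimensional $W_p$. No differences beyond notation.
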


\begin{proof}
Let $Q_{\mu_i}$ and $Q_{\nu_i}$ denote the quantile functions of $\mu_i, \nu_i$. Then the 1D $p$-Wasserstein distances between $\mu_i$ and $\nu_i$ are:
\begin{equation*}
    W_p^p(\mu_i, \nu_i) = \int_0^1|Q_{\mu_i}(t)-Q_{\nu_i}(t)|^p \mathrm dt ,
\end{equation*}
and the corresponding optimal transport plan $\gamma_i=(Q_{\mu_i}, Q_{\nu_i})_\#\lambda_{[0, 1]}$, where $\lambda_{[0,1]}$ is the Lebesgue measure on $[0, 1]$. Define $\widetilde\gamma:=(Q_{\mu_1}, Q_{\nu_1}, Q_{\mu_2}, Q_{\nu_2})_\# \lambda_{[0, 1]}\in\Gamma(\gamma_1, \gamma_2)$, then
\begin{align*}
    &W_p^p(\gamma_1, \gamma_2)\le \int_{(\mathbb{R}^2)^2}\|(u_1, v_1)-(u_2, v_2)\|_p^p \, \mathrm d\widetilde\gamma \\
    &= \int_{(\mathbb{R}^2)^2}|u_1-u_2|^p+|v_1- v_2|^p \, \mathrm d\widetilde\gamma\\
    &=\int_{[0, 1]} |Q_{\mu_1}(s)-Q_{\mu_2}(s)|^p+|Q_{\nu_1}(s)-Q_{\nu_2}(s)|^p \, \mathrm ds\\
    &= W_p^p(\mu_1, \mu_2)+W_p^p(\nu_1, \nu_2).
\end{align*}
Taking $p$-th root gives the claim.
\end{proof}

\begin{proposition}[Recovery of $\mathrm{STP}$ as $\eta\to 0$ for injective slicer]\label{prop:jeta-to-stp}
Assume 
$f\in\mathcal F$ and
\textbf{injective}. Let $\alpha:=f_{\#}\mu$, $\beta:=f_{\#}\nu$ and let $\sigma_f$
be the 1D optimal coupling between $\alpha$ and $\beta$.
Define the (unique) lift $\gamma_f\in \Gamma(\mu,\nu)$ such that
\[
(f,f)_{\#}\gamma_f=\sigma_f,
\]
and recall 
\[
\text{STP}_p^p(\mu,\nu;f):=\int_{\mathcal X^2} c(x,y)^p\, \mathrm d\gamma_f(x,y).
\]
Let $J_\eta(\mu,\nu;f)$ be the smoothed objective from Definition~\ref{def:Jeta-lift},
then
\[
\lim_{\eta\to 0} J_\eta(\mu,\nu;f)\ =\ \text{STP}_p^p(\mu,\nu;f).
\]
\end{proposition}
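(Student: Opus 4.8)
The plan is to show that the expected lifted plans converge narrowly, $\bar\gamma_f^{(\eta)}\to\gamma_f$ as $\eta\to0$, and then conclude: since $\mathcal X^2$ is compact and $(x,y)\mapsto c(x,y)^p=\|x-y\|_p^p$ is bounded and continuous, narrow convergence gives $J_\eta(\mu,\nu;f)=\int_{\mathcal X^2}c^p\,\mathrm d\bar\gamma_f^{(\eta)}\to\int_{\mathcal X^2}c^p\,\mathrm d\gamma_f=\STP_p^p(\mu,\nu;f)$. (On the compact space $\mathcal X^2$ narrow convergence coincides with $W_p$ convergence, so equivalently one may aim for $W_p(\bar\gamma_f^{(\eta)},\gamma_f)\to0$.)

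First I would handle a \emph{deterministic} perturbation. Fix any sequence $\xi_n\to0$ in $\mathbb R^d$, each $\xi_n$ in the full-measure set on which $g_{\xi_n}$ is $\mu$- and $\nu$-a.e.\ injective (Lemma~\ref{lem:ae-inj}), so the lift $\gamma_{\xi_n}\in\Gamma(\mu,\nu)$ is well defined and unique (Proposition~\ref{prop:unique-lift}). Since $\mathcal X$ is compact, $\|g_{\xi_n}-f\|_\infty=\sup_{x\in\mathcal X}|\langle\xi_n,x\rangle|\le\|\xi_n\|\,\sup_{x\in\mathcal X}\|x\|\to0$; then Lemma~\ref{lem:basic}(2) gives $W_p\big((g_{\xi_n})_\#\mu,\,f_\#\mu\big)\le\|g_{\xi_n}-f\|_\infty\to0$ and likewise for $\nu$, and Lemma~\ref{lem:tools} gives $W_p(\sigma_{\xi_n},\sigma_f)\le 2^{1/p}\|g_{\xi_n}-f\|_\infty\to0$ for the corresponding 1D optimal plans. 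The family $\{\gamma_{\xi_n}\}\subset\mathcal P(\mathcal X^2)$ is tight, so any subsequence has a further subsequence with $\gamma_{\xi_{n_k}}\to\gamma^\star$ narrowly; the marginal constraints pass to the limit, so $\gamma^\star\in\Gamma(\mu,\nu)$. For any bounded continuous $\phi$ on $\mathbb R^2$, uniform convergence $g_{\xi_{n_k}}\to f$ (all with ranges in a common compact interval, on which $\phi$ is uniformly continuous) together with $\gamma_{\xi_{n_k}}\to\gamma^\star$ gives $\int\phi\,\mathrm d\big((g_{\xi_{n_k}},g_{\xi_{n_k}})_\#\gamma_{\xi_{n_k}}\big)\to\int\phi\,\mathrm d\big((f,f)_\#\gamma^\star\big)$, while the left-hand side equals $\int\phi\,\mathrm d\sigma_{\xi_{n_k}}\to\int\phi\,\mathrm d\sigma_f$. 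Hence $(f,f)_\#\gamma^\star=\sigma_f$, so $\gamma^\star=\gamma_f$ by the uniqueness of the lift through the injective slicer $f$ (Proposition~\ref{prop:unique-lift}). As every subsequence has a further subsequence with the same limit, $\gamma_{\xi_n}\to\gamma_f$ narrowly; since $\{\xi_n\}$ was arbitrary, $\gamma_\xi\to\gamma_f$ narrowly as $\xi\to0$ through the good set.

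Next I would pass to the expectation over $\xi_\eta\sim\mathsf{Lap}_d(\eta^2\Sigma)$. Fix a bounded continuous $\phi$ on $\mathcal X^2$ and put $\Psi(\xi):=\int_{\mathcal X^2}\phi\,\mathrm d\gamma_\xi$, defined for a.e.\ $\xi$, with $|\Psi|\le\|\phi\|_\infty$ and $\Psi(\xi)\to\Psi(0)=\int_{\mathcal X^2}\phi\,\mathrm d\gamma_f$ as $\xi\to0$ by the previous step. Because $\xi_\eta$ has an absolutely continuous law concentrating at the origin ($\mathbb E\|\xi_\eta\|^2=\eta^2\,\mathrm{Tr}(\Sigma)\to0$, so $\mathbb P(\|\xi_\eta\|\ge\delta)\to0$ for every $\delta>0$ by Chebyshev), a standard split of $\mathbb E_{\xi_\eta}[\,|\Psi(\xi_\eta)-\Psi(0)|\,]$ over $\{\|\xi_\eta\|<\delta\}$ and its complement — using continuity at $0$ on the first event and boundedness on the second — gives $\int_{\mathcal X^2}\phi\,\mathrm d\bar\gamma_f^{(\eta)}=\mathbb E_{\xi_\eta}[\Psi(\xi_\eta)]\to\Psi(0)=\int_{\mathcal X^2}\phi\,\mathrm d\gamma_f$. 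Thus $\bar\gamma_f^{(\eta)}\to\gamma_f$ narrowly, and the first paragraph concludes the proof. (Alternatively, joint convexity of $W_p^p$ and Jensen's inequality give $W_p^p(\bar\gamma_f^{(\eta)},\gamma_f)\le\mathbb E_{\xi_\eta}\big[W_p^p(\gamma_{\xi_\eta},\gamma_f)\big]$, which tends to $0$ by bounded convergence since $W_p(\gamma_\xi,\gamma_f)\to0$ as $\xi\to0$ and $W_p$ is bounded on $\mathcal P(\mathcal X^2)$.)

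I expect the main obstacle to be that the lifting operation $\sigma\mapsto\gamma$ is \emph{not} continuous in general — a small change of the slicer can reroute transported mass discontinuously once injectivity is lost — so one cannot naively interchange the limit $\eta\to0$ with the lift. The argument sidesteps this using compactness of $\mathcal P(\mathcal X^2)$ and the uniqueness of the lift granted by injectivity of the \emph{limiting} slicer $f$: every narrow limit point of the $\gamma_{\xi_\eta}$ is pinned down as the unique coupling in $\Gamma(\mu,\nu)$ whose $(f,f)$-pushforward is $\sigma_f$, namely $\gamma_f$. A secondary subtlety is that the perturbed slicers $g_{\xi_\eta}$ are only a.e.\ injective, not globally so, so $\gamma_{\xi_\eta}$ exists only for a.e.\ realization of $\xi_\eta$; this is harmless since $\xi_\eta$ is absolutely continuous, but it is why Lemma~\ref{lem:ae-inj} and Proposition~\ref{prop:unique-lift} must be invoked throughout and why the statement assumes $f$ injective. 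One also uses the (standard) stability of one-dimensional optimal couplings under convergence of their uniformly compactly supported marginals, which here is supplied directly by Lemmas~\ref{lem:basic} and~\ref{lem:tools}.
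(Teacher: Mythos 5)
Your proof is correct and follows essentially the same route as the paper's: uniform convergence $g_{\xi}\to f$, stability of the 1D optimal couplings via Lemmas~\ref{lem:basic} and~\ref{lem:tools}, a tightness/subsequence argument that pins down every narrow limit point as the unique lift $\gamma_f$ through the injective $f$, and then passage to the expectation. The only difference is bookkeeping: the paper reparameterizes $\xi_\eta=\eta Z$ with $Z\sim\mathsf{Lap}_d(\Sigma)$ fixed so that dominated convergence over $Z$ applies directly, whereas you keep $\xi_\eta$ with its $\eta$-dependent law and handle the limit via a Chebyshev concentration split (or the Jensen/$W_p^p$ variant); the two are equivalent.
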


\begin{proof}
Write $\xi_\eta=\eta Z$ with $Z\sim\mathsf{Lap}_d(\Sigma)$, so that $g_{\xi_\eta}(x)=f(x)+\eta\langle Z,x\rangle$.
For each instance $Z$, define the 1D marginals
\[
\alpha_{\eta,Z}:=(g_{\xi_\eta})_{\#}\mu,\qquad \beta_{\eta,Z}:=(g_{\xi_\eta})_{\#}\nu,
\]
and their optimal plan $\sigma_{\eta,Z}$.
Let $\gamma_{\eta,Z}$ be the (unique) lift of $\sigma_{\eta,Z}$ given by Proposition~\ref{prop:unique-lift}:
\[
(g_{\xi_\eta},g_{\xi_\eta})_{\#}\gamma_{\eta,Z}=\sigma_{\eta,Z},\qquad \gamma_{\eta,Z}\in\Gamma(\mu,\nu).
\]
By Definition~\ref{def:Jeta-lift}, $J_\eta(\mu,\nu;f)=\mathbb E_Z\big[\int c^p(x,y)\, \mathrm d\gamma_{\eta,Z}\big]$.
Since $\mathcal X$ is compact, let $R:=\sup_{x\in\mathcal X}\|x\|_p<\infty$,
and by Hölder's inequality,
\[
\|g_{\xi_\eta}-f\|_\infty\ \le\ \eta\,\|Z\|_q\,R;
\]
where $\frac{1}{p}+\frac{1}{q}=1$. By Lemma~\ref{lem:basic} part (2),
\begin{align*}
&W_p(\alpha_{\eta,Z},\alpha)\ \le\ \|g_{\xi_\eta}-f\|_\infty\ \le\ \eta\|Z\|_qR,\\
&\text{similarly, }W_p(\beta_{\eta,Z},\beta)\ \le\ \eta\|Z\|_qR.
\end{align*}
Hence, as $\eta\to 0$, we have $\alpha_{\eta,Z}\to \alpha$ and $\beta_{\eta,Z}\to \beta$ in the one-dimensional $W_p$-distance for each fixed $Z$.
By Lemma~\ref{lem:tools}, as $\eta\to 0$, we have
\[
W_p^p\big(\sigma_{\eta,Z},\sigma_f\big)\ \le\
W_p^p(\alpha_{\eta,Z},\alpha)+W_p^p(\beta_{\eta,Z},\beta)\ \rightarrow\ 0,
\]
for each fixed $Z$. 

Thus, $\sigma_{\eta,Z}\rightharpoonup\sigma_f$ weakly as $\eta\to 0$ (for each $Z$).

Let $\eta_n\to 0$ with corresponding $\{\gamma_{\eta_n,Z}\}_n$. Since $\mathcal X^2$ is compact, $\mathcal{P}(\mathcal{X}^2)$ is also compact, and so there exists a subsequence
$\gamma_{\eta_{n_m},Z}\rightharpoonup\gamma^\star$ for some probability measure $\gamma^\star$ on $\mathcal X^2$.
For any bounded $\varphi\in C(\mathbb R^2)$,
\begin{align*}
&\int\varphi\, \mathrm d(g_{\xi_{\eta_{n_m}}},g_{\xi_{\eta_{n_m}}})_{\#}\gamma_{\eta_{n_m},Z}\\
=&\int \varphi \big(g_{\xi_{\eta_{n_m}}}(x),g_{\xi_{\eta_{n_m}}}(y)\big)\, \mathrm d\gamma_{\eta_{n_m},Z}(x,y)\\
=&\int \varphi\, \mathrm d\sigma_{\eta_{n_m},Z}.
\end{align*}
Because $g_{\xi_{\eta_{n_m}}}\to f$ uniformly on $\mathcal X$ and $\gamma_{\eta_{n_m},Z}\rightharpoonup\gamma^\star$,
the LHS tends to $\int \varphi\big(f(x),f(y)\big)\, \mathrm d\gamma^\star(x,y)$.
The RHS tends to $\int\varphi\, \mathrm d\sigma_f$.
Hence,
\[
(f,f)_{\#}\gamma^\star=\sigma_f.
\]
Since $f$ is injective and continuous on compact $\mathcal X$, 
it is a continuous bijection from the compact space $\mathcal X$ onto $f(\mathcal X)\subset \mathbb R$ (Hausdorff), and so it is a homeomorphism: 
its inverse $j^f:f(\mathcal X)\to\mathcal X$ is continuous. Then, 
the lift that solves $(f,f)_{\#}\gamma=\sigma_f$ is \emph{unique} and equals $\gamma_f:=(j^f,j^f)_{\#}\sigma_f$.
In particular, $\gamma^\star=\gamma_f$. As the subsequence is arbitrary, we conclude that for \emph{each fixed $Z$},
\[
\gamma_{\eta,Z}\ \rightharpoonup\ \gamma_f,\qquad \text{as }\eta\to 0.
\]
Thus, since $c^p$ is bounded and continuous on the compact $\mathcal X^2$,
\[
\int c^p\, \mathrm d\gamma_{\eta,Z}\ \underset{\eta\to 0}{\longrightarrow}\ \int c^p\, \mathrm d\gamma_f
\quad\text{for each fixed }Z.
\]
Moreover,
\[
0\ \le\ \int c^p(x,y)\, \mathrm d\gamma_{\eta,Z}(x,y)\ \le\ (\mathrm{diam}(\mathcal X))^p\ <\infty,
\]
so by Dominated Convergence (with respect to $Z$),
\begin{align*}
J_\eta(\mu,\nu;f)
=\mathbb E_Z\!\Big[\int c^p\, \mathrm d\gamma_{\eta,Z}\Big]
\underset{\eta\to 0}{\longrightarrow}\
&\mathbb E_Z\!\Big[\int c^p\, \mathrm d\gamma_f\Big]\\
&=\int c^p\, \mathrm d\gamma_f \\
&=\text{STP}_p^p(\mu,\nu;f).
\end{align*}
\end{proof}

\noindent\textbf{3. Continuity of $J_\eta$.}

\begin{proposition}[Continuity of $J_\eta$]\label{prop:Jeta-cont}
Fix $\eta>0$ and let $J_\eta$ be given by Definition~\ref{def:Jeta-lift}.
If $(\mu_n,\nu_n,f_n)\to(\mu,\nu,f)$ with $W_p(\mu_n,\mu)\to0$, $W_p(\nu_n,\nu)\to0$ and $\|f_n-f\|_\infty\to0$ (as $n\to \infty$),
then
\[
J_\eta(\mu_n,\nu_n;f_n)\ \underset{n\to \infty}{\longrightarrow}\ J_\eta(\mu,\nu;f).
\]
\end{proposition}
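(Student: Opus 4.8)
I would keep $\eta$ fixed and reproduce the scheme of Proposition~\ref{prop:jeta-to-stp}, now with the triple $(\mu,\nu,f)$ varying instead of the noise scale. Write $\xi_\eta=\eta Z$ with $Z\sim\mathsf{Lap}_d(\Sigma)$, $g_n:=g_{\xi_\eta,f_n}=f_n+\langle\xi_\eta,\cdot\rangle$, $g:=g_{\xi_\eta,f}=f+\langle\xi_\eta,\cdot\rangle$, and let $\gamma^{(n)}_{\eta,Z}\in\Gamma(\mu_n,\nu_n)$ and $\gamma_{\eta,Z}\in\Gamma(\mu,\nu)$ be the unique lifted couplings of Definition~\ref{def:Jeta-lift} attached to $(\mu_n,\nu_n;f_n)$ and $(\mu,\nu;f)$. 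Since $J_\eta(\mu_n,\nu_n;f_n)=\mathbb E_Z\big[\int_{\mathcal X^2}c^p\,\mathrm d\gamma^{(n)}_{\eta,Z}\big]$ and the inner integral is bounded uniformly by $(\mathrm{diam}(\mathcal X))^p$, it suffices to show $\int_{\mathcal X^2}c^p\,\mathrm d\gamma^{(n)}_{\eta,Z}\to\int_{\mathcal X^2}c^p\,\mathrm d\gamma_{\eta,Z}$ for $\mathsf{Lap}_d(\Sigma)$-a.e.\ fixed $Z$, and then invoke dominated convergence over $Z$.

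First, fix $Z$ and establish convergence of the one-dimensional data. Since $\mathrm{Lip}(g_n)\le L+\|\xi_\eta\|_q$ (with $\tfrac1p+\tfrac1q=1$) uniformly in $n$, Lemma~\ref{lem:basic} (with $\mathrm{Lip}(g_n)$ playing the role of $L$) and the triangle inequality give
\[
W_p\big((g_n)_\#\mu_n,\,g_\#\mu\big)\ \le\ (L+\|\xi_\eta\|_q)\,W_p(\mu_n,\mu)+\|f_n-f\|_\infty\ \longrightarrow\ 0,
\]
and similarly $W_p((g_n)_\#\nu_n,g_\#\nu)\to0$. Writing $\alpha_n:=(g_n)_\#\mu_n$, $\beta_n:=(g_n)_\#\nu_n$, $\alpha:=g_\#\mu$, $\beta:=g_\#\nu$, and letting $\sigma_n$, $\sigma$ be the canonical $1$D optimal plans between $(\alpha_n,\beta_n)$ and $(\alpha,\beta)$, Lemma~\ref{lem:tools} yields $W_p^p(\sigma_n,\sigma)\le W_p^p(\alpha_n,\alpha)+W_p^p(\beta_n,\beta)\to0$, so $\sigma_n\rightharpoonup\sigma$.

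The main obstacle is passing from $\sigma_n\rightharpoonup\sigma$ to $\gamma^{(n)}_{\eta,Z}\rightharpoonup\gamma_{\eta,Z}$, because the lifting map $g_n$ itself moves with $n$, so weak convergence of the $1$D plans cannot simply be pulled back through a single fixed map. I would resolve this by tightness plus uniqueness: compactness of $\mathcal X^2$ makes $\{\gamma^{(n)}_{\eta,Z}\}_n$ tight, so every subsequence admits a further subsequence with $\gamma^{(n_k)}_{\eta,Z}\rightharpoonup\gamma^\star$; as $W_p(\mu_n,\mu)\to0$ and $W_p(\nu_n,\nu)\to0$ force $\mu_n\rightharpoonup\mu$, $\nu_n\rightharpoonup\nu$, the marginals pass to the limit and $\gamma^\star\in\Gamma(\mu,\nu)$. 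For bounded $\varphi\in C(\mathbb R^2)$, the defining identity $(g_{n_k},g_{n_k})_\#\gamma^{(n_k)}_{\eta,Z}=\sigma_{n_k}$ reads $\int\varphi(g_{n_k}(x),g_{n_k}(y))\,\mathrm d\gamma^{(n_k)}_{\eta,Z}=\int\varphi\,\mathrm d\sigma_{n_k}\to\int\varphi\,\mathrm d\sigma$; on the other hand $g_{n_k}\to g$ uniformly on $\mathcal X$ (only $f$ varies), the values $g_{n_k}(\mathcal X)$ stay in a fixed compact subset of $\mathbb R$, and $\varphi$ is uniformly continuous there, so the integrand on the left converges uniformly to $\varphi(g(x),g(y))$ and, together with $\gamma^{(n_k)}_{\eta,Z}\rightharpoonup\gamma^\star$, the left side tends to $\int\varphi(g(x),g(y))\,\mathrm d\gamma^\star$. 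Hence $(g,g)_\#\gamma^\star=\sigma$. Since $g$ is $\mu$- and $\nu$-a.e.\ injective (Lemma~\ref{lem:ae-inj}), uniqueness of the lift (Proposition~\ref{prop:unique-lift}) forces $\gamma^\star=\gamma_{\eta,Z}$; the subsequence being arbitrary, $\gamma^{(n)}_{\eta,Z}\rightharpoonup\gamma_{\eta,Z}$ for a.e.\ fixed $Z$.

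Finally, $c^p$ is bounded and continuous on the compact $\mathcal X^2$, so the weak convergence of $\gamma^{(n)}_{\eta,Z}$ gives $\int c^p\,\mathrm d\gamma^{(n)}_{\eta,Z}\to\int c^p\,\mathrm d\gamma_{\eta,Z}$ for a.e.\ $Z$; dominated convergence in $Z$, with the constant dominating function $(\mathrm{diam}(\mathcal X))^p$, then delivers $J_\eta(\mu_n,\nu_n;f_n)\to J_\eta(\mu,\nu;f)$, as claimed.
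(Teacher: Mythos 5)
Your proposal is correct and follows essentially the same path as the paper's own proof: reduce to pointwise-in-$Z$ convergence of the lifted plans via (i) Lemma~\ref{lem:basic} and Lemma~\ref{lem:tools} to get $\sigma_n\rightharpoonup\sigma$, (ii) a tightness-plus-uniqueness subsequence argument with the push-forward identity to conclude $\gamma^{(n)}_{\eta,Z}\rightharpoonup\gamma_{\eta,Z}$, and (iii) dominated convergence in $Z$. The only cosmetic difference is that the paper makes explicit the countable intersection of full-measure events $G=\bigcap_n G_n$ ensuring a.e.\ injectivity of every $g_n$ and of $g$ simultaneously, while you state this as ``for a.e.\ fixed $Z$''; and you spell out in slightly more detail why the integral $\int\varphi(g_{n_k}(x),g_{n_k}(y))\,\mathrm d\gamma^{(n_k)}_{\eta,Z}$ converges (uniform convergence of the integrand on a compact range combined with weak convergence of the measures)---both of which match the paper's intent.
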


\begin{proof}
We write $\xi_\eta=\eta Z$ with $Z\sim\mathsf{Lap}_d(\Sigma)$, and for each $n\in \mathbb N$ and $Z$ let
\[
g_{n,Z}(x):=f_n(x)+\eta\langle Z,x\rangle,\quad
g_{Z}(x):=f(x)+\eta\langle Z,x\rangle.
\]
Set the 1D pushforwards
\begin{gather*}
    \alpha_{n,Z}:=(g_{n,Z})_{\#}\mu_n,\qquad\beta_{n,Z}:=(g_{n,Z})_{\#}\nu_n,\\
\alpha_{Z}:=(g_{Z})_{\#}\mu,\qquad \beta_{Z}:=(g_{Z})_{\#}\nu,
\end{gather*}
and their optimal couplings
$\sigma_{n,Z}$ and $\sigma_{Z}$.
By Lemma~\ref{lem:ae-inj}, for $\mathbb P$–a.e.\ $Z$, the lifts
$\gamma_{n,Z}\in\Gamma(\mu_n,\nu_n)$ and $\gamma_{Z}\in\Gamma(\mu,\nu)$ are uniquely defined by
\[
(g_{n,Z},g_{n,Z})_{\#}\gamma_{n,Z}=\sigma_{n,Z},\qquad
(g_{Z},g_{Z})_{\#}\gamma_{Z}=\sigma_{Z}.
\]
By Definition~\ref{def:Jeta-lift},
\begin{gather*}
J_\eta(\mu_n,\nu_n;f_n)=\mathbb E_Z\!\left[\int_{\mathcal X^2} c(x,y)^p\, \mathrm d\gamma_{n,Z}(x,y)\right],\\
J_\eta(\mu,\nu;f)=\mathbb E_Z\!\left[\int c^p\, \mathrm d\gamma_{Z}\right].
\end{gather*}

\medskip\noindent

Since $f_n\to f$ uniformly and $x\mapsto\langle Z,x\rangle$ is $\|Z\|_q$–Lipschitz (with $1/p+1/q=1$),
we have $g_{n,Z}\to g_Z$ uniformly and $\mathrm{Lip}(g_{n,Z})\le L+\eta\|Z\|_q$.
By Lemma~\ref{lem:basic},
\begin{align*}
&W_p(\alpha_{n,Z},\alpha_Z)\\
&\le W_p\big((g_{n,Z})_{\#}\mu_n,(g_{n,Z})_{\#}\mu\big)+W_p\big((g_{n,Z})_{\#}\mu,(g_Z)_{\#}\mu\big)\\
&\le (L+\eta\|Z\|_q)\,W_p(\mu_n,\mu)+\|g_{n,Z}-g_Z\|_\infty \ \xrightarrow[n\to\infty]{}\ 0,
\end{align*}
and similarly $W_p(\beta_{n,Z},\beta_Z) \underset{n\to \infty}{\longrightarrow}0$.
By Lemma~\ref{lem:tools},
\begin{align*}
W_p\!\big(\sigma_{n,Z},\sigma_Z\big)
\le&\big( W_p(\alpha_{n,Z},\alpha_Z)^p+W_p(\beta_{n,Z},\beta_Z)^p\big)^{1/p}\\
&\xrightarrow[n\to\infty]{}\ 0
\qquad\text{for each fixed }Z.
\end{align*}

For each $n\in\mathbb N_0$ (with $n=0$ denoting $(\mu,\nu,f)$ and $n\ge1$ denoting $(\mu_n,\nu_n,f_n)$),
Lemma~\ref{lem:ae-inj} yields a set $G_n\subset\mathbb R^d$ with $\mathbb P(Z\in G_n)=1$
such that $g_{n,Z}$ is $\mu_n$–a.e.\ and $\nu_n$–a.e.\ injective for all $Z\in G_n$.
Let $G:=\bigcap_{n=0}^\infty G_n$; then $\mathbb P(Z\in G)=1$ and all lifts above are defined uniquely for $Z\in G$.

Fix $Z\in G$. Since $\mathcal X$ is compact, let $\gamma_{n_m,Z}\rightharpoonup\bar\gamma$ be a convergent subsequence.
Write $\Phi_n(x,y):=(g_{n,Z}(x),g_{n,Z}(y))$ and $\Phi(x,y):=(g_Z(x),g_Z(y))$.
Then $\|\Phi_n-\Phi\|_\infty\to 0$, and for any bounded function $\varphi\in C(\mathbb R^2)$,
\[
\int \varphi\, \mathrm d\sigma_{n_m,Z}
=\int \varphi\circ\Phi_{n_m}\, \mathrm d\gamma_{n_m,Z}
\ \xrightarrow[m\to\infty]{}
\int \varphi\circ\Phi\, \mathrm d\bar\gamma,
\]
while the LHS tends to $\int \varphi\, \mathrm d\sigma_Z$. Hence $(g_Z,g_Z)_{\#}\bar\gamma=\sigma_Z$.
Since $Z\in G$, $g_Z$ is $\mu$–a.e.\ and $\nu$–a.e.\ injective, so the lift of $\sigma_Z$ is \emph{unique};
thus $\bar\gamma=\gamma_Z$. As the subsequence is arbitrary, the whole sequence converges:
\[
\gamma_{n,Z}\ \rightharpoonup\ \gamma_Z\qquad(Z\in G).
\]
In particular, since $c^p$ is bounded and continuous on compact $\mathcal X^2$,
\[
\int c^p\, \mathrm d\gamma_{n,Z}\ \longrightarrow\ \int c^p\, \mathrm d\gamma_Z\qquad\text{for each }Z\in G.
\]
Moreover, $0\le \int c^p\, \mathrm d\gamma_{n,Z}\le (\mathrm{diam}\,\mathcal X)^p$ for all $n,Z$.
Hence, by dominated convergence (indeed, bounded convergence) with respect to $Z$,
\begin{align*}
J_\eta(\mu_n,\nu_n;f_n)
=&\mathbb E_Z\!\left[\int c^p\, \mathrm d\gamma_{n,Z}\right]\\
& \underset{n\to \infty}{\longrightarrow}\
\mathbb E_Z\!\left[\int c^p\, \mathrm d\gamma_Z\right]
=J_\eta(\mu,\nu;f).
\end{align*}
\end{proof}

\noindent\textbf{4. Compactness and application of the Berge's Maximum Theorem.}

We first review the Berge's Maximum Theorem \cite{aliprantis2006infinite}. 

\begin{definition}[Correspondence]
    A correspondence $\varphi$ from a set $X$ into a set $Y$ assigns to each $x\in X$ a subset $\varphi(x)$ of $Y$ (`point-to-set' assignment). We denote the correspondence as $\varphi: X \rightrightarrows Y$.
\end{definition}

Just as functions have inverses, so do correspondences. Here, we are only concerned with a ``strong inverse": For $\varphi: X \rightrightarrows Y$, the \emph{upper inverse} $\varphi^{u}$ or \emph{strong inverse} is
  defined by
  \[
    \varphi^{u}(A) \;=\; \{\, x\in X \mid  \varphi(x) \subset A \,\}.
  \]

Recall that a neighborhood of a set $A$ is any set $B$ for which there is an open set
$V$ satisfying $A \subset V \subset B$. Any open set $V$ that satisfies $A \subset V$
is called an \emph{open neighborhood} of $A$.

\begin{definition}[Upper Hemicontinuity]
A set-valued correspondence $\varphi: X \rightrightarrows Y$ between topological spaces is upper hemicontinuous at the point $x$ if for every open neighborhood
  $U$ of $\varphi(x)$, the upper inverse image $\varphi^{u}(U)$ is a neighborhood of $x$ in $X$. As with functions, we say $\varphi$ is upper hemicontinuous on $X$, abbreviated \emph{u.h.c.}, if it is upper hemicontinuous at every point of $X$.
\end{definition}

\begin{theorem}[\textbf{Berge's Maximum Theorem.}]\label{thm:max}
Let $\varphi: X \rightrightarrows Y$ be a continuous correspondence with nonempty
compact values, and suppose $f: X \times Y \to \mathbb{R}$ is continuous. Define the ``value function'' $m: X \to \mathbb{R}$ by
\[
  m(x) \;:=\; \sup_{y \in \varphi(x)} f(x,y),
\]
and the correspondence $\mu: X \rightrightarrows Y$ of maximizers by
\[
  s(x) \;:=\; \{\, y \in \varphi(x) \mid f(x,y) = m(x) \,\}.
\]
Then, the value function $m$ is continuous, and the ``argmax'' correspondence
$\mu$ is upper hemicontinuous with non-empty and compact values. As a consequence, the 
$\sup$ may be replaced by 
$\max$.
\end{theorem}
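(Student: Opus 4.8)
This is the classical Berge Maximum Theorem \cite[Thm.~17.31]{aliprantis2006infinite}, so strictly speaking it may simply be invoked; but here is how I would reconstruct it. All the spaces arising in our applications are metric, so the plan is phrased with sequences; in a general topological setting one replaces sequences by nets throughout.

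The plan has four steps. \textbf{Step 1: attainment.} Fix $x$. Since $\varphi(x)$ is nonempty and compact and $y\mapsto f(x,y)$ is continuous, Weierstrass' theorem shows the supremum defining $m(x)$ is attained; hence $m(x)$ is finite, $s(x)\neq\emptyset$, and (being the intersection of the compact $\varphi(x)$ with the closed set $\{y:f(x,y)=m(x)\}$) $s(x)$ is compact. This already lets $\sup$ be replaced by $\max$. \textbf{Step 2: two lemmas on u.h.c.\ correspondences.} I would next record two standard facts that follow from upper hemicontinuity with compact values: (a) if $K\subseteq X$ is compact then $\varphi(K)=\bigcup_{x\in K}\varphi(x)$ is compact; and (b) $\varphi$ has closed graph, i.e.\ $x_n\to x$, $y_n\in\varphi(x_n)$, $y_n\to y$ imply $y\in\varphi(x)$. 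Both come from the neighborhood/open-cover definition of u.h.c., and together they provide the compactness needed to extract limits.

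\textbf{Step 3: continuity of $m$.} For upper semicontinuity I would argue by contradiction: if $x_n\to x$ but $m(x_n)\ge m(x)+\varepsilon$, pick $y_n\in s(x_n)\subseteq\varphi(\{x\}\cup\{x_n\}_n)$, which lies in a compact set by Step 2(a), extract $y_{n_k}\to y^\star\in\varphi(x)$ by Step 2(b), and obtain $f(x,y^\star)=\lim m(x_{n_k})\ge m(x)+\varepsilon$, contradicting $f(x,y^\star)\le m(x)$. For lower semicontinuity I would use the \emph{lower} hemicontinuity of $\varphi$: given $x$ and $\varepsilon>0$, choose $y^\star\in\varphi(x)$ with $f(x,y^\star)=m(x)$; continuity of $f$ near $(x,y^\star)$ gives an open $O\ni y^\star$ and a neighborhood of $x$ on which $f>m(x)-\varepsilon$, and lower hemicontinuity guarantees $\varphi(x')\cap O\neq\emptyset$ nearby, so $m(x')\ge m(x)-\varepsilon$. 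Combining, $m$ is continuous. \textbf{Step 4: upper hemicontinuity of $s$.} Given an open $U\supseteq s(x)$, suppose no neighborhood of $x$ maps into $U$ under $s$; then there are $x_n\to x$ and $y_n\in s(x_n)\setminus U$. Extracting $y_{n_k}\to y^\star\in\varphi(x)$ as before, closedness of $U^c$ gives $y^\star\notin U$, while $f(x_{n_k},y_{n_k})=m(x_{n_k})\to m(x)$ and $f(x_{n_k},y_{n_k})\to f(x,y^\star)$ force $f(x,y^\star)=m(x)$, i.e.\ $y^\star\in s(x)\subseteq U$ — a contradiction.

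The main obstacle is Step 2: verifying that upper hemicontinuity together with compact values yields both a closed graph and compactness of images of compact sets; everything else reduces to two essentially identical limit-extraction arguments once those are in hand. A secondary subtlety worth flagging is that the lower-semicontinuity half of Step 3 is the only place where the lower hemicontinuity component of ``continuous correspondence'' is actually used, so it must not be overlooked.
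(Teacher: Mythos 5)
Your proof plan is a correct reconstruction of the classical Berge Maximum Theorem, but note that the paper does not prove this statement at all: it simply states the theorem and cites it from Aliprantis–Border (as you yourself anticipate, ``strictly speaking it may simply be invoked''). There is therefore no paper proof to compare against; what you have written is the standard textbook argument, with the four-step decomposition (attainment via Weierstrass, the two auxiliary facts about u.h.c.\ correspondences, continuity of $m$ split into the u.s.c.\ and l.s.c.\ halves, and u.h.c.\ of the argmax map) matching the usual presentation. Two small points worth keeping in view: your Step~2(b) (u.h.c.\ with closed/compact values $\Rightarrow$ closed graph) does rely on the codomain being at least Hausdorff/regular, which you correctly cover by restricting to metric spaces; and the observation you flag at the end — that lower hemicontinuity of $\varphi$ is used only for the l.s.c.\ half of Step~3 — is exactly right and is the place where students most often conflate ``upper hemicontinuous'' with ``continuous.'' In the paper's actual application (Proposition~\ref{prop:berge}) the feasible correspondence is the constant map $\varphi(\mu,\nu)\equiv\mathcal F$, which is trivially continuous, so both halves of your Step~3 go through without further work there.
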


\begin{proposition}\label{prop:berge}
Define the ``argmin" correspondence $\mathcal{S}:\mathcal{P}(\mathcal{X})\times\mathcal{P}(\mathcal{X})\to \mathcal{F}$
\[
\mathcal S(\mu,\nu):=\arg\min_{f\in \mathcal{F}} J_\eta(\mu,\nu;f),
\]
and the minimum value function $v:\mathcal{P}(\mathcal{X})\times\mathcal{P}(\mathcal{X})\to \mathbb R_{\geq 0}$
\[
v(\mu,\nu):=\min_{f\in \mathcal{F}}J_\eta(\mu,\nu;f).
\]
Then:
\begin{enumerate}
\item $v$ is continuous on $\Omega = (\mathcal P(\mathcal X)\times \mathcal P(\mathcal X), W_p+W_p)$;
\item for each $\mu,\nu\in \mathcal{P}(\mathcal{X})$, the set $\mathcal S(\mu,\nu)$ is nonempty and compact (in $\|\cdot\|_\infty$);
\item $\mathcal S$ is upper hemicontinuous (u.h.c.) at every $(\mu,\nu)$.
\end{enumerate}
\end{proposition}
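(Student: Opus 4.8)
The plan is to derive all three claims as a single application of Berge's Maximum Theorem (Theorem~\ref{thm:max}) to the problem of minimizing $J_\eta$ over the fixed parameter family $\mathcal F$. Concretely, I would take the ``parameter space'' to be $X:=\Omega=(\mathcal P(\mathcal X)\times\mathcal P(\mathcal X),\,W_p+W_p)$ and the ``choice space'' to be $Y:=(\mathcal F,\|\cdot\|_\infty)$, and introduce the \emph{constant} correspondence $\varphi:\Omega\rightrightarrows\mathcal F$ with $\varphi(\mu,\nu):=\mathcal F$ for every $(\mu,\nu)$. The value function and argmin correspondence of the proposition are then precisely the value function and maximizer correspondence of Berge's theorem applied to $(\varphi,-J_\eta)$.

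First I would check the hypotheses. (i) By Lemma~\ref{rem: precompact}, $\mathcal F$ is a nonempty compact subset of $(C(\mathcal X),\|\cdot\|_\infty)$, so the constant correspondence $\varphi$ has nonempty compact values; being constant, it is trivially upper and lower hemicontinuous, hence a continuous correspondence in the sense required by Theorem~\ref{thm:max}. (ii) The map $(\mu,\nu,f)\mapsto J_\eta(\mu,\nu;f)$ is continuous on $\Omega\times\mathcal F$. Here I would note that $\Omega\times\mathcal F$ is metrizable — it carries the product metric $W_p+W_p+\|\cdot\|_\infty$ — so topological continuity is equivalent to sequential continuity, which is exactly the statement of Proposition~\ref{prop:Jeta-cont}: if $W_p(\mu_n,\mu)\to0$, $W_p(\nu_n,\nu)\to0$ and $\|f_n-f\|_\infty\to0$, then $J_\eta(\mu_n,\nu_n;f_n)\to J_\eta(\mu,\nu;f)$. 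Consequently $-J_\eta$ is continuous as well.

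Then I would simply read off the conclusions. Applying Theorem~\ref{thm:max} to $\varphi$ and $-J_\eta$: the value function $(\mu,\nu)\mapsto\max_{f\in\mathcal F}\bigl(-J_\eta(\mu,\nu;f)\bigr)=-v(\mu,\nu)$ is continuous on $\Omega$, hence $v$ is continuous, which is claim~(1); the maximizer correspondence of $-J_\eta$ is exactly $\mathcal S(\mu,\nu)=\arg\min_{f\in\mathcal F}J_\eta(\mu,\nu;f)$, and Berge's theorem asserts it is nonempty- and compact-valued (compact in the subspace topology of $(\mathcal F,\|\cdot\|_\infty)$, i.e.\ in $\|\cdot\|_\infty$), giving claim~(2), and that it is upper hemicontinuous at every $(\mu,\nu)\in\Omega$, giving claim~(3).

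I do not expect a substantive obstacle, since the analytic content is already packaged in Lemma~\ref{rem: precompact} (compactness of $\mathcal F$) and Proposition~\ref{prop:Jeta-cont} (joint continuity of $J_\eta$). The only steps requiring explicit care are the remark that sequential continuity of $J_\eta$ upgrades to topological continuity because $\Omega\times\mathcal F$ is metrizable, and the (routine) verification that a constant set-valued map into a compact set meets the continuity requirement of Berge's theorem. One could also observe, though it is not needed for this proposition, that $(\mathcal P(\mathcal X),W_p)$ is itself compact since $\mathcal X$ is compact; this is relevant to the subsequent transferability theorem rather than to the present statement.
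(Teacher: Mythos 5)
Your proof is correct and follows essentially the same route as the paper: apply Berge's Maximum Theorem with the constant feasible correspondence $\varphi(\mu,\nu)\equiv\mathcal F$, using Lemma~\ref{rem: precompact} for compactness of $\mathcal F$ and Proposition~\ref{prop:Jeta-cont} for joint continuity of $J_\eta$. Your additional remark that sequential continuity upgrades to topological continuity on the metrizable space $\Omega\times\mathcal F$ is a small but welcome clarification that the paper leaves implicit.
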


\begin{proof}
By Proposition~\ref{prop:Jeta-cont}, the map
$J_\eta:\Omega\times \mathcal{F}\rightarrow \mathbb{R}$ is continuous on $\Omega\times \mathcal{F}$. As $\mathcal{F}$ is non-empty and compact, applying Berge’s Maximum Theorem \ref{thm:max} \cite{aliprantis2006infinite} with constant feasible correspondence $\varphi(\mu, \nu)\equiv\mathcal{F}$, the conclusion follows. Indeed, in the notation of Theorem \ref{thm:max}, consider $X=\mathcal{P}(\mathcal{X})\times\mathcal{P}(\mathcal{X})$, $Y=\mathcal{F}$, $f=-J_\eta$, and constant  $\varphi(\mu, \nu)=\mathcal{F}$ for each $(\mu,\nu)\in X$.
\end{proof}

\noindent\textbf{5. Transferability of the optimal slicer.}

\begin{theorem}
\label{thm:w_f}
    Given $\varepsilon>0$, there exists $\tau>0$ such that if $W_p(\mu_1, \mu_2)+W_p(\nu_1, \nu_2)<\tau$, then for every $ f_1^\star\in\arg\min_{f\in\mathcal F} J_\eta(\mu_1,\nu_1;f)$, there exists $ f_2^\star\in\arg\min_{f\in\mathcal F} J_\eta(\mu_2,\nu_2;f)$ in the $\varepsilon$ neighborhood of $f_1^\star$, i.e., $\|f_2^\star-f_1^\star\|_\infty<\varepsilon$.
\end{theorem}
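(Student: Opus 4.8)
The plan is to read this off the upper hemicontinuity of the argmin correspondence supplied by Proposition~\ref{prop:berge}, but I would spell it out as a short compactness--contradiction argument so that the continuity inputs are visible. Write $\mathcal S(\mu,\nu):=\arg\min_{f\in\mathcal F}J_\eta(\mu,\nu;f)$ and $v(\mu,\nu):=\min_{f\in\mathcal F}J_\eta(\mu,\nu;f)$, and set $\mathrm{dist}_\infty(g,A):=\inf_{h\in A}\|g-h\|_\infty$. Fix the base pair $(\mu_2,\nu_2)$ and $\varepsilon>0$. By Proposition~\ref{prop:berge}, $\mathcal S(\mu_2,\nu_2)$ is a nonempty compact (hence closed) subset of $(\mathcal F,\|\cdot\|_\infty)$ and $v$ is continuous on $(\mathcal P(\mathcal X)\times\mathcal P(\mathcal X),W_p+W_p)$; recall also that $\mathcal F$ is compact in $\|\cdot\|_\infty$ (Lemma~\ref{rem: precompact}) and that $J_\eta$ is jointly continuous (Proposition~\ref{prop:Jeta-cont}).

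First I would show that a suitable $\tau>0$ exists, by contradiction. If no such $\tau$ works, then for each $n\ge 1$ there are $(\mu_1^{(n)},\nu_1^{(n)})$ with $W_p(\mu_1^{(n)},\mu_2)+W_p(\nu_1^{(n)},\nu_2)<1/n$ and a minimizer $f_1^{(n)}\in\mathcal S(\mu_1^{(n)},\nu_1^{(n)})$ with $\mathrm{dist}_\infty\!\big(f_1^{(n)},\mathcal S(\mu_2,\nu_2)\big)\ge\varepsilon$. In particular $\mu_1^{(n)}\to\mu_2$ and $\nu_1^{(n)}\to\nu_2$ in $W_p$. Using compactness of $\mathcal F$ I would pass to a subsequence with $f_1^{(n)}\to f_\star$ uniformly; since $g\mapsto\mathrm{dist}_\infty(g,\mathcal S(\mu_2,\nu_2))$ is $1$-Lipschitz, this gives $\mathrm{dist}_\infty(f_\star,\mathcal S(\mu_2,\nu_2))\ge\varepsilon>0$, so $f_\star\notin\mathcal S(\mu_2,\nu_2)$.

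Then I would contradict this by passing to the limit in the optimality identity $J_\eta(\mu_1^{(n)},\nu_1^{(n)};f_1^{(n)})=v(\mu_1^{(n)},\nu_1^{(n)})$: along the subsequence the left side converges to $J_\eta(\mu_2,\nu_2;f_\star)$ by joint continuity of $J_\eta$ (Proposition~\ref{prop:Jeta-cont}, with $W_p(\mu_1^{(n)},\mu_2)\to0$, $W_p(\nu_1^{(n)},\nu_2)\to0$, $\|f_1^{(n)}-f_\star\|_\infty\to0$), and the right side converges to $v(\mu_2,\nu_2)$ by continuity of $v$ (Proposition~\ref{prop:berge}), whence $J_\eta(\mu_2,\nu_2;f_\star)=v(\mu_2,\nu_2)$, i.e.\ $f_\star\in\mathcal S(\mu_2,\nu_2)$ --- a contradiction. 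This yields $\tau>0$ (depending on $(\mu_2,\nu_2)$ and $\varepsilon$) such that $W_p(\mu_1,\mu_2)+W_p(\nu_1,\nu_2)<\tau$ forces $\mathrm{dist}_\infty(f_1^\star,\mathcal S(\mu_2,\nu_2))<\varepsilon$ for every $f_1^\star\in\mathcal S(\mu_1,\nu_1)$; since $\mathcal S(\mu_2,\nu_2)$ is nonempty, each such $f_1^\star$ then admits some $f_2^\star\in\mathcal S(\mu_2,\nu_2)$ with $\|f_1^\star-f_2^\star\|_\infty<\varepsilon$, which is the claim. Equivalently, one may simply quote Berge's Theorem~\ref{thm:max}: apply upper hemicontinuity of $\mathcal S$ at $(\mu_2,\nu_2)$ to the open set $\{g\in\mathcal F:\mathrm{dist}_\infty(g,\mathcal S(\mu_2,\nu_2))<\varepsilon\}$, which contains $\mathcal S(\mu_2,\nu_2)$, and extract $\tau$ from the neighborhood of $(\mu_2,\nu_2)$ it returns.

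I do not expect a deep obstacle, since the hard analytic work (joint continuity of $J_\eta$, compactness of $\mathcal F$, continuity of the value $v$) is already done upstream. The one genuinely delicate point is that minimizers need not be unique, so one cannot literally ``follow a minimizer'' --- the argument must route through the optimal value $v$, and it is precisely the continuity of $v$ that lets a uniform limit of (exact) minimizers of the perturbed problems land back in $\mathcal S(\mu_2,\nu_2)$. I would also flag the caveat that $\tau$ here depends on the base pair $(\mu_2,\nu_2)$: Berge's theorem supplies only upper (not lower) hemicontinuity of $\mathcal S$, so a $\tau$ uniform over all quadruples $(\mu_1,\nu_1,\mu_2,\nu_2)$ does not follow from this route and is not being claimed.
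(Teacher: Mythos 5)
Your proposal is correct and follows essentially the same route as the paper: the paper's proof simply applies upper hemicontinuity of $\mathcal S$ at $(\mu_2,\nu_2)$ (Proposition~\ref{prop:berge}, via Berge's Maximum Theorem) to the open set $U_\varepsilon = \{f\in\mathcal F: \mathrm{dist}_\infty(f,\mathcal S(\mu_2,\nu_2))<\varepsilon\}$, exactly as in your final paragraph. Your compactness--contradiction argument is just an unpacking of that same upper hemicontinuity from the already-established continuity of $J_\eta$ and $v$ and compactness of $\mathcal F$, and your caveat that $\tau$ depends on the base pair $(\mu_2,\nu_2)$ accurately reflects what the paper's proof delivers.
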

\begin{proof}

Fix $(\mu_2,\nu_2)\in\Omega$ and $\varepsilon>0$. We will keep the notations in Proposition \ref{prop:berge} and denote $\mathcal S(\mu,\nu):=\arg\min_{f\in \mathcal{F}} J_\eta(\mu,\nu;f)$.
Consider the open neighborhood of $\mathcal S(\mu_2,\nu_2)$
\[
U_\varepsilon
:= \bigcup_{g\in \mathcal S(\mu_2,\nu_2)} \Big\{\, f\in \mathcal{F}:\ \|f-g\|_\infty < \varepsilon \,\Big\}.
\]
By upper hemicontinuity of $\mathcal S$ at $(\mu_2,\nu_2)$ (Proposition~\ref{prop:berge} and the
definition of u.h.c. for correspondences), there exists $\tau>0$ such that
\begin{align*}
d_\Omega\big((\mu_1,\nu_1),(\mu_2,\nu_2)\big)&=W_p(\mu_1, \mu_2)+W_p(\nu_1, \nu_2)<\tau\\
&\Longrightarrow\quad
\mathcal S(\mu_1,\nu_1)\subset U_\varepsilon .
\end{align*}
Hence for any $f_1^\star\in\mathcal S(\mu_1,\nu_1)$ there exists $f_2^\star\in\mathcal S(\mu_2,\nu_2)$
such that $\|f_2^\star-f_1^\star\|_\infty<\varepsilon$.

\end{proof}

\subsection{Quantitative $\varepsilon$--$\tau$ control.}\label{sec: tau eps}
We now quantify the $\varepsilon$--$\tau$ relation in Theorem~\ref{thm:w_f}. To do so, we make explicit how the smoothing scale $\eta$ and a bad-event with probability $\delta$ come into play. For simplicity, assume $p=2$ and $c(x,y)=\|x-y\|_2$.

 We expect that this analysis will motivate several directions for future research. Without appealing to the Berge’s Maximum Theorem, we will show that can essentially say the following: For $i=1,2$, let $$\mathcal{S}_i:=\arg\min_{f\in \mathcal F} J_\eta(\mu_i,\nu_i;f)$$ denote the set of minimizers, which are non-empty and compact under our assumptions. Given $\varepsilon>0$, let $U_\varepsilon(\mathcal{S}_i)$ denote the subset of functions in $\mathcal{F}$ whose distance to $\mathcal{S}_i$ is strictly less than $\varepsilon$. Consider the gaps $$m_\varepsilon^{(i)}\ :=\ \min_{f\in \mathcal{F}\setminus U_\varepsilon(\mathcal{S}_i)} J_\eta(\mu_i,\nu_i;f)\ -\ \min_{g\in\mathcal F}J_\eta(\mu_i,\nu_i;g).$$ Then if  $\mathcal{X}$ is finite, then with \emph{high probability} one can choose $\tau\leq m_\varepsilon^{(1)}/C$, for a constant $C$ depending on $\eta$, and obtain $\mathcal{S}_2\subset U_\varepsilon(\mathcal{S}_1)$. Similarly, if $\tau\leq m_\varepsilon^{(2)}/C$, then $\mathcal{S}_1\subset U_\varepsilon(\mathcal{S}_2)$, which in particular implies that for any $f_1^*\in \mathcal{S}_1$, there exists $f_2\in \mathcal{S}_2$ with $\|f_2^\star-f_1^\star\|_\infty<\varepsilon$. 

 Below we present the technical details, precisions, and future open directions.

\begin{assumption}\label{assump: extra}
Let $\Sigma=2 I_d$ and $Z\sim\mathsf{Lap}_d(\Sigma)$. We write $\xi_\eta=\eta Z$ and slicers $g_{\xi_\eta}(x)=f(x)+\eta\langle Z,x\rangle$ for $f\in \mathcal{F}$. 
To obtain a uniform inverse-Lipschitz bound for the slicers via a union bound, we will assume $\mathcal X$ is finite with $|\mathcal X|=B$; then there are $B'=\binom{B}{2}$ directions $u_j\in \mathbb S^{d-1}$ associated with the pairs of elements in $\mathcal{X}$.
\end{assumption}

\noindent \textbf{Step 1: A Lipschitz test for $d^2(\cdot,\cdot)$.} First, let $diam(\mathcal{X})=:D<\infty $ (since, in general, we assume $\mathcal{X}$ compact), and notice that the following inequalities
\begin{align*}
   &\left| \|x_1-y_1\|_2^2-\|x_2-y_2\|_2^2\right|\\
   &=\left|\langle (x_1-y_1) +(x_2-y_2) , (x_1-y_1) - (x_2-y_2)\rangle\right|\\
   &\leq\left(\|x_1-y_1\|_2+\|x_2-y_2\|_2\right)\|(x_1-y_1)-(x_2-y_2)\|_2\\
   &\leq 2D\left(\|x_1-x_2\|_2+\|y_1-y_2\|_2\right)\\ 
   &\leq \underbrace{2\sqrt{2}D}_{=:D'} \sqrt{\|x_1-x_2\|_2^2+\|y_1-y_2\|_2^2}\\
   &=D'\big\|(x_1,y_1)-(x_2,y_2)\big\|_2
\end{align*}
imply that the function $d^2/D':\R^d\times \R^d\to \R$ is 1-Lipschitz in $\R^d\times \R^d$ with the standard Euclidean product metric.
Thus, for any $\gamma_1,\gamma_2\in \mathcal{P}(\mathcal{X\times \mathcal{X}})$, by using duality,
\begin{align}\label{eq: W2 using W1}
    W_2(\gamma_1,\gamma_2)&\geq W_1(\gamma_1,\gamma_2) \notag\\
    &=\sup_{Lip(\psi)\leq 1}\int \psi \, \mathrm d(\gamma_1-\gamma_2)\notag\\
    &\geq \frac{1}{D'}\int c(x,y)^2 \, \mathrm d(\gamma_1-\gamma_2).
\end{align}

\noindent\textbf{Step 2: Uniform inverse-Lipschitz for the slicers on a high-probability event.}
Now, we want to uniformly control the inverse-Lipschitz constant (or co-Lipschitz reciprocal) of the injective functions $g_{\xi_\eta}$. 
We recall that, for any fixed $\eta>0$, injectivity of the slicers $g_{\xi_\eta}$ is guaranteed for a.e. $\xi_\eta$  (Lemma \ref{lem:ae-inj}), allowing us to later take $\eta\to 0$.  We fix $\eta>0$.
For any distinct $x,x'\in\mathcal X$, let $u:=(x-x')/\|x-x'\|_2\in\mathbb S^{d-1}$, and from the reverse triangle inequality,
\begin{align}\label{eq: lip g inv}
    |g_{\xi_\eta}(x)-g_{\xi_\eta}(x')|&\geq \left||\langle\xi_\eta, x-x'\rangle|-|f(x)-f(x')|\right|\notag\\
    &\geq
\big(\eta|\langle Z,u\rangle|-L\big)\,\|x-x'\|_2.  
\end{align}
To beat the Lipschitz term $L$ with high probability, we will use the extra assumptions \ref{assump: extra}:
\begin{enumerate}
    \item $\Sigma =2 I_d$,
    \item $|\mathcal{X}|=B<\infty$.
\end{enumerate}
{Assuming} $\Sigma =2 I_d$, by Lemma  \ref{rem:proj-lap} one has
$\langle Z,u\rangle\sim\mathsf{Lap}_1(0,1)$ (i.e, we have isotropic variance --it does not depend on the direction $u$). Thus,
$$
\mathbb P\big(|\langle Z,u\rangle|<t\big)=1-e^{-t},\qquad t\geq 0.
$$
The {extra assumption} of $\mathcal{X}$ being \textbf{finite} with $B$ number of elements, gives at most $B'=\binom{B}{2}$ directions $\{u_j\}_{j=1}^{B'}\subset \mathbb S^{d-1}$. We can control $\mathbb P\bigl(\min_{1\le j\le B'}|\langle Z,u_j\rangle|<t\bigr)$ by the union bound  $\sum_{j=1}^{B'}\mathbb P(|\langle Z,u_j\rangle|<t)
=B'(1-e^{-t})$. Let $0<\delta<1$ such that $t_{\delta,B}:=-\ln\left(1-\frac{\delta}{B'}\right)>L/\eta$. So,
$$\mathbb P\Big(\underbrace{\min_{1\le j\le B'}|\langle Z,u_j\rangle|\geq t_{\delta,B}}_{=:G \, \text{ (``good event'')}}\Big)
\geq 1-B'(1-e^{-t_{\delta,B}})=:1-\delta.$$
Hence, if we denote by $K_Z$ the inverse-Lipschitz constant of $g_{\xi_\eta}$, from \ref{eq: lip g inv},
 on the ``good event'', with probability at least $1-\delta$, we get
\begin{equation*}
    K_Z \leq
\frac{1}{\eta t_{\delta,B}-L}.
\end{equation*}
In particular, 
\begin{equation}\label{eq: bound exp square}
    \sqrt{\mathbb E_Z(K_Z^2 \mathbf 1_{G})}\leq \frac{1}{\eta t_{\delta,B}-L} \quad \text{with } \mathbb P(G)\geq 1-\delta.
\end{equation}
Therefore, the inverse-Lipschitz constant $K_Z$
of the slicers $g_{\xi_\eta}$ is globally bounded on a high-probability event $G$. 

\noindent\textbf{Step 3: Bound for the change of $J_\eta$ with a fixed $f\in \mathcal F$.} Given $\mu_1,\mu_2,\nu_1,\nu_2\in \mathcal{P}(\mathcal{X})$, define
$\alpha_{i}^{(Z)}:=(g_{\xi_\eta})_\#\mu_i$ and $\beta_{i}^{(Z)}:=(g_{\xi_\eta})_\#\nu_i$.
Let
$\sigma_{i}^{(Z)}$ be the \emph{unique} 1D optimal plan between $\alpha_{i}^{(Z)}, \beta_{i}^{(Z)}$, with corresponding \emph{unique} lifted coupling
$\gamma_{i}^{(Z)}\in\ \Gamma(\mu_i,\nu_i)$
such that $(g_{\xi_\eta},g_{\xi_\eta})_{\#}\gamma_{i}^{(Z)}=\sigma_{i}^{(Z)}
$, for $i=1,2$. Finally, define, as before, 
\begin{equation*}
    J_\eta(\mu_i,\nu_i;f):=\int c(x,y)^p\, \mathrm d\bar\gamma_i^{(Z)}(x,y), 
\end{equation*}
where $\bar\gamma_i^{(Z)}=\mathbb E_Z[\gamma_{i}^{(Z)}]$ for $i=1,2$.

Using the above steps, for any $f\in \mathcal{F}$:
\begin{align}\label{eq: the bound}
    &\left|J_\eta(\mu_1,\nu_1;f)-J_\eta(\mu_2,\nu_2;f)\right|\notag\\
    &=\left|\mathbb E_Z\left(\int c(x,y)^2\, \mathrm d(\gamma_1^{(Z)}-\gamma_2^{(Z)})\right)\right| \notag\\
     &\leq\mathbb E_Z\left(\left|\int c(x,y)^2\, \mathrm d(\gamma_1^{(Z)}-\gamma_2^{(Z)})\right|\mathbf 1_G\right)\notag\\
    &+\mathbb E_Z\left(\left|\int c(x,y)^2\, \mathrm d(\gamma_1^{(Z)}-\gamma_2^{(Z)})\right|\mathbf 1_{G^c}\right)\notag\\
    &\leq \mathbb E_Z\left(\left|\int c(x,y)^2\, \mathrm d(\gamma_1^{(Z)}-\gamma_2^{(Z)})\right|\mathbf 1_G\right)\\    
    &+2 diam(\mathcal{X})^2\delta \notag.
\end{align}
Let us control the first term in \ref{eq: the bound}: Using inequality \ref{eq: W2 using W1} from Step 1, followed by Lemma \ref{lem:basic} (part 1) with $K_Z$ given from Step 2, and then Lemma \ref{lem:tools}, we have
\begin{small}
\begin{align}\label{eq: the bound 1}
    &\mathbb E_Z\left(\left|\int c(x,y)^2\, \mathrm d(\gamma_1^{(Z)}-\gamma_2^{(Z)})\right|\mathbf 1_G\right)\\
    &\leq D' \, \mathbb E_Z\left(W_2(\gamma_1^{(Z)},\gamma_2^{(Z)})\, \mathbf 1_G\right)\notag\\
    &\leq D' \, \mathbb E_Z\left(K_Z \, W_2(\sigma_1^{(Z)},\sigma_2^{(Z)})\, \mathbf 1_G\right)\notag\\
    &\leq D' \, \mathbb E_Z\left(K_Z\mathbf 1_G\big(W_2^2(\alpha_1^{(Z)},\alpha_2^{(Z)}) + W_2^2(\beta_1^{(Z)},\beta_2^{(Z)}) \big)^{\frac{1}{2}}\right)\notag\\
    &\leq D' \, \mathbb E_Z(K_Z \mathbf 1_G \, (L+\eta\|Z\|_2))\big( W_2^2(\mu_1,\mu_2) + W_2^2(\nu_1,\nu_2) \big)^{\frac{1}{2}}.\notag
\end{align}   
\end{small}
We recall that the terms $\big( W_2^2(\mu_1,\mu_2) + W_2^2(\nu_1,\nu_2) \big)^{\frac{1}{2}}$ and $W_2(\mu_1,\mu_2) + W_2(\nu_1,\nu_2)$ are comparable, since $$\sqrt{a^2+b^2}\leq a+b\leq \sqrt{2} \sqrt{a^2+b^2} \qquad \text{for } a,b\geq 0.$$
Moreover, by Cauchy-Schwartz inequality,
\begin{align*}
    &\mathbb E_Z(K_Z\mathbf 1_G(L+\eta\|Z\|_2))\\
    &\leq (\mathbb E_Z(K_Z^2\mathbf 1_G))^{\frac{1}{2}}(\mathbb E_Z((L+\eta\|Z\|_2)^2))^\frac{1}{2}.
\end{align*}    
\noindent Again, by Cauchy-Schwartz inequality $  \mathbb E_Z(\|Z\|_2)\leq (\mathbb E_Z(\|Z\|_2^2))^{\frac{1}{2}}$, and using $\mathbb E_Z(\|Z\|_2^2)=\mathrm{tr}(\Sigma)$ one can estimate
\begin{align*}
    \mathbb E_Z((L+\eta\|Z\|_2)^2)&\leq L^2+2L\eta\mathrm{tr}(\Sigma)^\frac{1}{2}+\eta^2\mathrm{tr}(\Sigma)\\
    &=(L+\eta(\mathrm{tr}(\Sigma))^\frac{1}{2})^2,
\end{align*}
which, in particular, if $\Sigma =2 I_d$, then $(\mathrm{tr}(\Sigma))^\frac{1}{2}=\sqrt{2d}$.
Therefore, replacing the above observations in \ref{eq: the bound 1} together with \ref{eq: bound exp square}, we have 
\begin{align*}
    &\mathbb E_Z\left(\left|\int c(x,y)^2\, \mathrm d(\gamma_1^{(Z)}-\gamma_2^{(Z)})\right|\mathbf 1_G\right) \notag \\
    &\leq C(\eta,\delta)\,\big(W_2(\mu_1,\mu_2)+W_2(\nu_1,\nu_2)\big),     
\end{align*}
where
\begin{equation}\label{eq: C}
    C(\eta,\delta):=2\sqrt{2} \, diam(\mathcal{X})\, \frac{L+\eta\sqrt{2d}}{\eta t_{\delta,B}-L}.
\end{equation}
As a conclusion, 
\begin{align}\label{eq: final bound}
    &\left|J_\eta(\mu_1,\nu_1;f)-J_\eta(\mu_2,\nu_2;f)\right|\\
    &\leq C(\eta,\delta)\,\big(W_2(\mu_1,\mu_2)+W_2(\nu_1,\nu_2)\big)+ 2diam(\mathcal{X})^2\delta. \notag
\end{align}

\noindent\textbf{Step 4: From value stability to argmin transfer.}

If $W_2(\mu_1,\mu_2)+W_2(\nu_1,\nu_2)<\tau$,  inequality \ref{eq: final bound} implies
$$
\Big|\min_{f\in \mathcal F}J_\eta(\mu_1,\nu_1;f)-\min_{f\in \mathcal F}J_\eta(\mu_2,\nu_2;f)\Big|<\underbrace{C(\eta,\delta)\tau+2D^2\delta}_{=:C'}
$$
Indeed, if we define $v_i:=\min_{f\in\mathcal F}J_\eta(\mu_i,\nu_i;f)$,
and $\mathcal{S}_i:=\mathcal{S}(\mu_i,\nu_i)=\arg\min_{f\in\mathcal F}J_\eta(\mu_i,\nu_i;f)$ (non-empty and compact),
for $i=1,2$, from inequality \ref{eq: final bound} we have, in particular, 
that for any $f_1^\star\in \mathcal{S}_1$,
\begin{align*}
    v_2\leq J_\eta(\mu_2,\nu_2;f_1^\star)< J_\eta(\mu_1,\nu_1;f_1^\star)+C'=v_1+C',
\end{align*}
and similarly (interchanging roles), 
for any $f_2^\star\in \mathcal{S}_2$,
\begin{align*}
    v_1\leq J_\eta(\mu_1,\nu_1;f_2^\star)< J_\eta(\mu_2,\nu_2;f_2^\star)+C'=v_2+C',
\end{align*}
getting $ |v_1-v_2|< C'=C(\eta,\delta)\tau+2 diam(\mathcal X)^2\delta$. 
\\

\textbf{Step 5: Choice of $\tau, \eta, \delta$ given $\varepsilon>0$.} Similarly as in the proof of
Theorem \ref{thm:w_f}, we define $$U_\varepsilon(\mathcal{S}_1):=\{f\in \mathcal{F}\mid \, \|f-\mathcal{S}_1\|_\infty<\varepsilon\},$$ where $\|f-\mathcal{S}_1\|_\infty$ denotes the $L^\infty$-distance between $f$ and the compact set $\mathcal{S}_1$.
Define the ``margin'' around $\mathcal S_1$ by
$$
m_\varepsilon^{(1)}\ :=\ \inf_{f\in \mathcal{F}\setminus U_\varepsilon} J_\eta(\mu_1,\nu_1;f)\ -\ \min_{g\in\mathcal F}J_\eta(\mu_1,\nu_1;g).
$$
Notice that $m_\varepsilon^{(1)}>0$: indeed, by definition $m_\varepsilon^{(1)}\geq 0$, and if  $m_\varepsilon^{(1)}=0$, it would imply that there exists $\widetilde f\in \mathcal{F}\setminus U_\varepsilon(\mathcal S_1)$ such that $J_\eta(\mu_1,\nu_1;\widetilde f)=v_1$, i.e., $\widetilde f\in \mathcal{S}_1$ contradicting $\|\widetilde f-\mathcal{S}_1\|_\infty\geq \varepsilon>0$ (where the existence of minimizer $\widetilde f$ relies on the compactness of $\mathcal{F}$ and the continuity of $J_\eta$).
which is positive by compactness of $\mathcal F$ and continuity of $J_\eta$. Choose $\delta\in(0,1)$ and $\eta>0$ so that $\eta t_{\delta,B}-L>0$ and the bad-event penalty obeys
\begin{equation}\label{eq:bad-small}
2diam(\mathcal X)^2\,\delta\ \le\ m_\varepsilon^{(1)}/4.
\end{equation}
Then choose
\begin{equation}\label{eq:tau-choice}
0<\tau\ \leq\ \frac{m_\varepsilon^{(1)}}{4\,C(\eta,\delta)}.
\end{equation}
As a consequence, we obtain
\[
\Big|\min_{f\in \mathcal F}J_\eta(\mu_1,\nu_1;f)-\min_{f\in \mathcal F}J_\eta(\mu_2,\nu_2;f)\Big|<\frac{m_\varepsilon^{(1)}}{2}.
\]
By the usual margin argument (similarly as in the proof of Theorem~\ref{thm:w_f}), this yields $\mathcal S_2\subset U_\varepsilon(\mathcal S_1)$.
See Figure \ref{fig:sketch_margin} for a schematic visualization.
In fact,
if $f_2^\star\in \mathcal{S}_2$ but $f_2^\star\in \mathcal{F}\setminus U_\varepsilon(\mathcal{S}_1)$, by definition of $m_\varepsilon^{(1)}$,
\begin{equation*}\label{eq: J upper bound}
     v_1 + m_\varepsilon ^{(1)}\leq J_\eta(\mu_1,\nu_1;f_2^\star)<v_2+\frac{m_\varepsilon^{(1)}}{2}<v_1+m_{\varepsilon}^{(1)},
\end{equation*}
that is, $m_\varepsilon^{(1)} < m_\varepsilon^{(1)}$, which yields a contradiction. 

Changing the roles and defining the margin around $\mathcal{S}_2$
$$
m_\varepsilon^{(2)}\ :=\ \inf_{f\in \mathcal{F}\setminus U_\varepsilon} J_\eta(\mu_2,\nu_2;f)\ -\ \min_{g\in\mathcal F}J_\eta(\mu_2,\nu_2;g).
$$
we have that under the choices:
\begin{enumerate}
    \item $\delta\in(0,1)$ and $\eta>0$ so that  
$
\binom{B}{2}(1-e^{-L/\eta})< \delta\leq m_\varepsilon^{(1)}/8diam(\mathcal{X})^2$, and
\item  $
\tau\ \leq\ \frac{m_\varepsilon^{(2)}}{4\,C(\eta,\delta)}$,
\end{enumerate}
we get 
$ \mathcal S_1\subset U_\varepsilon(\mathcal S_2)$, where $$U_\varepsilon(\mathcal S_2):=\{f\in \mathcal{F}\mid \, \|f-\mathcal{S}_2\|_\infty<\varepsilon\}.$$ This part reads as the thesis of Theorem \ref{thm:w_f}: for every $f_1^\star\in\mathcal S_1$, under this choice of parameters $\eta,\delta,\tau$ depending on $\varepsilon>0$, we have  $\|f_2^\star-f_1^\star\|_\infty<\varepsilon$ for some $f_2^\star\in\mathcal S_2$.

\begin{figure}[ht!]
    \centering
    \includegraphics[width=0.5\linewidth]{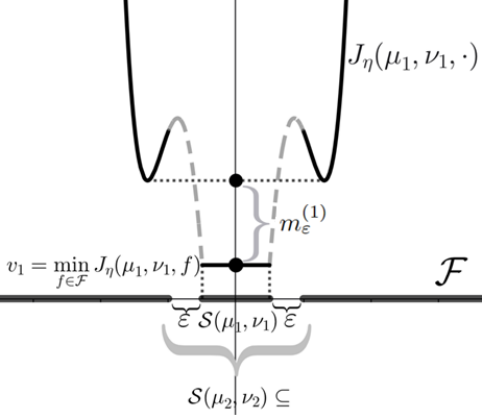}
    \caption{Sketch illustrating that $\mathcal{S}(\mu_2,\nu_2)\subset U_\varepsilon(\mathcal{S}(\mu_1,\nu_1))$ if $\tau$ is chosen appropriately depending on the gap $m_\varepsilon^{(1)}$.}
    \label{fig:sketch_margin}
\end{figure}

\textbf{Note:} Notice that using these quantitative bounds, there exists a trade-off between $\delta$ and $\eta$: Indeed, the condition $\eta t_{\delta,B}>L$ (equivalently, 
$\delta> \binom{B}{2}(1-e^{-L/\eta})$) ensures that the slicers $g_{\xi_\eta}$ are uniformly inverse-Lipschitz on a high-probability event. 
Larger values of $\eta$ make this condition easier to satisfy (allowing smaller~$\delta$), 
while smaller $\eta$ forces $\delta$ to be larger. 
On the other hand, the final stability bound requires $\delta$ to be sufficiently small in order to control the bad-event penalty. 
\\

\noindent\textbf{Comment: For a fixed pair of source-target $(\mu,\nu)$measures, $J_\eta$ is Lipschitz with respect to $(\mathcal{F},\|\cdot\|_\infty)$ with high probability.}
Here we repeat some arguments from step 3 but, instead of fixing $f\in \mathcal{F}$ and varying $\mu_i,\nu_i\in \mathcal{P}(X)$ ($i=1,2$), we fix a pair of source-target measures $\mu,\nu\mathcal{P}(\mathcal{X})$ and consider two different arbitrary slicers $f_1,f_2\in \mathcal{F}$.

Let $f_1,f_2\in \mathcal F$.
Fix a realization of $Z$ and set
\[
g^{(i)}(x):=g^{f_i}_{\xi_\eta}(x)=f_i(x)+\eta\langle Z,x\rangle \qquad i=1,2.
\]
Then for all $x\in\mathcal X$,
\[
|g^{(1)}(x)-g^{(2)}(x)| = |f_1(x)-f_2(x)|\ \le\ \|f_1-f_2\|_\infty.
\]

Let $\mu,\nu\in \mathcal{P}(\mathcal X)$.

For $i=1,2$, let $\alpha_{f_i}^{(Z)}:=(g^{(i)})_\#\mu$ and $\beta_{f_i}^{(Z)}:=(g^{(i)})_\#\nu$; let 
$\sigma_{f_i}^{(Z)}$ be the \emph{unique} 1D optimal plan between $\alpha_{f_i}^{(Z)}, \beta_{f_i}^{(Z)}$, with corresponding \emph{unique} lifted coupling
$\gamma_{f_i}^{(Z)}\in\ \Gamma(\mu,\nu)$
such that $(g^{(i)},g^{(i)})_{\#}\gamma_{i}^{(Z)}=\sigma_{i}^{(Z)}
$.

By Lemma~\ref{lem:basic}, for the pushforwards of $\mu$ we obtain
\[
W_2\big(\alpha_{f_1}^{(Z)},\alpha_{f_2}^{(Z)}\big)\ \le\ \|f_1-f_2\|_\infty,
\]
and similarly for the pushforwards of $\nu$,
\[
W_2\big(\beta_{f_1}^{(Z)},\beta_{f_2}^{(Z)}\big)\ \le\ \|f_1-f_2\|_\infty.
\]

Applying Lemma~\ref{lem:tools}, we get
\begin{align*}
W_2\big(\sigma_{f_1}^{(Z)},\sigma_{f_2}^{(Z)}\big)
&\leq \Big(
    W_2^2\big(\alpha_{f_1}^{(Z)},\alpha_{f_2}^{(Z)}\big)\\
   &\qquad +W_2^2\big(\beta_{f_1}^{(Z)},\beta_{f_2}^{(Z)}\big)
\Big)^{1/2} \\
&\leq \sqrt{2}\,\|f_1-f_2\|_\infty.
\end{align*}

As a consequence of steps 1 and 2 and the above inequality,
\begin{align*}
&\big|J_\eta(\mu,\nu;f_1)-J_\eta(\mu,\nu;f_2)\big| \\
&\quad = \Big|\mathbb E_Z\Big[\int c^2\,d\gamma_{f_1}^{(Z)}-\int c^2\,d\gamma_{f_2}^{(Z)}\Big]\Big| \\
&\quad \leq \mathbb E_Z\Big[\,\Big|\int c^2\,d\gamma_{f_1}^{(Z)}-\int c^2\,d\gamma_{f_2}^{(Z)}\Big|\,\Big] \\
&\quad \leq 4diam(\mathcal{X})\,\|f_1-f_2\|_\infty\,\mathbb E_Z[K_Z\mathbf 1_G] + 2diam(\mathcal{X})^2\delta\\
&\quad \leq \frac{4diam(\mathcal{X})\,(1-\delta)}{\eta t_{\delta,B}-L}\|f_1-f_2\|_\infty\, + 2diam(\mathcal{X})^2\delta.
\end{align*}

\section{Statistical Properties of Minibatch Training}
\label{sec:minibatch}
In this section, we consider the minibatch training of the slicer over finite discrete samples. In each minibatch, samples are reweighted to have uniform mass.

\begin{definition}[Minibatch STP loss as a two-sample U-statistic]
\label{def:mini}
Let $X=\{x_i\}_{i=1}^N$ and $Y=\{y_j\}_{j=1}^M$ be two sets of discrete samples, and write $[n]:=\{1,\dots,n\}$. 
Fix a batch size $B\le \min(N, M)$. Define the two-sample kernel for batches $x_{i_1:i_B}$ and $y_{j_1:j_B}$ with slicer $f:\mathbb{R}^d\to\mathbb{R}$ as
\[
h_B\!\left(f; x_{i_1:i_B},y_{j_1:j_B}\right)
:= \frac{1}{B}\sum_{i=1}^B d\big(x^{f}_{(i)},\,y^{f}_{(i)}\big)^p,
\]
where $x^{f}_{(i)},\,y^{f}_{(i)}\in\mathbb{R}^d$ denote the re-ordered $x_{i_1:i_B}$ and $y_{j_1:j_B}$, paired by the lifted coupling from 1D optimal coupling, i.e. $f(x^{f}_{(1)})\le \cdots \le f(x^{f}_{(B)})$ and $f(y^{f}_{(1)})\le \cdots \le f(y^{f}_{(B)})$ for the slicer $f:\mathbb{R}^d\to\mathbb{R}$.

The minibatch STP loss (U-statistic of order $B,B$) for a slicer $f:\mathbb{R}^d\to\mathbb{R}$ is defined as 
\[
J_{N, M, B}(f)
:= \frac{1}{\binom{N}{B}\binom{M}{B}}
\sum_{S\in\binom{[N]}{B}}\;
\sum_{T\in\binom{[M]}{B}}
h_B\big(f;\, x_S,\, y_T\big),
\]
where $S$ is the set of $B$ indices drawn from $[N]$ without replacement and $T$ is the set of $B$ indices drawn from $[M]$ without replacement. $x_S=(x_i)_{i\in S}$ and $y_T=(y_j)_{j\in T}$.

The incomplete estimator draws $K$ i.i.d. \ pairs $\{(S_k,T_k)\}_{k=1}^K$ uniformly from 
$\binom{[N]}{B}\times\binom{[M]}{B}$, and averages:
\[
\overline{J}_{B,K}(f)
:= \frac{1}{K}\sum_{k=1}^K h_B\big(f;\,x_{S_k},\,y_{T_k}\big).
\]

In the following context, $(S,T)\sim\mathrm{Unif}\!\left(\binom{[N]}{B}\times\binom{[M]}{B}\right)$.
 (i.e., uniform distribution over all size-$B$ subsets.)
\end{definition}

\begin{proposition}[Incomplete vs.\ complete deviation]
\label{prop:incomplete}
Fix the datasets $X=\{x_i\}_{i=1}^N,Y=\{y_j\}_{j=1}^M$. Let $R:=\max_{i, j} d(x_i, y_j)^p$. Then $\mathbb{E}_{S,T}[h_B\!\big(f;\,x_S,y_T\big)]=J_{N, M,B}(f)$ and, for any $\epsilon>0$,
\[
\Pr\!\left(\left|\overline{J}_{B,K}(f)-J_{N, M,B}(f)\right|\ge \epsilon \,\right)
\;\le\; 2\exp\!\Big(\!-\,\frac{2K\epsilon^2}{R^2}\Big).
\]
In particular,
\begin{gather*}
\mathrm{Var}\!\left(\overline{J}_{B,K}(f)\right)\le \frac{R^2}{4K},\\
\mathbb{E}\!\left[\left|\overline{J}_{B,K}(f)-J_{N, M,B}(f)\right|\right]
\le \frac{R}{2\sqrt{K}}.
\end{gather*}
\end{proposition}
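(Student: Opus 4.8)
The plan is to treat $\overline{J}_{B,K}(f)$ as an empirical mean of $K$ i.i.d.\ bounded summands and invoke Hoeffding's inequality, with the variance and $L^1$ estimates following as standard corollaries. First I would record the boundedness of the kernel: for any batch pair $(S,T)$, the quantity $h_B(f;x_S,y_T)$ is an average of $B$ nonnegative terms $d(x^{f}_{(i)},y^{f}_{(i)})^p$, each of which is a pairwise cost already occurring among the finite data and hence bounded above by $R=\max_{i,j} d(x_i,y_j)^p$. Therefore $0\le h_B(f;x_S,y_T)\le R$, uniformly in $f$ and in the reordering induced by the lifted $1$D coupling; this is the only place where the finiteness of the datasets is used.

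Second, unbiasedness is immediate from the definitions: since $(S,T)$ is uniform over $\binom{[N]}{B}\times\binom{[M]}{B}$ and $J_{N,M,B}(f)$ is, by Definition~\ref{def:mini}, precisely the arithmetic mean of $h_B(f;x_S,y_T)$ over all such pairs, we get $\mathbb{E}_{S,T}[h_B(f;x_S,y_T)]=J_{N,M,B}(f)$. Because the $K$ pairs $\{(S_k,T_k)\}_{k=1}^K$ used by the incomplete estimator are drawn i.i.d.\ (it is the \emph{batch pairs}, not the individual indices, that are resampled), the variables $Z_k:=h_B(f;x_{S_k},y_{T_k})$ are i.i.d., take values in $[0,R]$, and have common mean $J_{N,M,B}(f)$, so that $\overline{J}_{B,K}(f)=\tfrac1K\sum_{k=1}^K Z_k$ is an average of $K$ independent variables each ranging over an interval of length $R$.

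Third, Hoeffding's inequality applied to this average yields $\Pr(|\overline{J}_{B,K}(f)-J_{N,M,B}(f)|\ge\epsilon)\le 2\exp(-2K^2\epsilon^2/(KR^2))=2\exp(-2K\epsilon^2/R^2)$, which is the stated tail bound. For the variance, Popoviciu's inequality gives $\mathrm{Var}(Z_1)\le R^2/4$, hence $\mathrm{Var}(\overline{J}_{B,K}(f))=\mathrm{Var}(Z_1)/K\le R^2/(4K)$; and by Jensen (equivalently Cauchy--Schwarz) applied to the concave square root, $\mathbb{E}|\overline{J}_{B,K}(f)-J_{N,M,B}(f)|\le\sqrt{\mathrm{Var}(\overline{J}_{B,K}(f))}\le R/(2\sqrt K)$. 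I do not expect a genuine obstacle here; the only subtle points are verifying the uniform-in-$f$ boundedness of $h_B$ and keeping track of which objects are resampled i.i.d.\ so that Hoeffding applies directly. One could alternatively recover the $L^1$ bound by integrating the tail estimate, but routing it through the variance is shorter and produces the cleaner constant.
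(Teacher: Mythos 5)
Your proof is correct and mirrors the paper's own argument essentially line for line: establish $0\le h_B\le R$, note the $K$ batch pairs are i.i.d.\ with mean $J_{N,M,B}(f)$, apply Hoeffding to the average, then obtain the variance bound via $\mathrm{Var}(Z_1)\le R^2/4$ and the $L^1$ bound via Cauchy--Schwarz (your ``Popoviciu'' and ``Jensen'' references are just named versions of the same two steps the paper uses). No gaps.
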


\begin{proof}
By definition,
\[
J_{N,M,B}(f)
=\frac{1}{\binom{N}{B}\binom{M}{B}}
\sum_{S\in\binom{[N]}{B}}\sum_{T\in\binom{[M]}{B}}
h_B\!\big(f;\,x_S,y_T\big),
\]
so taking expectation over a uniformly chosen $(S,T)$ yields
\[
\mathbb{E}_{S,T}\,h_B\!\big(f;\,x_S,y_T\big)=J_{N,M,B}(f).
\]

Next, for each draw $\{(S_k,T_k)\}_{k=1}^K$,
\[
\overline{J}_{B,K}(f)
=\frac{1}{K}\sum_{k=1}^K h_B\!\big(f;\,x_{S_k},y_{T_k}\big).
\]
Since each summand in $h_B$ is of the form
$d(x^{f}_{(i)},y^{f}_{(i)})^p\in[0,R]$, we have $0\le h_B(f;\,x_S,y_T)\le R$ for all $S,T$.
Hence, the variables
$h_B\!\big(f;\,x_{S_k},y_{T_k}\big)$ are i.i.d.\ and lie in $[0,R]$.

Applying Hoeffding’s inequality to $\overline{J}_{B,K}(f)$ gives, for any $\epsilon>0$,
\[
\Pr\!\left(\left|\overline{J}_{B,K}(f)-
\mathbb{E}\overline{J}_{B,K}(f)\right|\ge \epsilon \right)
\le 2\exp\!\Big(-\frac{2K\epsilon^2}{R^2}\Big).
\]
Using $\mathbb{E}\overline{J}_{B,K}(f)=\mathbb{E}_{S,T}\,h_B(f;\,x_S,y_T)=J_{N,M,B}(f)$
yields the stated tail bound.

For the variance bound, by independence,
\[
\mathrm{Var}\!\left(\overline{J}_{B,K}(f)\right)
=\frac{1}{K}\,\mathrm{Var}\!\left(h_B(f;\,x_S,y_T)\right)
\le \frac{1}{K}\cdot \frac{R^2}{4}=\frac{R^2}{4K},
\]
where we used that a random variable supported on an interval of length $R$ has variance at most $R^2/4$.
Finally, by Cauchy--Schwarz,
\[
\mathbb{E}\!\left[\big|\overline{J}_{B,K}(f)-J_{N,M,B}(f)\big|\right]
\le \sqrt{\mathrm{Var}\!\left(\overline{J}_{B,K}(f)\right)}
\le \frac{R}{2\sqrt{K}}.
\]
\end{proof}
\begin{lemma}[McDiarmid's Inequality for two i.i.d. samples]
Let $X, Y\subset \mathbb{R}^d$, $\mu\in\mathcal{P}(X)$, $\nu\in\mathcal{P}(Y)$. Let $F:X^N\times Y^M\to\mathbb R$ for $N, M\geq 1$. Assume for all $1\leq i\leq N,1\leq j\leq M$, the change of $F$ by substituting one coordinate is bounded, i.e. there exists $a_i$ and $b_j$ such that
\begin{gather*}
\big|F(\ldots,x_i,\ldots;y)-F(\ldots,x'_i,\ldots;y)\big|\le a_i,\\
\big|F(x;\ldots,y_j,\ldots)-F(x;\ldots,y'_j,\ldots)\big|\le b_j.
\end{gather*}
where $x\in X^N, y\in Y^M$ and $x_i,x_i'\in X, y_j, y'_j\in Y$. If $x_1,\dots,x_N\!\stackrel{\text{i.i.d.}}{\sim}\!\mu$ and
$y_1,\dots,y_M\!\stackrel{\text{i.i.d.}}{\sim}\!\nu$ are all independent, then
\begin{align*}
\Pr\!\left(\big|F(x_{1:N},y_{1:M})-\mathbb EF(x_{1:N},y_{1:M})\big|\ge\epsilon\right)\\
\le 2\exp\!\left(-\frac{2\epsilon^2}{\sum_{i=1}^{N}a_i^2+\sum_{j=1}^{M}b_j^2}\right).
\end{align*}
\end{lemma}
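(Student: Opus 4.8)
The plan is to recognize this as the classical bounded–differences (McDiarmid) inequality applied to the concatenated sample. First I would stack the two independent samples into a single vector of $N+M$ independent coordinates,
\[
Z=(Z_1,\dots,Z_{N+M})=(x_1,\dots,x_N,\,y_1,\dots,y_M),
\]
where $Z_k\sim\mu$ for $k\le N$ and $Z_k\sim\nu$ for $k>N$, all mutually independent. Viewing $F$ as a function $\widetilde F(Z_1,\dots,Z_{N+M}):=F(x_{1:N};y_{1:M})$, the two hypotheses say exactly that $\widetilde F$ has the bounded–differences property with constants $c_k:=a_k$ for $1\le k\le N$ and $c_k:=b_{k-N}$ for $N<k\le N+M$: replacing any single coordinate while holding the others fixed changes $\widetilde F$ by at most $c_k$.

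Next I would invoke the standard McDiarmid inequality for independent coordinates, or, if a self-contained argument is preferred, reprove it via the Doob martingale. Concretely, set $M_0:=\mathbb E[\widetilde F]$ and $M_k:=\mathbb E[\widetilde F\mid Z_1,\dots,Z_k]$, so that $M_{N+M}=\widetilde F$ and $(M_k)$ is a martingale with respect to the filtration generated by the $Z_k$. Using independence of the coordinates, each increment $D_k:=M_k-M_{k-1}$ can be written as an average (over the law of $Z_{k+1},\dots,Z_{N+M}$) of differences of $\widetilde F$ in which only the $k$-th coordinate varies; hence $D_k$ lies, conditionally on $Z_1,\dots,Z_{k-1}$, in an interval of length at most $c_k$ and has conditional mean zero. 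Azuma–Hoeffding (in its sharp form for increments confined to an interval of length $c_k$) then gives, for each $\epsilon>0$,
\[
\Pr\!\left(\widetilde F-\mathbb E\widetilde F\ge\epsilon\right)\le\exp\!\left(-\frac{2\epsilon^2}{\sum_{k=1}^{N+M}c_k^2}\right),
\]
and the same bound for $\mathbb E\widetilde F-\widetilde F\ge\epsilon$ by applying the argument to $-\widetilde F$.

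Finally I would combine the two one-sided bounds by a union bound to obtain the factor $2$, and substitute $\sum_{k=1}^{N+M}c_k^2=\sum_{i=1}^N a_i^2+\sum_{j=1}^M b_j^2$, which is precisely the claimed inequality. I do not expect a genuine obstacle here: the only point requiring care is the bound $|D_k|\le c_k$ on the martingale increments, which hinges on using independence to realize $M_k-M_{k-1}$ as an expectation of single–coordinate differences of $\widetilde F$ — this is exactly where the i.i.d.\ (more precisely, independence) assumption on the samples is used, and where one must be slightly careful that the conditioning on $Z_1,\dots,Z_{k-1}$ does not interact with the averaging over the later coordinates.
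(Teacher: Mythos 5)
Your proof is correct and follows essentially the same route as the paper: stack the two independent samples into a single sequence of $N+M$ independent coordinates, build the Doob martingale, bound each increment to an interval of length $a_i$ or $b_j$ using independence and the bounded-differences hypothesis, and conclude via Hoeffding's lemma / Azuma--Hoeffding and a union bound over the two tails. The one subtlety you flag — that the martingale increment $D_k$ is confined to an interval of length $c_k$, not merely $|D_k|\le c_k$ — is indeed the point the paper also handles (via Hoeffding's lemma applied to the conditional law), and your sketch gets it right.
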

\begin{proof}
Let $m:=N+M$ and define the independent sequence $Z=\{z_l\}_{l=1}^m$ with 
$z_1:=x_1,\dots,z_N:=x_N,z_{N+1}:=y_1,\dots,z_m:=y_{M}$.
With a slight abuse of notation, we write $F(Z):=F(x_{1:N},y_{1:M})$ and the Doob martingale
$M_l:=\mathbb E[F(Z)\mid z_1,\dots,z_l]$ for $l=1,\dots,m$ and $M_0=\mathbb{E}(F(Z))$, with increments
$\Delta_l:=M_l-M_{l-1}$ for $1\le l\le m$.

Fix $l$ and the prefix $(z_1,\dots,z_{l-1})$. Consider
$\varphi_l(z):=\mathbb E[F(z_1,\dots,z_{l-1},z,z_{l+1},\dots,z_m)|z_1,\dots,z_{l-1}]$.
By the bounded-differences assumption on the $l$th coordinate,
$|\varphi_l(z)-\varphi_l(z')|\le \alpha_l$ for $z, z'\in X$ or $z, z'\in Y$, where
\[
\alpha_l=\begin{cases}
a_l,& l\le N,\\
b_{\,l-N},& l>N.
\end{cases}
\]
Hence $\Delta_l=\varphi_l(z_l)-\mathbb E[\varphi_l(z_l)\mid z_1,\dots,z_{l-1}]$
is centered and supported in an interval of length at most $\alpha_l$.
By Hoeffding’s lemma,
\[
\mathbb E\!\left[e^{\lambda \Delta_l}\mid z_1,\dots,z_{l-1}\right]
\le \exp\!\left(\frac{\lambda^2\alpha_l^2}{8}\right).
\]
Iterating and using the tower property,
\begin{align*}
\mathbb E\,e^{\lambda(F-\mathbb EF)}
&=\mathbb E\,e^{\lambda\sum_{l=1}^m \Delta_l}\\
&\le \exp\!\left(\frac{\lambda^2}{8}\sum_{l=1}^m \alpha_l^2\right)\\
&=\exp\!\left(\frac{\lambda^2}{8}\left(\sum_{i=1}^{N}a_i^2+\sum_{j=1}^{M}b_j^2\right)\right).
\end{align*}
By Chernoff’s method,
\[
\Pr\!\left(F-\mathbb EF\ge \epsilon\right)
\le \exp\!\left(-\lambda\epsilon+\frac{\lambda^2}{8}\Big(\sum_i a_i^2+\sum_j b_j^2\Big)\right).
\]
Optimizing at $\lambda^\star=\dfrac{4\epsilon}{\sum_i a_i^2+\sum_j b_j^2}$ gives
\[
\Pr\!\left(F-\mathbb EF\ge \epsilon\right)
\le \exp\!\left(-\frac{2\epsilon^2}{\sum_i a_i^2+\sum_j b_j^2}\right).
\]
Apply the same to $-(F-\mathbb EF)$ and we can obtain
\[
\Pr\!\left(\big|F-\mathbb EF\big|\ge \epsilon\right)
\le 2\exp\!\left(-\frac{2\epsilon^2}{\sum_{i=1}^{N}a_i^2+\sum_{j=1}^{M}b_j^2}\right).
\]
\end{proof}

\begin{proposition}[Dataset $\to$ population deviation, fixed $f$]
Let $x_{1:N}\stackrel{\text{i.i.d.}}{\sim}\mu$ and $y_{1:M}\stackrel{\text{i.i.d.}}{\sim}\nu$.
Fix $f$ and a batch size $B\le \min(N,M)$. Assume $d(x,y)\le D$ on $\mathcal X$ and set $R:=D^p$.
Define the population target $J_B(f):=\mathbb{E}\,h_B\!\big(f; x_{i_1:i_B},y_{j_1:j_B}\big)$ with
$x_{i_1:i_B}\sim\mu^{\otimes B}$ and $y_{j_1:j_B}\sim\nu^{\otimes B}$. Then, for any $\epsilon>0$,
\begin{align*}
\Pr\!\left(\left|J_{N,M,B}(f)-J_B(f)\right|\ge \epsilon\right)\\
\le
2\exp\!\left(
-\frac{2\,\epsilon^2}{\,B^2 R^2\left(\frac{1}{N}+\frac{1}{M}\right)}
\right).
\end{align*}
\end{proposition}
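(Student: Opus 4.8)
The plan is to recognize $J_{N,M,B}(f)$ as a deterministic function $F(x_{1:N},y_{1:M})$ of the independent samples and invoke the two-sample McDiarmid inequality proved above. First I would check that its mean is exactly the population target: for any fixed $S\in\binom{[N]}{B}$ and $T\in\binom{[M]}{B}$, since the $x_i$ are i.i.d.\ $\mu$, the $y_j$ are i.i.d.\ $\nu$, and the two families are independent, the sub-tuple $x_S$ has law $\mu^{\otimes B}$, $y_T$ has law $\nu^{\otimes B}$, and $x_S$ is independent of $y_T$; hence $\mathbb{E}\,h_B(f;x_S,y_T)=J_B(f)$ for every pair $(S,T)$, and averaging over all $\binom{N}{B}\binom{M}{B}$ pairs gives $\mathbb{E}\,J_{N,M,B}(f)=J_B(f)$.

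Next I would compute the bounded-differences constants. Fix $i\in[N]$ and replace $x_i$ by $x_i'$ while leaving all other coordinates fixed. In the double average defining $J_{N,M,B}(f)$, every kernel evaluation $h_B(f;x_S,y_T)$ with $i\notin S$ is unchanged, while for $S\ni i$ it changes by at most $R$: each $h_B$ is the average of $B$ terms $d(x^f_{(i)},y^f_{(i)})^p\in[0,D^p]=[0,R]$, so $h_B\in[0,R]$ and the difference of two such values is at most $R$. The number of $S\in\binom{[N]}{B}$ containing $i$ is $\binom{N-1}{B-1}$, a fraction $B/N$ of all subsets, so $J_{N,M,B}(f)$ changes by at most $(B/N)R$; thus $a_i=BR/N$ works for every $i$, and symmetrically $b_j=BR/M$ for every $j\in[M]$.

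Finally I would substitute into the lemma: $\sum_{i=1}^N a_i^2+\sum_{j=1}^M b_j^2=N\,(BR/N)^2+M\,(BR/M)^2=B^2R^2\!\left(\tfrac1N+\tfrac1M\right)$, which yields
\[
\Pr\!\left(\left|J_{N,M,B}(f)-J_B(f)\right|\ge\epsilon\right)\le 2\exp\!\left(-\frac{2\epsilon^2}{B^2R^2\left(\tfrac1N+\tfrac1M\right)}\right),
\]
as claimed. The main (and only mildly delicate) point is the bounded-differences step: although swapping one coordinate $x_i$ may alter the sorted order within a batch $S$ and hence the entire lifted pairing, the value $h_B$ stays confined to $[0,R]$, so the per-subset perturbation is still at most $R$; combining this with the exact count $B/N$ of affected subsets is what produces the sharp $1/N+1/M$ scaling, equivalently a deviation of order $BR\sqrt{(1/N+1/M)\log(1/\delta)}$ with probability $1-\delta$.
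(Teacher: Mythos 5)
Your proof is correct and follows essentially the same route as the paper: identify $J_{N,M,B}(f)$ as a two-sample U-statistic with mean $J_B(f)$, compute the bounded-differences constants $a_i=BR/N$ and $b_j=BR/M$ by counting the fraction of batches containing a given coordinate, and invoke the two-sample McDiarmid inequality. Your explicit remark that a swapped coordinate may reshuffle the sorted pairing inside a batch yet $h_B$ remains confined to $[0,R]$ is a useful clarification the paper leaves implicit, but it does not change the argument.
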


\begin{proof}
$J_{N,M,B}(f)$ is a two-sample U-statistic with kernel $h_B$, hence
$\mathbb{E}\,J_{N,M,B}(f)=J_B(f)$.

View $J_{N,M,B}(f)$ as a function of the independent inputs
$(x_1,\dots,x_{N},y_1,\dots,y_{M})$. Replacing a single $x_i$ affects exactly the
fraction $\binom{N-1}{B-1}/\binom{N}{B}=B/N$ of the summands; since
each summand lies in $[0,R]$ we have a change of at most $(B/N)R$. Similarly, replacing
a single $y_j$ changes the value by at most $(B/M)R$. McDiarmid’s inequality with bounded
differences $a_i=(B/N)R$ for each $x_i$ and $b_j=(B/M)R$ for each $y_j$ yields
\begin{gather*}
\Pr\left(\left|J_{N,M,B}(f)-\mathbb{E}J_{N,M,B}(f)\right|\ge \epsilon\right)\\
\le2\exp\!\left(
-\frac{2\,\epsilon^2}{\,\sum_i a_i^2+\sum_j b_j^2}
\right)\\
=2\exp\!\left(
-\frac{2\,\epsilon^2}{\,B^2 R^2\left(\frac{1}{N}+\frac{1}{M}\right)}
\right),
\end{gather*}
and $\mathbb{E}J_{N,M,B}(f)=J_B(f)$ finishes the proof.
\end{proof}

\begin{lemma}[Subadditivity and scale control for moduli of continuity]
\label{lem:moduli}
Let $I\subset\mathbb{R}$ be an interval and $g:I\to\mathbb{R}$.
Define the modulus of continuity
\[
\omega_g(t)\ :=\ \sup_{\substack{x,y\in I\\ |x-y|\le t}}\ |g(x)-g(y)|,\qquad t\ge 0.
\]
Then $\omega_g(t)$ has the following properties
\begin{enumerate}
\item[(i)] (Monotonicity) $\omega_g(0)=0$ and $\omega_g$ is nondecreasing.
\item[(ii)] (Subadditivity) For all $s,t\ge 0$,
\[
\omega_g(s+t)\ \le\ \omega_g(s)+\omega_g(t).
\]
\item[(iii)] (Scale-specific linear bound) For any $\eta>0$ and all $t\ge 0$,
\[
\omega_g(t)\ \le\ \Big(\tfrac{t}{\eta}+1\Big)\,\omega_g(\eta).
\]
\end{enumerate}
\end{lemma}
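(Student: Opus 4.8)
The plan is to verify the three properties in order, each following from elementary manipulations of the defining supremum, with the only non-trivial ingredient being the convexity of the interval $I$.

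For (i), I would note that $\omega_g(0)=0$ is immediate: the constraint $|x-y|\le 0$ forces $x=y$, so the supremum is over the single value $|g(x)-g(x)|=0$. Monotonicity then follows because, for $s\le t$, the admissible set $\{(x,y)\in I^2 : |x-y|\le s\}$ is contained in $\{(x,y)\in I^2 : |x-y|\le t\}$, so the supremum over the smaller set cannot exceed that over the larger one.

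For (ii), I would fix $x,y\in I$ with $|x-y|\le s+t$ and assume without loss of generality $x\le y$. If $y-x\le s$, then $|g(x)-g(y)|\le\omega_g(s)\le\omega_g(s)+\omega_g(t)$ and we are done for this pair. Otherwise set $z:=x+s$; since $x\le z\le y$ and $I$ is an interval, $z\in I$, and moreover $|x-z|=s$ while $|z-y|=y-x-s\le t$. The triangle inequality gives $|g(x)-g(y)|\le|g(x)-g(z)|+|g(z)-g(y)|\le\omega_g(s)+\omega_g(t)$. Taking the supremum over all admissible pairs $(x,y)$ yields $\omega_g(s+t)\le\omega_g(s)+\omega_g(t)$.

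For (iii), I would fix $\eta>0$ and $t\ge 0$ and set $n:=\lceil t/\eta\rceil$, so that $t\le n\eta$ and $n\le t/\eta+1$. By (i) we have $\omega_g(t)\le\omega_g(n\eta)$, and iterating (ii) $n$ times gives $\omega_g(n\eta)\le n\,\omega_g(\eta)$; combining, $\omega_g(t)\le n\,\omega_g(\eta)\le(t/\eta+1)\,\omega_g(\eta)$. The only step requiring any care is the insertion of the intermediate point $z$ in (ii), which is precisely where the hypothesis that $I$ is an interval (hence convex) is used to ensure $z\in I$; this is the one place the argument could break without that assumption.
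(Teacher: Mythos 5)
Your proof is correct and follows essentially the same route as the paper: monotonicity via nested admissible sets, subadditivity by inserting an intermediate point on the segment $[x,y]$ (valid because $I$ is an interval), and the linear bound by rounding $t/\eta$ up to an integer and iterating subadditivity. The only cosmetic difference is in part (iii), where you set $n=\lceil t/\eta\rceil$ and apply monotonicity up to $n\eta$ before iterating (ii), whereas the paper writes $t=m\eta+r$ with $m=\lfloor t/\eta\rfloor$ and bounds the remainder term by $\omega_g(\eta)$; both yield the same estimate.
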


\begin{proof}
(i) Trivial: the supremum over an empty displacement is $0$; enlarging the admissible set in $t$ cannot decrease the supremum.

(ii) Fix $x,y\in I$ with $|x-y|\le s+t$. Choose $z$ on the segment $[x,y]$ so that $|x-z|\le s$ and $|z-y|\le t$ (possible since $I$ is an interval). Then
\[
|g(x)-g(y)|\ \le\ |g(x)-g(z)|+|g(z)-g(y)|\ \le\ \omega_g(s)+\omega_g(t).
\]
Taking the supremum over such $x,y$ gives $\omega_g(s+t)\le \omega_g(s)+\omega_g(t)$.

(iii) Write $t=m\eta+r$ with $m=\lfloor t/\eta\rfloor\in\mathbb{N}$ and $r\in[0,\eta)$. By (ii),
\begin{align*}
    \omega_g(t)\ =\ \omega_g(m\eta+r)\ \le\ m\,\omega_g(\eta)+\omega_g(r)\ \\
    \le\ (m+1)\,\omega_g(\eta)
\ =\ \Big(\big\lceil\tfrac{t}{\eta}\big\rceil\Big)\,\omega_g(\eta),
\end{align*}

and since $\lceil t/\eta\rceil\le t/\eta+1$, the second inequality follows.
\end{proof}

\begin{corollary}[Scale-specific linearization in expectation]
\label{cor:exp_moduli}
Let $T\ge 0$ be any random variable with $\mathbb{E}T<\infty$. For any $\eta>0$,
\[
\mathbb{E}\,\omega_g(T)\ \le\ \omega_g(\eta)\,\Big(\tfrac{\mathbb{E}T}{\eta}+1\Big).
\]
\end{corollary}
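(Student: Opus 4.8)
The plan is to reduce the statement to the deterministic pointwise bound already established in Lemma~\ref{lem:moduli}(iii) and then integrate. First I would record that, by Lemma~\ref{lem:moduli}(i), $\omega_g$ is nondecreasing, hence Borel measurable; since $T$ is a (measurable) random variable, $\omega_g(T)$ is a well-defined nonnegative random variable and $\mathbb{E}\,\omega_g(T)$ is meaningful (a priori possibly $+\infty$, in which case the claim is vacuous when the right-hand side is not finite, but we will see it is dominated).

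Next I would invoke Lemma~\ref{lem:moduli}(iii) with the free parameter $t$ instantiated at the random value $T$: because the inequality $\omega_g(t)\le\big(\tfrac{t}{\eta}+1\big)\omega_g(\eta)$ holds for \emph{every} $t\ge 0$, and $T\ge 0$ by hypothesis, we get the almost-sure (indeed pointwise) bound
\[
\omega_g(T)\ \le\ \Big(\tfrac{T}{\eta}+1\Big)\,\omega_g(\eta).
\]
Taking expectations of both sides, using monotonicity of expectation on the left and linearity on the right (here $\omega_g(\eta)$ is a fixed constant, not random), yields
\[
\mathbb{E}\,\omega_g(T)\ \le\ \omega_g(\eta)\,\mathbb{E}\Big[\tfrac{T}{\eta}+1\Big]\ =\ \omega_g(\eta)\,\Big(\tfrac{\mathbb{E}T}{\eta}+1\Big),
\]
and the hypothesis $\mathbb{E}T<\infty$ guarantees the right-hand side is finite, which completes the argument.

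I do not expect any genuine obstacle: the corollary is an immediate consequence of the deterministic linear envelope in Lemma~\ref{lem:moduli}(iii) combined with elementary properties of the expectation operator. The only two points deserving a line of care are the measurability of $\omega_g(T)$, which follows from monotonicity of $\omega_g$, and the role of the integrability assumption $\mathbb{E}T<\infty$, which is precisely what keeps the resulting bound nontrivial.
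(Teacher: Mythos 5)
Your proof is correct and is exactly the intended argument: apply the deterministic bound of Lemma~\ref{lem:moduli}(iii) pointwise at $t=T$ and take expectations. The paper states the corollary without proof, treating this one-line consequence as immediate, and your derivation fills in precisely that gap (with the additional, harmless observations about measurability and integrability).
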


\begin{proposition}[Minibatch $\to$ full STP gap under bi-Lipschitz slicer]
\label{prop:mb-vs-full-bilip}
Let $X=\{x_i\}_{i=1}^N$, $Y=\{y_j\}_{j=1}^M$ and $f:\R^d\to\R$.
Define $\mu_N=\tfrac1N\sum_{i=1}^N\delta_{x_i}$, $\nu_M=\tfrac1M\sum_{j=1}^M\delta_{y_j}$ and let
$\gamma=\{\gamma_{ij}\}$ be the lifted plan from the 1D optimal transport plan between $f_\#\mu_N$ and $f_\#\nu_M$, with
$\gamma_{ij}\ge 0$, $\sum_j\gamma_{ij}=1/N$, $\sum_i\gamma_{ij}=1/M$.
Recall the $\text{STP}_p^p$ objective. For convenience we denote it as $J_{N,M}$ in connection to the minibatch version:
\[
J_{N,M}(f):=\sum_{i=1}^N\sum_{j=1}^M \gamma_{ij}\,d(x_i,y_j)^p.
\]
Assume there exist constants $c_X,c_Y>0$ and $L_X,L_Y\ge 0$ such that
\begin{align}
\label{eq:bi-lip-lower}
|f(x)-f(x')| &\;\ge\; c_X\,d(x,x') \quad \text{for all }x,x'\in X,\\
\label{eq:bi-lip-lower-y}
|f(y)-f(y')| &\;\ge\; c_Y\,d(y,y') \quad \text{for all }y,y'\in Y,\\
\label{eq:cost-lip}
\big|d(x,y)^p - d(x',y')^p\big|
&\;\le\; L_X\,d(x,x') + L_Y\,d(y,y') \\
&\quad\text{for all }x,x',y,y'.\nonumber
\end{align}
Write the projected oscillations
\begin{align*}
\osc_X(f):=\max_{x\in X} f(x)-\min_{x\in X} f(x),\\
\osc_Y(f):=\max_{y\in Y} f(y)-\min_{y\in Y} f(y).
\end{align*}
Then, deterministically (for the fixed datasets $X,Y$),
\begin{gather*}
\label{eq:mb-full-gap}
\big|\,J_{N,M,B}(f)-J_{N,M}(f)\,\big|
\le \frac{1}{B}\!\left(\frac{L_X}{c_X}\,\osc_X(f)+\frac{L_Y}{c_Y}\,\osc_Y(f)\right)\\
\quad+2
\frac{L_X}{c_X}\omega_X(\frac{1}{2}\sqrt{\frac{N-B}{NB}}\big)
+2\frac{L_Y}{c_Y}\omega_Y(\frac{1}{2}\sqrt{\frac{M-B}{MB}}\big).\\
\end{gather*}
where $\omega_X,\omega_Y$ are the quantile moduli for the pushforward empirical measures $f_{\#}\mu_N$ and $f_{\#}\nu_M$ (nondecreasing, zero at $0$, step-like in the discrete case).

\end{proposition}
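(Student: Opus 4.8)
The plan is to pass to one–dimensional quantile functions. Since $c_X,c_Y>0$, the lower bounds \eqref{eq:bi-lip-lower}--\eqref{eq:bi-lip-lower-y} make $f$ injective on $X$ and on $Y$, so the $1$D optimal plan between $f_\#\mu_N$ and $f_\#\nu_M$ lifts uniquely and can be written via quantile functions: with $\widetilde Q_X:(0,1]\to X$ sending $u\in((k-1)/N,k/N]$ to the point of $X$ with the $k$-th smallest $f$-value (and $\widetilde Q_Y$ analogously), one has $J_{N,M}(f)=\int_0^1 d(\widetilde Q_X(u),\widetilde Q_Y(u))^p\,du$. Likewise $h_B(f;x_S,y_T)=\int_0^1 d(\widetilde Q_{x_S}(u),\widetilde Q_{y_T}(u))^p\,du$ with the within–batch quantiles, hence $J_{N,M,B}(f)=\mathbb E_{S,T}\int_0^1 d(\widetilde Q_{x_S}(u),\widetilde Q_{y_T}(u))^p\,du$. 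Subtracting the $(S,T)$–independent quantity $J_{N,M}(f)$, applying the cost–smoothness bound \eqref{eq:cost-lip} and the triangle inequality pointwise in $u$, and then using \eqref{eq:bi-lip-lower}--\eqref{eq:bi-lip-lower-y} to replace $d(\widetilde Q_{x_S}(u),\widetilde Q_X(u))$ by $c_X^{-1}|Q_{x_S}(u)-Q_X(u)|$ (with $Q_{x_S},Q_X$ the \emph{real}-valued quantile functions of $f_\#(\mu_N|_S)$ and $f_\#\mu_N$), one obtains
\[
\bigl|J_{N,M,B}(f)-J_{N,M}(f)\bigr|
\le \frac{L_X}{c_X}\,\mathbb E_S\!\!\int_0^1\!\!|Q_{x_S}-Q_X|\,du
\;+\;\frac{L_Y}{c_Y}\,\mathbb E_T\!\!\int_0^1\!\!|Q_{y_T}-Q_Y|\,du .
\]
It then suffices to show the $X$–side bound $\mathbb E_S\int_0^1|Q_{x_S}-Q_X|\,du\le \tfrac1B\osc_X(f)+2\,\omega_X\!\bigl(\tfrac12\sqrt{(N-B)/(NB)}\bigr)$, and symmetrically for $Y$.

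For this I would partition $[0,1]=\bigsqcup_{k=1}^B I_k$ with $I_k=((k-1)/B,k/B]$, on which $Q_{x_S}$ is the constant $a^S_{(k)}$, the $k$-th smallest of the $B$ sampled $f$-values; if its global rank among $\{a_{(1)}\le\cdots\le a_{(N)}\}$ is $\rho_k=\rho_k(S)$, then $a^S_{(k)}=Q_X(\rho_k/N)$. Writing $\Delta_k:=Q_X(k/B)-Q_X\bigl((k-1)/B+\bigr)$ for the oscillation of the monotone $Q_X$ over $I_k$, and $e_k:=\mathrm{dist}(\rho_k/N,\,I_k)$, I use monotonicity of $Q_X$ and subadditivity of the modulus (Lemma~\ref{lem:moduli}) to get, for every $u\in I_k$, $|Q_{x_S}(u)-Q_X(u)|=|Q_X(\rho_k/N)-Q_X(u)|\le \Delta_k+\omega_X(e_k)$: if $\rho_k/N\in I_k$ the gap is at most $\Delta_k$, while if $\rho_k/N$ lies outside $I_k$ the extra displacement is exactly $e_k$ and passes through $\omega_X$. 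Integrating over $[0,1]$ gives $\int_0^1|Q_{x_S}-Q_X|\,du\le \tfrac1B\sum_k\Delta_k+\tfrac1B\sum_k\omega_X(e_k)$, and $\sum_k\Delta_k\le\osc_X(f)$ by telescoping (the $I_k$ are consecutive and $Q_X$ is nondecreasing) — this is the only source of the $\tfrac1B\osc_X(f)$ term and is the crux of the argument. Taking $\mathbb E_S$ of $\tfrac1B\sum_k\omega_X(e_k)$: an elementary computation shows $\bar\rho_k:=\mathbb E_S[\rho_k]=k(N+1)/(B+1)$ satisfies $\bar\rho_k/N\in I_k$, so $e_k\le|\rho_k/N-\bar\rho_k/N|$ and $\mathbb E_S[e_k]\le\sqrt{\mathrm{Var}_S(\rho_k/N)}$, which (hypergeometric variance for without–replacement sampling, i.e. the second moment of the rank of the $k$-th order statistic, equivalently $\mathrm{Var}_S[\widehat F_S(t)]\le (N-B)/(4NB)$ for the batch empirical CDF) is at most $\tfrac12\sqrt{(N-B)/(NB)}$. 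Applying the scale–specific linearization Corollary~\ref{cor:exp_moduli} with $\eta=\tfrac12\sqrt{(N-B)/(NB)}$ yields $\mathbb E_S\,\omega_X(e_k)\le\omega_X(\eta)\bigl(\mathbb E_S[e_k]/\eta+1\bigr)\le 2\,\omega_X(\eta)$, and summing $\tfrac1B$ over the $B$ strata gives exactly $2\,\omega_X\!\bigl(\tfrac12\sqrt{(N-B)/(NB)}\bigr)$. The $Y$–side is identical with $N\leftrightarrow M$, $X\leftrightarrow Y$, and adding the two estimates with prefactors $L_X/c_X$, $L_Y/c_Y$ gives the claimed deterministic bound.

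I expect the main obstacle to be the bookkeeping in the middle step: choosing the anchor so that the ``excess'' $e_k$ is measured as distance to the whole interval $I_k$ (rather than to a single point), which prevents the stray $\omega_X(1/B)$ terms that a naive triangle inequality produces and lets the $\Delta_k$ telescope into $\osc_X(f)$; and pinning down the exact without–replacement second moment of $\rho_k/N$ (or of $\widehat F_S(t)$) with the correct $(N-B)/(NB)$ scaling so that the argument of $\omega_X$ matches the statement. The remaining ingredients — the quantile representation of $J_{N,M}$ and $h_B$, the Lipschitz/co-Lipschitz reductions, and the applications of Lemma~\ref{lem:moduli} and Corollary~\ref{cor:exp_moduli} — are routine.
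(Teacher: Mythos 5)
Your plan takes essentially the same route as the paper: both pass to the quantile (comonotone) representation of the one-dimensional transport cost, partition $[0,1]$ into $B$ equal strata $I_k$, split the per-stratum error into a within-block oscillation term (which telescopes to $\osc/B$) and an overshoot term controlled by the quantile modulus, and then finish using the without-replacement rank variance $\Var(\rho_k/N)\le (N-B)/(4NB)$ together with Corollary~\ref{cor:exp_moduli} at scale $\sigma=\tfrac12\sqrt{(N-B)/(NB)}$ to produce the factor $2\omega(\sigma)$.

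One place where your bookkeeping is actually cleaner than the paper's: the paper fixes a reference $u_k=k/B\in I_k$ and passes from $\Var(U_k)$ to $\mathbb E|U_k-u_k|$ as if $\mathbb E[U_k]=u_k$, but the mean rank of the $k$-th batch order statistic is $\mathbb E[\rho_k]=k(N+1)/(B+1)$, so $\mathbb E[U_k]=k(N+1)/(N(B+1))\neq k/B$ and a bias term technically appears in that step. You sidestep this by measuring the overshoot as the distance $e_k=\mathrm{dist}(\rho_k/N,I_k)$ and anchoring at $\bar\rho_k/N$, which \emph{does} lie in $I_k$; then $e_k\le|\rho_k/N-\bar\rho_k/N|$, and the Cauchy--Schwarz step gives $\mathbb E[e_k]\le\sqrt{\Var(\rho_k/N)}$ without any bias contribution. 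That is exactly the ``anchor to the interval, not to a point'' fix you flag as the crux, and it is the right one. The remaining verifications you list --- the half-open-interval telescoping of $\sum_k\Delta_k\le\osc_X(f)$ for the step quantile functions, and pinning down the negative-hypergeometric second moment (where $k(B-k+1)\le(B+1)^2/4$ and $(N+1)/(B+2)\le N/B$ for $B\le N$ indeed give $\Var(\rho_k/N)\le(N-B)/(4NB)$) --- are routine and go through as you expect.
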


\begin{proof}
Order the samples by the slicer: $f(x^f_{(1)})\leq\cdots\leq f(x^f_{(N)})$ and
$f(y^f_{(1)})\leq\cdots\leq f(y^f_{(M)})$.
Define the stepwise-constant ``quantile selectors''
\begin{align*}
X_f(u):=x^f_{(i)}\ \text{ for }u\in\big((i{-}1)/N,\,i/N\big],\\
Y_f(u):=y^f_{(j)}\ \text{ for }u\in\big((j{-}1)/M,\,j/M\big].
\end{align*}
Then the full objective admits the integral form
\begin{equation}
\label{eq:J-full-as-integral}
J_{N,M}(f)\;=\;\int_0^1 d\!\big(X_f(u),Y_f(u)\big)^p\, \mathrm du.
\end{equation}
Partition $[0,1]$ into $B$ equal sub-intervals $I_k=((k{-}1)/B,\,k/B]$, and assume
$u_k\in I_k$, $v_k\in I_k$ be the rank locations of the $k$-th order statistics of a size-$B$ subset from $X$ and $Y$ respectively, i.e. the minibatch is constructed by picking one pair of points from each $I_k$. For any such subset pair $(S,T)$,
\[
h_B\!\big(f;x_S,y_T\big)
\;=\;\frac1B\sum_{k=1}^B d\!\big(X_f(u_k),\,Y_f(v_k)\big)^p.
\]

Fix a block $I_k=((k{-}1)/B,\,k/B]$. Define the bivariate cost
\[
\scriptd_\times(u,v)\ :=\ d\!\big(X_f(u),Y_f(v)\big)^p,\qquad (u,v)\in[0,1]^2,
\]
and the diagonal (comonotone) integrand
\[
\scriptd(u)\ :=\ \scriptd_\times(u,u)\ =\ d\!\big(X_f(u),Y_f(u)\big)^p.
\]
Let the bin average be
\[
\overline{\scriptd}_{k}\ :=\ B\int_{I_k}\scriptd(u)\,du.
\]
Then for $u_k,v_k\in I_k$,
\begin{align*}
\Bigl|\frac{1}{B}\,\scriptd_\times(u_k,v_k)-\int_{I_k}\scriptd(u)\, \mathrm du\Bigr|
=\frac{1}{B}\,\bigl|\scriptd_\times(u_k,v_k)-\overline{\scriptd}_{k}\bigr|\\
\le\ \frac{1}{B}\,\osc_{I_k\times I_k}(\scriptd_\times),
\end{align*}
where $\osc_{I_k\times I_k}(\scriptd_\times):=\sup_{(u,v),(u',v')\in I_k\times I_k}
\big|\scriptd_\times(u,v)-\scriptd_\times(u',v')\big|$.

Next, bound the bin oscillation via the Lipschitz and bi-Lipschitz assumptions:
for any $(u,v),(u',v')\in I_k\times I_k$,
\[
\begin{aligned}
&\bigl|\scriptd_\times(u,v)-\scriptd_\times(u',v')\bigr|\\
&=\bigl|d(X_f(u),Y_f(v))^p-d(X_f(u'),Y_f(v'))^p\bigr|\\
&\le L_X\,d\!\big(X_f(u),X_f(u')\big)+L_Y\,d\!\big(Y_f(v),Y_f(v')\big)\\
&\le \frac{L_X}{c_X}\,\bigl|f(X_f(u))-f(X_f(u'))\bigr|\\
  & + \frac{L_Y}{c_Y}\,\bigl|f(Y_f(v))-f(Y_f(v'))\bigr|.
\end{aligned}
\]
Taking supremum over $I_k\times I_k$ yields
\[
\osc_{I_k\times I_k}(\scriptd_\times)
\ \le\ \frac{L_X}{c_X}\,\osc_{I_k}\!\big(f\!\circ X_f\big)
     +\frac{L_Y}{c_Y}\,\osc_{I_k}\!\big(f\!\circ Y_f\big).
\]
Summing over $k=1,\dots,B$ and using monotonicity of $f\!\circ X_f$ and $f\!\circ Y_f$,
\begin{align*}
\sum_{k=1}^B \osc_{I_k}\!\big(f\!\circ X_f\big)=\osc_{[0,1]}\!\big(f\!\circ X_f\big)=\osc_X(f),\\
\sum_{k=1}^B \osc_{I_k}\!\big(f\!\circ Y_f\big)=\osc_{[0,1]}\!\big(f\!\circ Y_f\big)=\osc_Y(f).
\end{align*}
Therefore,
\begin{align*}
\left|\frac{1}{B}\sum_{k=1}^B \scriptd_\times(u_k,v_k)-\int_0^1 \scriptd(u)\, \mathrm du\right|\\
\le\
\frac{1}{B}\!\left(\frac{L_X}{c_X}\,\osc_X(f)+\frac{L_Y}{c_Y}\,\osc_Y(f)\right),
\end{align*}
and since $\int_0^1 \scriptd(u)\, \mathrm du=J_{N,M}(f)$ and $h_B(f;x_S,y_T)=\frac{1}{B}\sum_{k=1}^B \scriptd_\times(u_k,v_k)$, the minibatch–full gap bound follows.

Now, if the samples are not stratified, we introduce another bound for the mismatch. Let $S\subset[N]$, $T\subset[M]$ be drawn uniformly without replacement (independently),
with $|S|=|T|=B$, and let $X_{f,S},Y_{f,T}$ be the corresponding batch selectors.
Then
\begin{align*}
h_B(f;x_S,y_T)
=\frac1B\sum_{k=1}^B d\!\big(X_{f,S}(u_k),Y_{f,T}(v_k)\big)^p,\\
u_k,v_k\in I_k.
\end{align*}
Add and subtract the full selectors at the same $(u_k,v_k)$ and use the triangle inequality:
\begin{align*}
&\big|\,h_B(f;x_S,y_T)-J_{N,M}(f)\,\big|\\
&\le
\left|\frac1B\sum_{k=1}^B d\!\big(X_f(u_k),Y_f(v_k)\big)^p
-\int_0^1 d\!\big(X_f(u),Y_f(u)\big)^p\, \mathrm du\right|\\
&+\underbrace{\frac1B\sum_{k=1}^B\big|\,d\!\big(X_{f,S}(u_k),Y_{f,T}(v_k)\big)^p
- d\!\big(X_f(u_k),Y_f(v_k)\big)^p\,\big|}_{\text{selector mismatch}}.
\end{align*}
The first term is already bounded by
$\tfrac1B\big(\tfrac{L_X}{c_X}\osc_X(f)+\tfrac{L_Y}{c_Y}\osc_Y(f)\big)$.

For the mismatch term, 
\begin{align*}
&\big|\,d\!\big(X_{f,S}(u_k),Y_{f,T}(v_k)\big)^p
- d\!\big(X_f(u_k),Y_f(v_k)\big)^p\,\big|
\\
&\le\; L_X\,d\big(X_{f,S}(u_k),X_f(u_k)\big)+L_Y\,d\big(Y_{f,T}(v_k),Y_f(v_k)\big)\\
&\le\; \frac{L_X}{c_X}\Delta_X(u_k)+\frac{L_Y}{c_Y}\Delta_Y(v_k),
\end{align*}
where $\Delta_X(u):=\big|f(X_{f,S}(u))-f(X_f(u))\big|$ and similarly for $\Delta_Y$.
Let $Q_X(u), Q_Y(u)$ be the empirical quantile of $f_\#\mu_X$, and $Q_{X, S}(u), Q_{Y, T}(u)$ be the quantile of the subsampled distributions. Then $f\!\circ X_{f}(u)=Q_X(u)$, $f\!\circ X_{f, S}(u)=Q_{X, S}(u)$ and $\Delta_X(u)=\big|Q_X(u)-Q_{X, S}(u)\big|$  (and similarly for $Y$). Define the quantile modulus
\begin{align*}
\omega_X(\eta):=\sup_{\substack{u,v\in[0,1]\\ |u-v|\le \eta}}\big|Q_X(u)-Q_X(v)\big|\\
\omega_Y(\eta):=\sup_{\substack{u,v\in[0,1]\\ |u-v|\le \eta}}\big|Q_Y(u)-Q_Y(v)\big|
\end{align*}
For each $k\in[B]$, let $\alpha_k$ (resp.\ $\beta_k$) be the full-data index of the $k$-th order in the subsample
from $X$ (resp.\ $Y$), and define the \emph{actual} full-data levels
\[
U_k:=\frac{\alpha_k}{N},\qquad V_k:=\frac{\beta_k}{M}.
\]
Fix deterministic references $u_k,v_k\in I_k=((k{-}1)/B,k/B]$ (e.g.\ $u_k=v_k=k/B$).
Since $f\!\circ X_{f, S}(u_k)=Q_X(U_k)$, for each $k$,
\begin{gather*}
\big|Q_{X,S}(u_k)-Q_X(u_k)\big|
=\big|Q_X(U_k)-Q_X(u_k)\big|\\
\le \omega_X\!\big(|U_k-u_k|\big),
\end{gather*}
and analogously $\big|Q_{Y,T}(v_k)-Q_Y(v_k)\big|\le \omega_Y(|V_k-v_k|)$.
Then
\begin{gather*}
\big|d(X_{f,S}(u_k),Y_{f,T}(v_k))^p-d(X_f(u_k),Y_f(v_k))^p\big|\\
\le \frac{L_X}{c_X}\,\omega_X\!\big(|U_k-u_k|\big)
   +\frac{L_Y}{c_Y}\,\omega_Y\!\big(|V_k-v_k|\big).
\end{gather*}
Averaging over $k=1,\dots,B$ and over all subsets $(S,T)$ (i.e.\ taking expectation w.r.t.\ the uniform law on size-$B$ subsets), we obtain the bound
\[
\begin{aligned}
&\big|J_{N,M,B}(f)-J_{N,M}(f)\big|\\
&=\big|\mathbb{E}_{S, T}h_B(f;x_S,y_T)-J_{N,M}(f)\big|\\
&\le\mathbb{E}_{S, T}\big|\,h_B(f;x_S,y_T)-J_{N,M}(f)\big|\\
&\le \frac{1}{B}\!\left(\frac{L_X}{c_X}\,\osc_X(f)+\frac{L_Y}{c_Y}\,\osc_Y(f)\right)\\
&\quad+\frac{1}{B}\sum_{k=1}^B\!\left(
\frac{L_X}{c_X}\,\E\,\omega_X\!\big(|U_k-u_k|\big)
+\frac{L_Y}{c_Y}\,\E\,\omega_Y\!\big(|V_k-v_k|\big)\right).\\
\end{aligned}
\]
For $\mathbb{E}\omega_X\!\big(|U_k-u_k|\big)$ and $\sigma=\frac{1}{2}\sqrt{\frac{N-B}{NB}}$ we have by Corollary \ref{cor:exp_moduli}, 
\begin{align*}
\mathbb{E}\omega_X\!\big(|U_k-u_k|\big)&\le\big(\frac{\mathbb{E}|U_k-u_k|}{\frac{1}{2}\sqrt{\frac{N-B}{NB}}}+1\big)\omega_X(\frac{1}{2}\sqrt{\frac{N-B}{NB}})\\
&\le2\omega_X(\frac{1}{2}\sqrt{\frac{N-B}{NB}})
\end{align*}
where the second inequality comes from the fact that sampling $B$ points without replacement from $N$ points gives 
\[Var(U_k)\le\frac{N-B}{4BN}\]
and consequently 
\[\mathbb{E}|U_k-u_k|\le\frac{1}{2}\sqrt{\frac{N-B}{NB}}.\]
where $u_k=\mathbb{E}U_k$ as a convenience choice.

Similarly, $\mathbb{E}\omega_Y\!\big(|V_k-v_k|\big)\le2\omega_Y(\frac{1}{2}\sqrt{\frac{M-B}{MB}}\big)$.
Therefore, we get the final bound
\[
\begin{aligned}
&\big|J_{N,M,B}(f)-J_{N,M}(f)\big|\\
&\le \frac{1}{B}\!\left(\frac{L_X}{c_X}\,\osc_X(f)+\frac{L_Y}{c_Y}\,\osc_Y(f)\right)\\
&\quad+2
\frac{L_X}{c_X}\omega_X(\frac{1}{2}\sqrt{\frac{N-B}{NB}}\big)
+2\frac{L_Y}{c_Y}\omega_Y(\frac{1}{2}\sqrt{\frac{M-B}{MB}}\big).\\
\end{aligned}
\]
\end{proof}

\section{Experiment Details}
In this section, we provide comprehensive experimental settings and implementation specifics to ensure full clarity and reproducibility. Figure \ref{fig:symmetric_train} shows the training pipeline throughout the experiment section.
\begin{figure}[t]
    \centering
    \includegraphics[width=\linewidth]{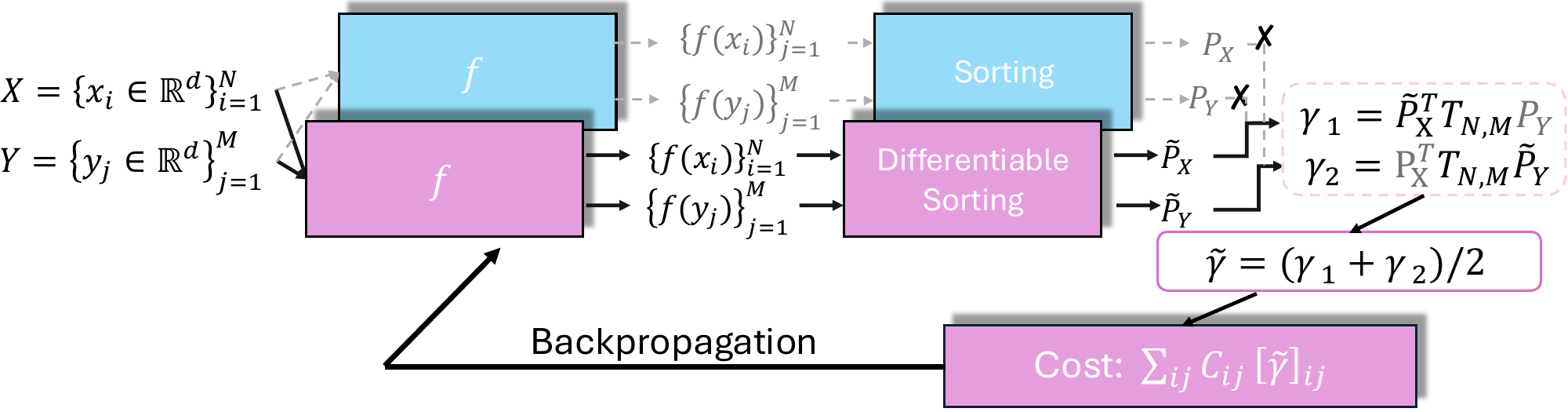}
    \caption{Training with two-branch symmetric gradients through differentiable sorting. The slicer $f$ projects $X$ and $Y$ to one-dimensional samples that are (soft) sorted to obtain $\tilde P_X$ and $\tilde P_Y$ alongside hard permutations $P_X,P_Y$.
Two plans are constructed,
$\gamma_{1}=\tilde P_X^{\!\top}T_{N,M}P_Y$ and $\gamma_{2}=P_X^{\top}T_{N,M}\tilde P_Y$, 
and the cost with their average is optimized $\tilde\gamma=\tfrac12(\gamma_1+\gamma_2)$ via $\sum_{i,j} c_{ij}\,[\tilde\gamma]_{ij}$.}
    \label{fig:symmetric_train}
\end{figure}
\label{app:details}
\subsection{Full vs. Minibatch Comparison}
\begin{figure}[H]
    \centering
    \includegraphics[width=.9\linewidth]{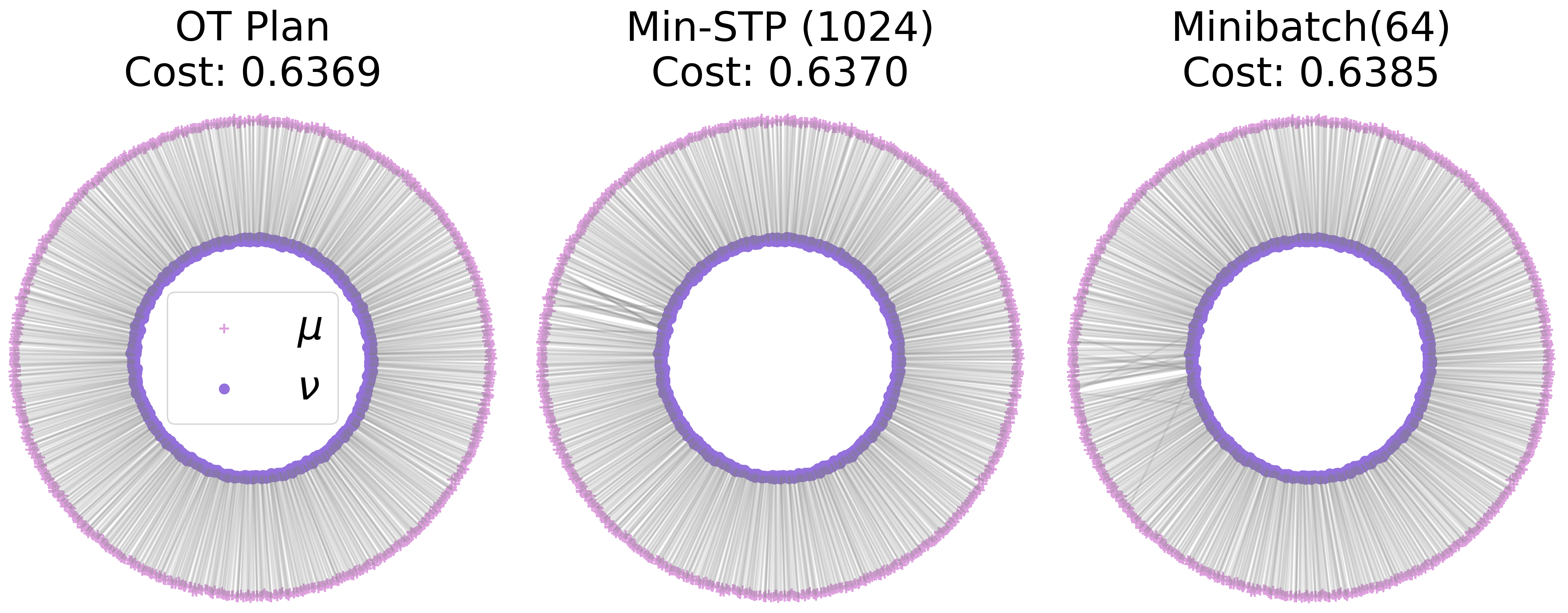}
    \caption{Transport plans and costs under different training schemes, along with the optimal plan/cost. Each panel visualizes pointwise correspondences (gray segments) between two ring
distributions, source $\mu$ and target $\nu$, with $N=M=1024$ points in each distribution. \textit{left}: exact optimal transport (OT) plan. \textit{middle}: $\mSTP$ trained with the full batch (all 1024 samples). \textit{right}: mini-batch $\mSTP$ with batch sizes $B=64$.}

    \label{fig:mini-batch}
\end{figure}

\subsection{Comparisons with DGSWP \cite{chapel2025differentiable}}
In addition to the toy example in Figure~\ref{fig:dgswp-vs-lapsum}, we compare our methods with DGSWP in the Amortized Point Cloud Alignment.

We compare for 6 pairs of categories. For fairness, we use the same neural network architecture for the slicer across methods. The train/test correlation coefficients are shown below (higher is better).

\begin{table}[H]
\centering
\caption{Pairwise Pearson correlation results.}
\label{tab:pairwise_corr}

\small
\begin{tabular}{lccc}
\toprule
Pair 
& Min STP (amortized) 
& Min STP 
& DGSWP \\
\midrule

Chair vs Bathtub
& 0.949/0.894
& 0.974/0.973
& 0.898/0.817
\\

Desk vs Bathtub
& 0.955/0.950
& 0.980/0.978
& 0.928/0.913
\\

Desk vs Sofa
& 0.916/0.914
& 0.957/0.948
& 0.873/0.879
\\

Monitor vs Bathtub
& 0.941/0.945
& 0.980/0.983
& 0.927/0.914
\\

Night Stand vs Toilet
& 0.889/0.869
& 0.977/0.964
& 0.863/0.849
\\

Night Stand vs Chair
& 0.922/0.900
& 0.971/0.967
& 0.819/0.829
\\

\bottomrule
\end{tabular}
\end{table}

\subsection{Transferability under Gradual Drift}
\subsubsection{Model Configuration}
We employ a Set-Transformer \cite{lee2019set} architecture to process unordered point sets. The input consists of two-dimensional vectors $(x,y)$, and the model outputs a scalar prediction for each element. The encoder is composed of two stacked Induced Set Attention Blocks (ISAB), each with hidden dimension $64$, $4$ attention heads, and $16$ learned inducing points, along with layer normalization. The ISAB modules implement a two-stage attention mechanism in which a fixed set of inducing vectors first attends to the input set and the input subsequently attends back to these induced features, providing an efficient and scalable approximation of full self-attention. The decoder consists of two Set Attention Blocks (SAB) with the same hidden dimension and number of heads, enabling global context propagation across all elements in the encoded representation. Finally, a linear mapping layer projects the decoder output to a one-dimensional scalar. This design leverages the Set-Transformer's inherent permutation equivariance and expressive attention-based aggregation, making it well-suited for learning functions over sets. The configurations are summarized in Table \ref{tab:set-transformer}

\begin{table}[h!]
\centering
\begin{tabular}{l c}
\hline
\textbf{Configuration Parameter} & \textbf{Value} \\ \hline
Input dimension  & $2$ \\
Output dimension  & $1$ \\
Hidden dimension  & $64$ \\
Number of inducing points (ISAB) & $16$ \\
Number of attention heads & $4$ \\
Encoder structure & ISAB $\rightarrow$ ISAB \\
Decoder structure & SAB $\rightarrow$ SAB \\
Output layer & Linear($64 \to 1$) \\ \hline
\end{tabular}
\caption{Summary of the SetTransformer configuration used \ref{exp:1}.}
\vspace{-0.3in}
\label{tab:set-transformer}
\end{table}
\subsubsection{Full Training Dynamics}
Figure~\ref{fig:complete_train} shows the full evaluation trajectories for all seven tasks.
\begin{figure}
    \centering
    \includegraphics[width=\linewidth]{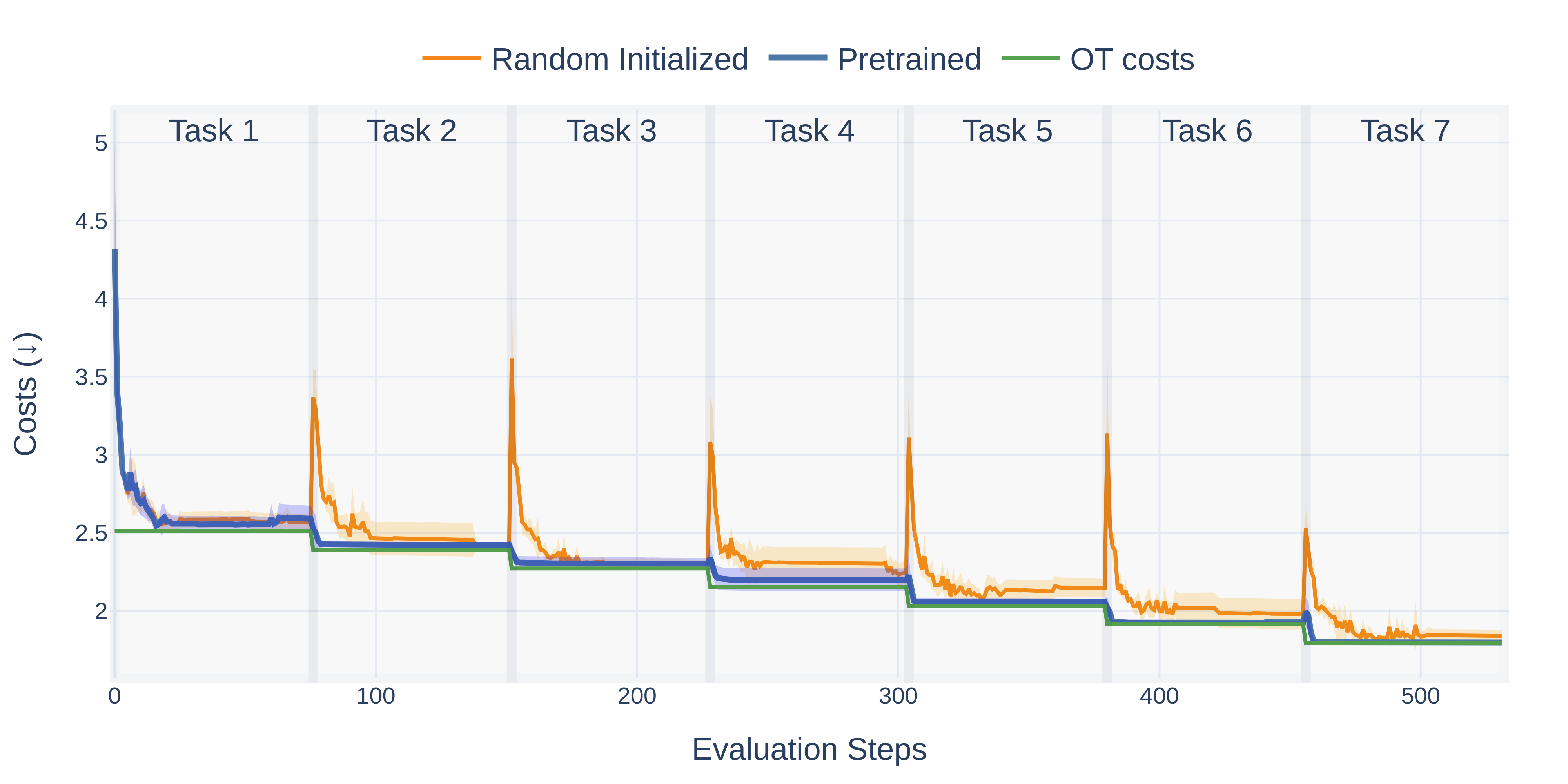}
    \caption{We plot the transport costs (over 5 runs) evaluated throughout the entire task sequence for two initialization strategies: 
randomly initialized slicer (orange), pretrained slicer (blue), and the exact OT costs (green) as a lower bound. The pretrained model exhibits fast convergence at each new task and maintains stable performance, whereas random initialization produces pronounced spikes at task boundaries before stabilizing.}
    \label{fig:complete_train}
\end{figure}

\subsubsection{Ablation studies}
To isolate the effects of minibatching, transferability, and Laplace noise, we generate two tasks similarly to the first two tasks, but with the sample size increased to 1024. The goal is to learn the optimal slicer for Task 2. We then compare the resulting transport cost at epochs 20 and 200 across different batch sizes, with and without transfer from task 1, and with and without Laplace noise / LapSum. The results are reported in the following table (averaged over 2 runs), where lower cost is better; for reference, the OT cost is 2.3688.

Overall, larger batch sizes tend to improve stability, transfer provides the strongest gain in the early stage, LapSum yields more consistent improvements across batch sizes and at later stages of training, and the combination of transfer and LapSum remains the closest to the OT cost most consistently.

\begin{table}[t]
\centering
\caption{Ablation results across different batch sizes and training iterations.}
\label{tab:ablation}

\small
\begin{tabular}{lcccccccc}
\toprule

& \multicolumn{4}{c}{Iter. 20}
& \multicolumn{4}{c}{Iter. 200} \\

\cmidrule(lr){2-5}
\cmidrule(lr){6-9}

Batchsize
& 128 & 256 & 512 & 1024
& 128 & 256 & 512 & 1024 \\
\midrule

No transfer \& no LapSum
& 3.1418 & 3.1088 & 3.1156 & 3.0680
& 2.6466 & 2.6278 & 2.7207 & 2.5679
\\

No transfer \& LapSum
& 3.1176 & 3.0668 & 3.1121 & 3.0910
& 2.4459 & 2.4642 & 2.4588 & 2.4628
\\

Transfer \& no LapSum
& 2.8252 & 2.8006 & 2.6998 & 3.0888
& 2.5213 & 2.5796 & 2.5224 & 2.4447
\\

Transfer \& LapSum
& 2.5520 & 2.5616 & 2.5169 & 2.5355
& 2.3978 & 2.3819 & 2.3810 & 2.3797
\\

\bottomrule
\end{tabular}
\end{table}

\subsection{Amortized Min-STP for Point Cloud Alignment}
\subsubsection{Dataset}
We used the ModelNet10 dataset \cite{wu20153d}, a curated subset of the Princeton ModelNet collection containing 10 object categories and 5000 CAD models, with 4078 shapes for training and 922 for testing. Following standard practice, we applied the \texttt{NormalizeScale} transform to center each shape and scale it to the unit sphere, ensuring consistent global geometry across the dataset. We then sampled 1024 points uniformly from the mesh surface using the \texttt{SamplePoints(1024)} transform, yielding point clouds $X \in \mathbb{R}^{1024 \times 3}$ used to train the PointNet autoencoder. These normalized and uniformly sampled point clouds form the input for computing the context vectors used in our ModelNet10 alignment experiments.

\begin{table}[H]
\centering
\caption{Statistics of the ModelNet10 dataset. Each object category contains CAD meshes converted to point clouds. Following standard practice, we sample 1024 points per shape using uniform surface sampling.}
\vspace{0.2cm}
\begin{tabular}{lccc}
\toprule
\textbf{Category} & \textbf{\# Train} & \textbf{\# Test} & \textbf{Total} \\
\midrule
Bathtub     & 106 & 50  & 156 \\
Bed         & 515 & 100 & 615 \\
Chair       & 889 & 100 & 989 \\
Desk        & 200 & 86  & 286 \\
Dresser     & 200 & 86  & 286 \\
Monitor     & 465 & 100 & 565 \\
Night Stand & 287 & 100 & 387 \\
Sofa        & 680 & 100 & 780 \\
Table       & 392 & 100 & 492 \\
Toilet      & 344 & 100 & 444 \\
\midrule
\textbf{Total} & 4,078 & 922 & 5,000 \\
\bottomrule
\end{tabular}
\end{table}

\subsubsection{Context Vectors}
To obtain a compact representation of each 3D shape in ModelNet10, we pretrain a PointNet-style autoencoder on point clouds \(X \in \mathbb{R}^{N \times 3}\). The encoder \(\phi\) applies a sequence of shared 1D convolutions and batch–normalization layers followed by global max pooling, producing a latent code \(c_X = \phi(X) \in \mathbb{R}^{512}\). The decoder \(\psi\) maps \(c_X\) back to a reconstructed point set \(\hat{X} \in \mathbb{R}^{N \times 3}\). We train the autoencoder end-to-end using the symmetric Chamfer distance between \(X\) and \(\hat{X}\) as the reconstruction loss, which encourages the latent code to capture the global geometry and coarse part structure of each object while remaining invariant to point permutations. After training, we discard the decoder and keep the encoder \(\phi\). For every point cloud \(X\) used in our ModelNet10 alignment experiments, we compute its latent embedding \(c_X\) and use this vector as a \emph{context vector} to condition the alignment model, allowing the learned transport plans to adapt to the specific shape instance while sharing parameters across the dataset.

\begin{table}[h!]
\centering
\caption{Training hyperparameters used for the PointNet-style autoencoder}
\vspace{0.2cm}
\begin{tabular}{l c}
\toprule
\textbf{Hyperparameter} & \textbf{Value} \\
\midrule
Latent dimension $d_z$ & 512 \\
Batch size & 8 (ModelNet10) \\
Optimizer & Adam \\
Learning rate & $5 \times 10^{-4}$ \\
Training epochs & 1000 \\
Point sampling & SamplePoints(1024) \\
\bottomrule
\end{tabular}
\end{table}

\vspace{-0.3in}
\subsection{Min-STP based Flow}
\label{subsec:flow}

For the flow matching experiments, we first train a slicer for each dataset using a SetTransformer \cite{lee2002memory} architecture with two self-attention blocks (SAB) in both the encoder and decoder. During training, we treat each point cloud as an individual target distribution and use an isotropic Gaussian as the source distribution. Similar to Section 4.2, we obtain a context vector from the latent space of a pretrained autoencoder and concatenate it with all source and target points. 

Once the slicer is trained, we integrate it into the OT-MeanFlow framework to obtain a transport plan for each mini-batch. We follow the training setup and baselines provided in \cite{akbari2026transportbasedmeanflows}, and all experiments use single-step generation (NFE = 1). Additional results on the ModelNet10 \cite{wu20153d} dataset are shown in figure \ref{fig:mn_plots}. Similar to ShapeNet-Chairs, $\mSTP$ can generate point clouds that closely mimic the ground truth shapes.

\begin{figure}[H]
    \centering
    \includegraphics[width=0.7\linewidth]{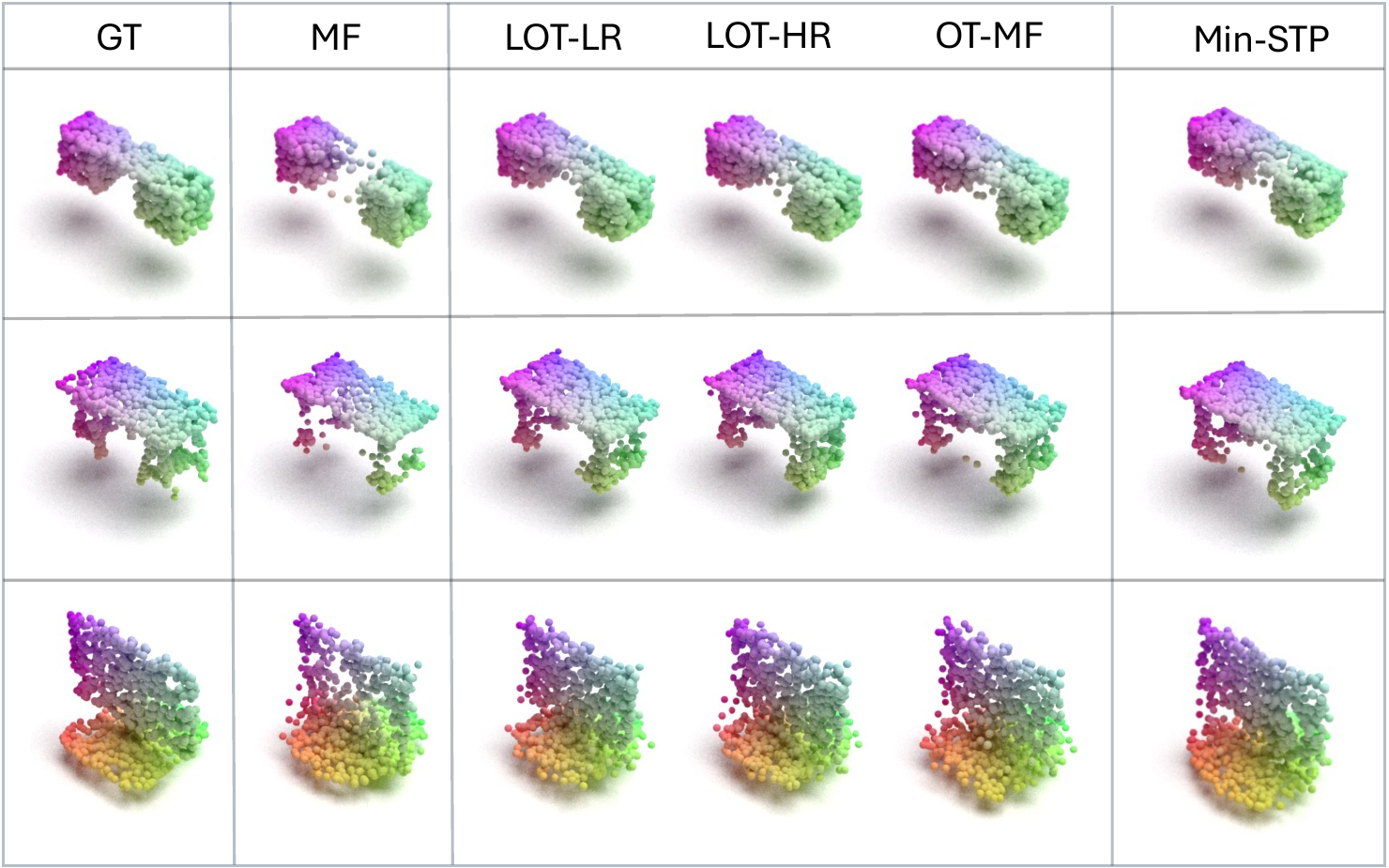}
    \caption{Single step sample generation on ModelNet10}

    \label{fig:mn_plots}
\end{figure}

\begin{figure}[H]
    \centering
    \includegraphics[width=0.7\linewidth]{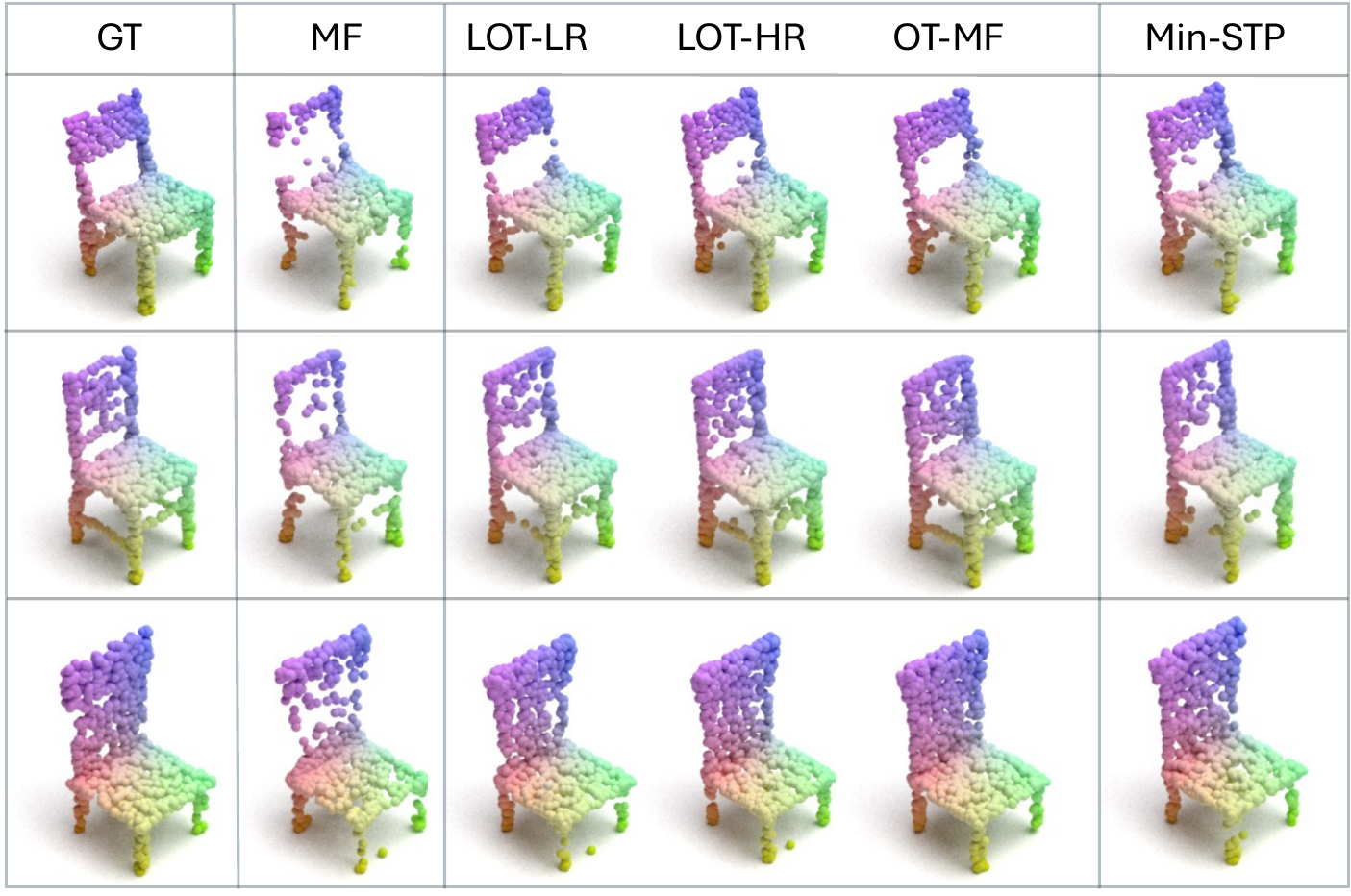}

    \caption{Single-step sample generation on ShapeNet Chairs.}
    \label{fig:sn_plots}
\end{figure}

\subsection{Additional Experiments on Unpaired Image-to-Image Translation}
Additional results for Man-to-Woman image translation is available in Figure \ref{fig:alae_plots_m2w}.

\begin{figure}[t]
    \centering
    \includegraphics[width=1.0\linewidth]
    {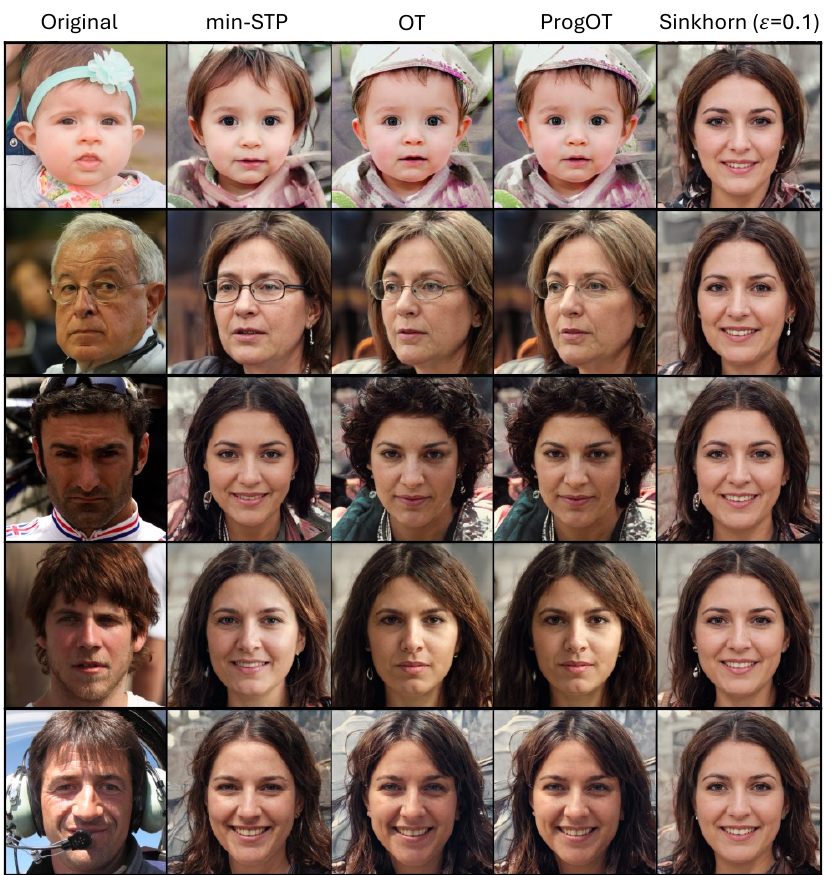}
    \vspace{-.2in}
    \caption{Man-to-Woman translation with different methods}
    \vspace{-.2in}
    \label{fig:alae_plots_m2w}
\end{figure}
\vspace{-0.1in}
\section{Algorithms}
\vspace{-0.1in}
In this section we present the key algorithms used in our method, including the mini-batch training procedure and the soft permutation operator. The mini-batch training procedure is detailed in Algorithm~\ref{alg:minstp-singlepair-gd}, with its workflow illustrated in Figure~\ref{fig:symmetric_train}. For soft permutations, we borrowed the official implementation \footnote{\url{https://github.com/gmum/LapSum}} of LapSum \cite{struski2025lapsum}. However, the code was only for Soft Top-$k$ Scores (see Algorithm \ref{alg:lapsum}), here we show how we are constructing a Soft Permutation Matrix from Soft Top-$k$ Scores.
Given a score vector $r \in \mathbb{R}^n$, we generate a soft permutation
matrix by repeatedly applying the soft top-$k$ operator and then differencing
the resulting cumulative masks. The procedure is:

\begin{algorithm}
\caption{Mini-batch training with slicer $f_\theta:\mathbb{R}^d\!\to\!\mathbb{R}$}
\label{alg:minstp-singlepair-gd}
\begin{algorithmic}
\State {\bfseries Input:} Datasets $X=\{x_i\}_{i=1}^N$, $Y=\{y_j\}_{j=1}^M$; batch size $B$; mini-batches per epoch $k$; number of epochs $E$; $\alpha$ in LapSum; learning rate $\eta$.
\State {\bfseries Output:} Optimal parameters $\theta$
\State Precompute $C_{ij}\gets c(x_i,y_j)$.
\State Set $a\gets \tfrac{1}{N}\mathbf{1}_N$, $b\gets \tfrac{1}{M}\mathbf{1}_M$.
\For{$e=1$ {\bfseries to} $E$}
  
  \State Set the batch gradient $G\gets 0$.
  
  \For{$i=1$ {\bfseries to} $k$}
    \State Draw random permutations $\sigma$ of $[N]$ and $\tau$ of $[M]$.
    
    \State Select first $B$ indices $I\gets \sigma[1:B]$, $J\gets \tau[1:B]$.
    
    \State Set the pair of batches $X_B\gets X[I]$, $Y_B\gets Y[J]$.
    
    \State $u_x\gets f_\theta(X_B)$, $u_y\gets f_\theta(Y_B)$.
    
    \State $\tilde{P}_x\gets \mathrm{LapSum}(u_x;\alpha)$ (\ref{alg:lapsum}), $P_y\gets \mathrm{HardSort}(u_y)$, $\gamma_1\gets \tilde{P}_x^\top P_y$. 
    
    \State $P_x\gets \mathrm{HardSort}(u_x)$, $\tilde{P}_y\gets \mathrm{LapSum}(u_y;\alpha)$, $\gamma_2\gets P_x^\top \tilde{P}_y$. 
    
    \State $\tilde{\gamma}_B\gets \tfrac{1}{2}(\gamma_1+\gamma_2)$ 
    
    \State $\mathcal{L}_i\gets \frac{1}{B}\langle \tilde{\gamma}_B, C[I,J]\rangle$. 
    
    \State $G\gets G + \nabla_\theta \mathcal{L}_i$. 
  \EndFor
  \State $\theta\gets \theta - \eta \cdot \frac{1}{k}\, G$. 
\EndFor
\State \textbf{Return:} $\theta$
\end{algorithmic}
\end{algorithm}

\begin{enumerate}
    \item \textbf{Compute soft top-$k$ masks for all $k$.}  
    For each $k \in \{1,\ldots,n-1\}$, apply the differentiable soft top-$k$
    operator to a copy of $r$:
    \[
        \text{softk}_k = \mathrm{SoftTopK}(r, k, \alpha),
    \]
    producing a matrix $\text{softk} \in \mathbb{R}^{(n-1)\times n}$ where
    each row is a softened indicator of the top-$k$ elements.

    \item \textbf{Pad with boundary rows.}  
    Add a row of zeros at the top and a row of ones at the bottom:
    \[
        R = 
        \begin{bmatrix}
            \mathbf{0}^\top \\[0.2em]
            \text{softk} \\[0.2em]
            \mathbf{1}^\top
        \end{bmatrix}
        \in \mathbb{R}^{(n+1)\times n}.
    \]

    \item \textbf{Difference consecutive rows to obtain a soft permutation.}  
    The soft permutation matrix is obtained by discrete differentiation:
    \[
        P_{\ell i} \;=\; R_{\ell,i} - R_{\ell-1,i},
        \qquad \ell = 1,\dots,n.
    \]
    Each row of $P$ corresponds to the (soft) probability that element $i$
    occupies rank $\ell$ in the sorted order of $r$.
\end{enumerate}

The resulting $P \in \mathbb{R}^{n\times n}$ is a row-stochastic, differentiable relaxation of a permutation matrix, consistent with the ordering encoded by the soft top-$k$ operator.

\begin{algorithm}[H]
\caption{LapSum ($\mathrm{SoftTopK}$)}
\label{alg:lapsum}
\begin{algorithmic}

\State {\bfseries Input:} Sequence $(s_i)_{i=0}^{n-1}$, parameter $k\in(0,n)$
\State Sort $s$ in decreasing order into $r=(r_i)_{i=0}^{n-1}$ 
\State Set $r_{-1}\gets +\infty$, $r_n\gets -\infty$ 
\State Initialize $a_{n-1}\gets 0$, $b_0\gets 0$, $c_0\gets 0$ 
\For{$j= n-1$ {\bfseries to} $0$}
  \State $a_{j-1}\gets (1+a_j)\,\exp(r_j-r_{j-1})$ 
\EndFor
\For{$j= 0$ {\bfseries to} $n-1$}
  \State $b_{j+1}\gets (1+b_j)\,\exp(r_{j+1}-r_j)$ 
  \State $c_{j+1}\gets 1+c_j$ 
\EndFor
\State Set $w_{-1}\gets 0$, $w_n\gets n$ 
\For{$j= 0$ {\bfseries to} $n-1$}
  \State $w_j\gets \tfrac12 a_j\,\exp(r_{j+1}-r_j)\;-\;\tfrac12 b_{j+1}\;+\;c_{j+1}$. 
\EndFor
\State Find $j\in\{0,\ldots,n\}$ such that $k\in[w_{j-1},w_j]$. 
\If{$j=0$}
  \State $b\gets r_0-\log 2-\log k+\log a_0$. 

\ElsIf{$0<j<n$}
  \State $b\gets r_{j+1}+\log a_j-\log\!\Bigl(k-c_{j+1}+\sqrt{\,\lvert k-c_{j+1}\rvert^2+a_j b_{j+1}\,e^{\,r_{j+1}-r_j}}\Bigr)$. 

\Else
  \State $b\gets r_{n-1}+\log 2+\log(c_n-k)-\log b_n$. 
\EndIf
\For{$i=0$ {\bfseries to} $n-1$}
  \State $p_i\gets \mathrm{Lap}(s_i-b)$. 
\EndFor
\State \textbf{Return:} $p=(p_i)_{i=0}^{n-1}$.
    
\end{algorithmic}
\end{algorithm}

\begin{figure*}
    \centering
    \section{Additional Experiments on Amortized Min-STP}
    \includegraphics[width=\linewidth]{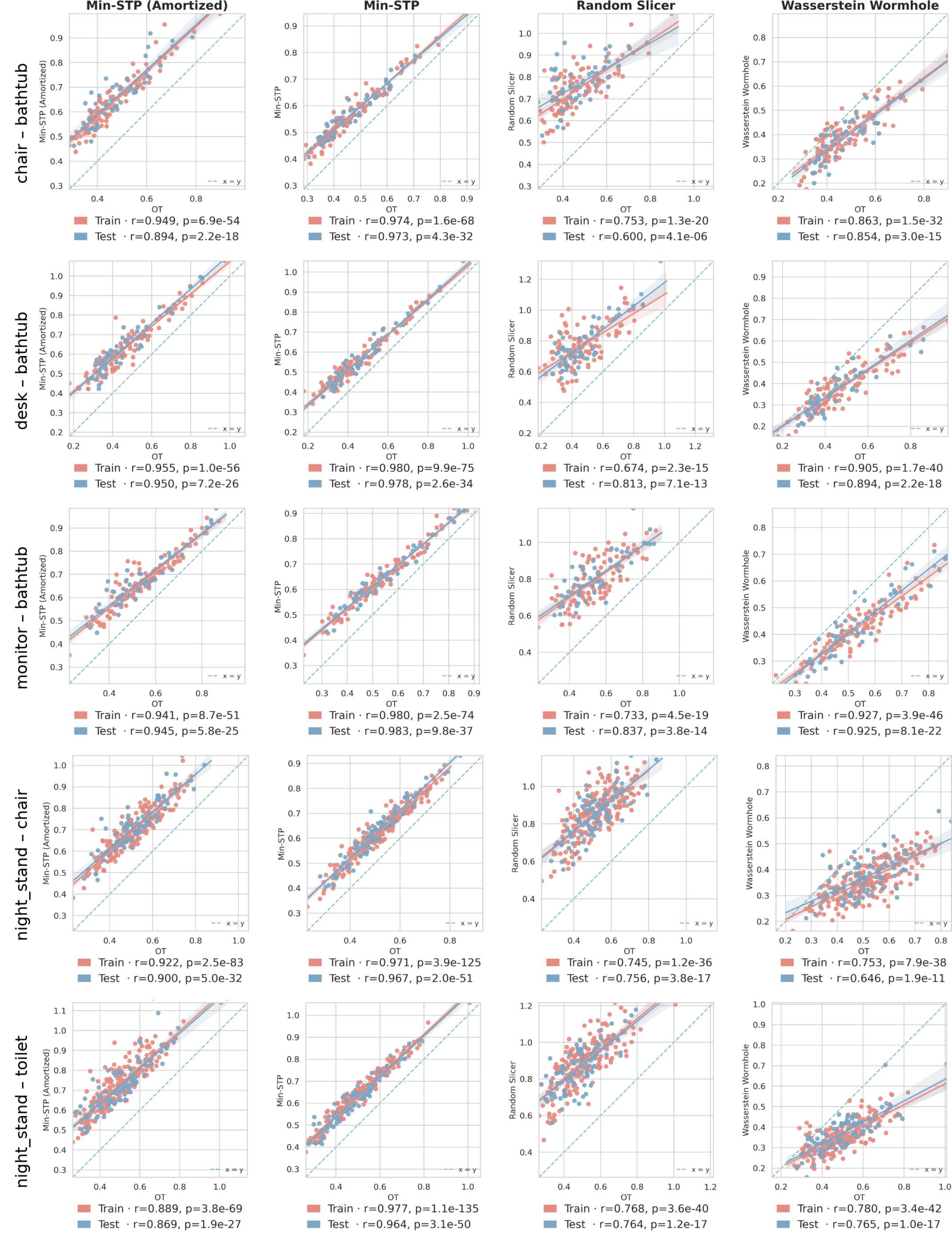}
    \caption{Additional experiments on amortized $\mSTP$.}
    \label{fig:morepairs}
\end{figure*}


\end{document}